\newcommand{\real}{\mathbb{R}}
\theoremstyle{plain}
\newtheorem{proposition}{Proposition}
\theoremstyle{remark}
\newtheorem{remark}{Remark}
\newtheorem{example}{Example}
\newtheorem{definition}{Definition}
\begin{document}

\title{\sc Attraction-Repulsion clustering with applications to fairness}

\author[1]{Eustasio del Barrio}
\author[2]{Hristo Inouzhe}
\author[3]{Jean-Michel Loubes}

\affil[1]{\it Departamento de Estadística e Investigación Operativa and IMUVA,
Universidad de Valladolid, Spain.}
\affil[2]{\it Basque Center for Applied Mathematics (BCAM), Spain.}
\affil[3]{\it Université de Toulouse, Institut de Mathématiques de Toulouse, France.}

\date{}

\date{}
\maketitle%

\maketitle

\begin{abstract}
We consider the problem of \emph{diversity enhancing clustering}, i.e, developing clustering methods which produce clusters that favour diversity with respect to a set of protected attributes such as race, sex, age, etc. In the context of {\it fair clustering}, diversity plays a major role when fairness is understood as demographic parity. To promote diversity, we introduce perturbations to the distance in the unprotected attributes that account for protected attributes in a way that resembles attraction-repulsion of charged particles in Physics. These perturbations are defined through dissimilarities with a tractable interpretation.  Cluster analysis based on attraction-repulsion dissimilarities penalizes homogeneity of the clusters with respect to the protected attributes and leads to an improvement in diversity. An advantage of our approach, which falls into a pre-processing set-up, is its compatibility with a wide variety of clustering methods and whit non-Euclidean data. We illustrate the use of our procedures with both synthetic and real data and provide discussion about the relation between diversity, fairness, and cluster structure. Our procedures are implemented in an R package freely available at \href{https://github.com/HristoInouzhe/AttractionRepulsionClustering}{https://github.com/HristoInouzhe/AttractionRepulsionClustering}.
\end{abstract}

\section{Introduction}
Artificial Intelligence (AI) in its different branches is pervasive in today's world. It is almost impossible to ignore its effects on individuals, groups, and even entire populations. Although the AI revolution has brought a variety of major positive breakthroughs, as it is common in any revolution, time has evidenced some of the limitations and harms involved in heavily relying on algorithms and data. An account of the history of AI development and some of the causes of its limitations are given in \cite{Cristianini2019}. The author argues that the negative effects of AI that we are facing are due to an `ethical debt', a kind of externality of our technological infrastructure. This `ethical debt' is formed by, among others, the cost of fixing the lack of fairness and transparency of machine decisions, the cost of regulating AI and the cost of making services secure against surveillance and manipulation.

An entire research field, known as {\it fair learning}, is currently dedicated to `repaying' part of the `ethical debt' by studying causes and mitigating or eliminating effects of bias and unfairness in machine learning. It is widely agreed that supervised and unsupervised classification procedures are increasingly more influential in people's life since they are used in credit scoring, article recommendation, risk assessment, spam filtering or sentencing recommendations in courts of law, among others. Hence, controlling the outcome of such procedures, in particular ensuring that some variables which should not be considered due to moral or legal issues are not playing a role in the classification of the observations, has become an important field of research. The main concern is to detect whether decision rules, learnt from variables X, are biased with respect to a subcategory of the population driven by some variables called protected or sensitive. Such variables induce a bias in the observations and are correlated to other observations. Hence avoiding this effect cannot be achieved by the naive solution of ignoring such protected attributes. Indeed, if the data at hand reflects a real-world bias, machine learning algorithms can pick on this behaviour and emulate it. An extensive overview on types of biases and discrimination and on fair machine learning strategies can be found in \cite{mehrabi2019}. Additional overviews of such legal issues and mathematical solutions to address them can be found in  \cite{2016arXiv161008077L, chouldechova2017fair, 2018arXiv181001729B} or \cite{2018arXiv180204422F}.

Any fair machine learning procedure starts with a particular definition of fairness, i.e., with a statement of an abstract mathematical proxy for some fairness notion that must be considered. A non-exhaustive list of definitions can be found in \cite{mehrabi2019}. It is worth mentioning that some popular definitions of mathematical fairness are incompatible between each other, and that different situations may require different definitions of fairness. Once a definition for fairness is provided, a way of imposing or promoting it can be established, for example,  by transforming the data to avoid correlation between the set of sensitive attributes and the rest of the data \cite{cosa_3, fair_paula} or by modifying the objective functions of the algorithms in a way that favours fairness \cite{cosa_1,fair_gpc}. Notice that this scheme has sources of bias, one is the subjective choice of a definition of fairness, and another one is the choice of a mathematical proxy for that definition. Therefore, a problem specific approach may be a good way of avoiding pitfalls.


The problem we will be interested in is intimately related to the problem of fair clustering. Classical (or standard) cluster analysis or clustering is the task of dividing a set of objects in such a way that elements in the same group or cluster are more similar, according to some dissimilarity measure, than elements in different groups. Such methods have been extensively investigated in the literature. We refer to \cite{HennigMeilaMurtaghRocci} and references therein for an in-depth overview. However, classical clustering is insensitive to notions of mathematical fairness. Suppose we observe data that includes information about attributes that we know, or suspect are biased with respect to the protected class. If the biased variables are dominant enough, a standard clustering on the unprotected data will result in some biased clusters, that is, clusters with possibly large differences in distribution of the protected attributes. Therefore, if we classify new instances based on this partition, we will incur in biased decisions. In essence, this will be a violation of the notion of demographic parity (\cite{cosa_3}), a popular proxy for fairness, adapted to the clustering setting.

A natural mathematical definition for fairness in the setting of 
clustering, and the one that
we adopt, is based on the notion of demographic parity, namely, on looking 
for balanced clusters, as introduced in \cite{fair_k_means}. 
Quoting from \cite{Abbasi2021} `What does it mean for a clustering to 
be fair? One popular approach seeks to ensure that each cluster contains 
groups in (roughly) the same proportion in which they exist in the 
population. The normative principle at play is balance: any cluster 
might act as a representative of the data, and thus should reflect its 
diversity'. Hence, a fair clustering would be the situation in which, 
in the partition of the data, the proportions of the protected 
attributes are the same in each cluster (hence, the same as the 
proportions in the whole dataset). We might argue that balance 
(demographic parity) is not necessarily equivalent to fairness, and 
proper consideration must be given to the problem of when it is 
appropriate and useful to identify fairness with balance. Nonetheless, 
diversity in itself may be a desirable property. Even more, diversity 
may be easier to formalize and impose, since it is less charged with 
ethical connotations. Hence, we will focus on clustering procedures that 
favour diversity in the protected attributes, in what we call diversity 
enhancing clustering.

In our setting, diversity enhancing clustering is a relaxation of diversity preserving clustering (which is usually considered as a fair clustering method). The latter can be described as follows. Let us have a dataset $\mathcal{DS}=\{(X_1,S_1),\dots,(X_n,S_n)\}$, with $S_i\in S$, and for simplicity assume that the protected attributes $S$ are discrete. Notice that the unprotected attributes $X_i$ and the protected ones $S_i$ should be correlated for diversity preserving clustering to be relevant. A \textit{diversity preserving clustering} or a \textit{diversity preserving partition}, $\mathcal{C}_f=\{\mathcal{C}_i\}_{i=1}^k$, where $\mathcal{C}_i\subset\mathcal{DS}$, fulfils that
\begin{equation}\label{fair_constraints}
\frac{|\{(x,s)\in \mathcal{C}_i: s=j\}|}{|\mathcal{C}_i|}=\frac{|\{(x,s)\in \mathcal{DS}: s=j\}|}{|\mathcal{DS}|} \quad \text{for every} \quad j\in S \quad \text{and}\quad i=1,\dots, k. 
\end{equation}
 In this case the proportion of individuals in any cluster for any protected class is the same as the respective proportion in the total data, hence we could say that proportions are independent of the cluster. This means that any decision taken with respect to a particular cluster will affect individuals in the same proportion as if it were taken for the entire population. Therefore, disparate impact for some sub-population would be avoided. Diversity enhancing clustering seeks to improve diversity with respect to a classical clustering procedure, without imposing the diversity preserving condition (\ref{fair_constraints}). Here, improving diversity means that clusters have more diverse composition with respect to the protected attributes compared with the clusters obtained by a classical clustering procedure.

Our approach to diversity enhancing clustering falls into the category of pre-processing. This means that we want to transform the unprotected data to obtain more diverse partitions. Since we are in an unsupervised setting, which means that there is no ground truth, looking for an adequate transformation of the data becomes very challenging. To address this problem, we propose a transformation based on a heuristic that incorporates some perturbations in the original dissimilarities of the data to favour heterogeneity with respect to the protected variable. For this, we present a new methodology inspired by electromagnetism and based on {\it attraction-repulsion dissimilarities}. These new dissimilarities aim at increasing the separation between points with the same values of the protected class and/or decreasing the separation between points with different values of the protected class.  Hence they favour the formation of clusters that are more heterogeneous, leading to a possible gain in diversity. The lack of ground truth also leads us to consider a trade-off between diversity and some other quality measures of the clusterings on the data at hand, in the spirit of Empirical Risk Minimization.

The main contributions of our work are the following:
\begin{itemize}
	\item The introduction of novel dissimilarities that perturb the distance in the unprotected attributes considering the protected ones.
	\item A new methodology based on a heuristic that promotes diversity and is applicable to almost any clustering algorithm with no or minimal modifications to the original objective functions. 
	\item A call for diversity and, more broadly, fairness  in clustering to be considered jointly with other relevant aspects that make a partition sensible.   
	\item A methodology that is easily adaptable to non-Euclidean data and clusters with non-convex boundaries.
	\item An R package to perform Attraction-repulsion clustering freely available at \\
	\href{https://github.com/HristoInouzhe/AttractionRepulsionClustering}{https://github.com/HristoInouzhe/AttractionRepulsionClustering} 
\end{itemize}

The structure of this work is the following. Section \ref{section_related_work} presents a brief overview of related works. In Section~\ref{section_mds} we introduce {\it attraction-repulsion} dissimilarities.  Clustering methods are developed in Section~\ref{section_chcl} while Section~\ref{sec_kkmeans} is devoted to their extension to non-Euclidean spaces through a kernel transformation. We study the tuning of our methods in Section~\ref{section_parameters}.
Experiments for our technique are given in Section \ref{section_applications}. We provide a discussion on synthetic datasets in \ref{sub_section_synthetic}. In Section \ref{section_comparison}, we provide comparison between our methods and the ones proposed in \cite{fair_k_means}, on synthetic and real data, showing the different behaviour of both procedures. In Section \ref{CRDC} we give a full example of attraction-repulsion clustering on a real dataset. Section \ref{section_conclusions} is devoted to some final remarks and some future work proposals.

\section{Related Work}\label{section_related_work}

Related works belong to the field of fair clustering. One of the first and most popular approaches to fair clustering is to impose constraints on the objective functions of classical clustering procedures, and corresponds to what we have called diversity preserving clustering. In \cite{fair_k_means} constrained $k$-center and $k$-median clustering was introduced. The authors proposed a model where data are partitioned into two groups, codified by red and blue, where disparate impact is avoided by maintaining \textit{balance} between the proportion of points in the two categories. Their goal is to achieve a partition of the data where the respective objective function is minimized while balance in each cluster is maintained over a given threshold.  Their approach initially designed for data with two different protected classes (two-state) has led to various extensions (multi-state). Contributions to the $k$-center problem can be found in \cite{rosner18, bercea2018}, to the $k$-median problem in~\cite{backurs2019} and to the $k$-means problem in~\cite{bercea2018,bera2019, schmidt2018}. These works discuss a broader range  of clustering problems while imposing constraints related to some minimum and/or maximum value for the proportions of the protected classes in each cluster. A multi-state deep learning fair clustering procedure was developed in \cite{Wang2019}. A multi-state non-disjoint extension was presented in \cite{Huang2019}. In \cite{Mazud2019} a variational framework of fair clustering compatible with several popular clustering objective functions was proposed.

Somewhat different approaches, that do not fall into diversity preserving clustering, can be found in the following works. In \cite{Ahmadian2019} the goal is to cluster the points to minimize the $k$-center cost but with the additional constraint
that no colour is over-represented in any cluster. \cite{Chen2019} cluster $n$
points with $k$ centers, with fairness meaning that any $n/k$ points are entitled to form their own cluster if there is a center that is closer in distance for all $n/k$ points. In \cite{Abbasi2021, Ghadiri2021} fair center based clustering is understood to be fair if the centers `represent' different groups equally well and they provide algorithms for this purpose.

Fair clustering is a vibrant field with a growing number of new contributions, so we do not claim this to be an exhaustive overview of relevant works. For more information we refer to the previously mentioned works and sources therein.

\section{Charged clustering via multidimensional scaling} \label{section_mds}
Clustering relies on the choice of dissimilarities that control the part of information conveyed by the data that will be used to gather points into the same cluster, expressing how such points share some similar features. To obtain a diversity enhancing clustering we aim at obtaining clusters which are not governed by the protected variables but are rather mixed with respect to these variables. For this, we introduce interpretable dissimilarities in the space $(X,S)\in \real^{d}\times\real^p$ that separate points with the same value of the protected classes. Using an analogy with electromagnetism, the labels $S$ play the role of an electric charge and similar charges tend to have a repulsive effect while dissimilar charges tend to attract themselves. We stress that $S\in\real^p$, hence it can take non-integer values, and all our procedures are well suited for such values. However, our examples concentrate on integer values for interpretability and simplicity.

Our guidelines for choosing these dissimilarities are that we would like the dissimilarities to 
\begin{itemize}
\item[\textit{i)}] induce diversity into subsequent clustering
techniques (decrease, or, at best, eliminate, identifiability between clusters and protected attributes), 
\item[\textit{ii)}] keep the essential geometry of the data (with respect to non-protected attributes) and
\item[\textit{iii)}] be easy to use and interpret. 
\end{itemize}

To fulfil $i)$, we introduce new dissimilarities that consider the geometry of the unprotected attributes  while penalizing homogeneity with respect to the protected ones. For $ii)$, we will combine these dissimilarities with multidimensional scaling, to retain most of the original geometry. With respect to $iii)$, we refer to the discussion below. Hence, we propose the following dissimilarities.
\begin{definition}[Attraction-Repulsion Dissimilarities]
\begin{equation} \label{k_add}
\delta_1\left(\left(X_1,S_1\right),\left(X_2,S_2\right)\right) = 1'U1 + S_1'VS_2 + \|X_1-X_2\|^2
\end{equation}
with $U, V$ symmetric matrices in $\real^{p\times p}$;
\begin{equation}\label{k_mult}
\delta_2\left(\left(X_1,S_1\right),\left(X_2,S_2\right)\right) = \left(1 + ue^{-v\|S_1-S_2\|^2}\right)\|X_1-X_2\|^2
\end{equation}
with $u,v\geq 0$;
\begin{equation}\label{k_norm}
\delta_3\left(\left(X_1,S_1\right),\left(X_2,S_2\right)\right) = \|X_1-X_2\|^2 - u\|S_1-S_2\|^2
\end{equation}
with $u\geq 0$. \\
Let $0\leq u\leq 1$ and $v,w\geq 0$, 
\begin{equation}\label{k_local}
\delta_4\left(\left(X_1,S_1\right),\left(X_2,S_2\right)\right) = \left(1 + \mathrm{sign}(S_1'VS_2)u\left(1-e^{-v(S_1'VS_2)^2}\right)e^{-w\|X_1-X_2\|}\right)\|X_1-X_2\|.
\end{equation}
\end{definition}

\begin{remark}
$\delta_1((X,S),(X,S))\neq 0$ and therefore it is not strictly a dissimilarity in usual sense, see, for example, Chapter 3 in \cite{Everittetal}. Yet, for all practical purposes discussed in this work this does not affect the proposed procedures.
\end{remark}

\begin{remark}
Both $\delta_1$ and $\delta_3$ could produce negative values. For this reason, we add a step that ensures positiveness in the algorithms at the end of this and the next sections. 
\end{remark}

\begin{remark}
All our dissimilarities can handle categorical variables as protected attributes $S$, once they are appropriately codified in numerical values. Furthermore, $\delta_2$ and $\delta_3$ are applicable to more general situations where the protected class is not discrete, as, for example, age or income. This is due to the treatment of $S$ as a quantitative variable in these dissimilarities.   
\end{remark}

To the best of our knowledge this is the first time that such dissimilarities have been proposed and used in the context of clustering (in \cite{repuls_clust} repulsion was introduced modifying the objective function, considering only distances between points, to maintain centers of clusters separated). 
Dissimilarities (\ref{k_add}) to (\ref{k_local}) are natural in the context of diversity enhancing clustering because they penalize the Euclidean distance considering the protected class of the points involved. Hence, some gains in diversity could be obtained.

The dissimilarities we consider are interpretable, providing the practitioner with the ability to understand and control the degree of perturbation introduced. Dissimilarity (\ref{k_add}) is an additive perturbation of the squared Euclidean distance where the intensity of the penalization is controlled by matrices $U$ and $V$, with $V$ controlling the interactions between elements of the same and of different classes $S$. Dissimilarity (\ref{k_norm}) presents another additive perturbation but the penalization is proportional to the difference between the classes $S_1$ and $S_2$, and the intensity is controlled by the parameter $u$.

Dissimilarity (\ref{k_mult}) is a multiplicative perturbation of the squared Euclidean distance. With $u$ we control the amount of maximum perturbation achievable, while with $v$ we modulate how fast we diverge from this maximum perturbation when $S_1$ is different to $S_2$.

Dissimilarity (\ref{k_local}) is also a multiplicative perturbation of the Euclidean distance. However, it has a very different behaviour with respect to (\ref{k_add})-(\ref{k_norm}). It is local, i.e., it affects less points that are further apart. Through $w$ we control locality. With bigger $w$ the perturbation is meaningful only for points that are closer together. With matrix $V$ we control interactions between classes as in (\ref{k_add}), while with $u$ we control the amount of maximum perturbation as in (\ref{k_mult}). Again, $v$ is a parameter controlling how fast we diverge from the maximum perturbation.

Next, we present a simple example for the case of a single binary protected attribute, coded as $-1$ or $1$. This is an archetypal situation in which there is a population with an (often disadvantaged) minority, that we code as $S=-1$, and the new clustering has to be independent (or not too dependent) on $S$.
\begin{example}\label{ex_kern}
Let us take $S_1,S_2\in\{-1,1\}$. For dissimilarity (\ref{k_add}) we fix $U=V=c\geq 0$, therefore 
\[\delta_1\left(\left(X_1,S_1\right),\left(X_2,S_2\right)\right) = c(1 + S_1S_2) + \|X_1-X_2\|^2.\]
If $S_1\neq S_2$, we have the usual squared distance $\|X_1-X_2\|^2$, while when $S_1=S_2$ we have $2c+\|X_1-X_2\|^2$, effectively we have introduced a repulsion between elements with the same class. For dissimilarity (\ref{k_mult}) let us fix $u=0.1$ and $v = 100$,
\[\delta_2\left(\left(X_1,S_1\right),\left(X_2,S_2\right)\right) = \left(1 + 0.1e^{-100\|S_1-S_2\|^2}\right)\|X_1-X_2\|^2.\]
When $S_1\neq S_2$, $\delta_2\left(\left(X_1,S_1\right),\left(X_2,S_2\right)\right) \approx\|X_1-X_2\|^2$, while when $S_1=S_2$, $\delta_2\left(\left(X_1,S_1\right),\left(X_2,S_2\right)\right) \approx 1.1\|X_1-X_2\|^2$, again introducing a repulsion between elements of the same class. For dissimilarity (\ref{k_norm}), when $S_1 = S_2$, $\delta_3\left(\left(X_1,S_1\right),\left(X_2,S_2\right)\right) =\|X_1-X_2\|^2$ and when $S_1\neq S_2$ we get $\delta_3\left(\left(X_1,S_1\right),\left(X_2,S_2\right)\right) =\|X_1-X_2\|^2-2u$, therefore we have introduced an attraction between different members of the sensitive class. When using dissimilarity (\ref{k_local}), fixing  $V = c > 0$, $u = 0.1$, $v = 100$, $w = 1$, we get
\[\delta_4\left(\left(X_1,S_1\right),\left(X_2,S_2\right)\right) = \left(1 + 0.1\mathrm{sign}(cS_1'S_2)\left(1-e^{-100(cS_1'S_2)^2}\right)e^{-\|X_1-X_2\|}\right)\|X_1-X_2\|.\]
If $S_1=S_2$, $\delta_4\left(\left(X_1,S_1\right),\left(X_2,S_2\right)\right)\approx\left(1+0.1e^{-\|X_1-X_2\|}\right)\|X_1-X_2\|$, therefore we have a repulsion. If $S_1\neq S_2$, $\delta_4\left(\left(X_1,S_1\right),\left(X_2,S_2\right)\right)\approx\left(1-0.1e^{-\|X_1-X_2\|}\right)\|X_1-X_2\|$, which can be seen as an attraction.\hfill$\Box$
\end{example}

Our proposals are flexible thanks to the freedom in choosing the class labels. If we encode $S$ as $\{-1,1\}$, as in the previous example, we can only produce attraction between different classes and repulsion between the same classes (or exactly the opposite if $V<0$) in (\ref{k_add}) and (\ref{k_local}). On the other hand, if we encode $S$ as $\{(1,0),(0,1)\}$, we have a wider range of possible interactions induced by $V$. For example, taking $V = ((1,-1)'|(-1,0)')$ we produce attraction between different classes, no interaction between elements labelled as $(0,1)$ and repulsion between elements labelled as $(1,0)$. If we had three classes we could use $\{(1,0,0),(0,1,0),(0,0,1)\}$ as labels and induce a tailor-made interaction between the different elements via a $3\times 3$ matrix $V$. For example, $V=((0,-1,-1)'|(-1,0,-1)'|(-1,-1,0)')$ provides attraction between different classes and no interaction between elements of the same class. Extensions to more than three classes are straightforward. More details on parameter and class codification selection will be given in Sections \ref{section_parameters} and \ref{section_applications}.

These dissimilarities can be used directly in some agglomerative hierarchical clustering methods, as described in Section~\ref{section_chcl}. However, there is a way to extend our methodology to a broader set of clustering methods. This is done using our dissimilarities to produce some embedding of the data into a suitable Euclidean space which allows the use of optimization clustering techniques (in the sense described in Chapter 5 in~\cite{Everittetal}) on the embedded data. For example, the dissimilarities $\delta_l$ can be combined with a common optimization clustering technique, such as $k$-means,  via some embedding of the data. We note that our dissimilarities aim at increasing the separation of points with equal values in the protected attributes while respecting otherwise the geometry of the data. Using multidimensional scaling (MDS) we can embed the original points in the space $\real^{d'}$ with $d'\leq d$ and use any clustering technique on the embedded data. The approach of using MDS to embed dissimilarities in an Euclidean space and then perform clustering has been pursued in \cite{Hausdorf2003} and \cite{Oh2007}. Quoting \cite{multid_scaling}, multidimensional scaling `is the search for a low dimensional space, usually Euclidean, in which points in the space represent the objects, one point representing one object, and such that the distances between the points in the space, match, as well as possible, the original dissimilarities'. Thus, applied to dissimilarities $\delta_l$, MDS will lead to a representation of the original data that approximates the original geometry of the data in the unprotected attributes and, at the same time, favours clusters with diverse values in the protected attributes. We stress that we are using MDS as a tool for embedding our dissimilarity relations in a Euclidean space, not necessarily as a dimension reduction technique. However, it is worth mentioning that MDS embeddings produce some loss of the original dissimilarity structure. Therefore, we advise to use embeddings in a space with similar dimension as the original data to minimize this adverse effect. In fact, in our experiments, we use the same dimension as that of the original data's unprotected attributes.

Here is an outline of how to use the dissimilarities $\delta_l$ coupled with MDS for a sample $(X_1,S_1),\dots,\allowbreak(X_n,S_n)$.
\begin{description}
\item[Attraction-Repulsion MDS] For any $l \in \{1,2,3,4\}$
\vskip .1in
\begin{itemize}
\item \textit{Compute the dissimilarity matrix $[\Delta_{i,j}]=[\delta_l((X_i,S_i),(X_j,S_j))]$ with a particular choice of the free parameters.}
\item \textit{If $\mathrm{min} \Delta_{i,j} \leq 0$, transform the original dissimilarity to have positive entries: \\$\Delta_{i,j} = \Delta_{i,j} + |\mathrm{min}\Delta| + \epsilon$, where $\epsilon$ is small.}
\item \textit{For $\delta_1,\delta_2,\delta_3$: $\Delta_{i,j} = \sqrt{\Delta_{i,j}}$.}
\item \textit{Use MDS to transform $(X_1,S_1),\dots,(X_n,S_n)$ into $X'_1, \dots, X'_n\in \real^{d'}$, where $D_{i,j} = \|X'_i-X'_j\|$ is similar to $\Delta_{i,j}$.}
\item \textit{Apply a clustering procedure on the transformed data $X'_1, \dots, X'_n$.}
\end{itemize}
\end{description}
This procedure will be studied in Section \ref{section_applications} for some synthetic and real datasets.
\section{Charged hierarchical clustering}\label{section_chcl}
Agglomerative hierarchical clustering methods (bottom-top clustering) encompass many of the most widely used methods in unsupervised learning. 
Rather than a fixed number of clusters, these methods produce a hierarchy of clusterings starting from the bottom level, at which each sample point constitutes a group, 
to the top of the hierarchy, where all the sample points are grouped into a single unit. We refer to \cite{hier_clust} for an overview. The main idea is simple. At each level, the two groups with the lowest dissimilarity are merged to form a single group. The starting point is typically a matrix of dissimilarities between pairs of data points. Hence, the core of a particular agglomerative hierarchical clustering lies at the way in which dissimilarities between groups are measured.
Classical choices include single linkage, complete linkage, average linkage or McQuitt's method. Additionally, some other very popular methods are readily available for using dissimilarities, as, for example, PAM (Partitioning Around Medoids, also known as $k$-medoids) introduced in \cite{Kaufman1987} and DBSCAN (Density-Based Spatial Clustering of Applications with Noise) introduced in \cite{dbscan}.

When a full data matrix (rather than a dissimilarity matrix) is available it is possible to use a kind of agglomerative hierarchical clustering
in which every cluster has an associated prototype (a center or centroid) and dissimilarity between clusters is measured through dissimilarity between the prototypes.
A popular choice (see \cite{Everittetal}) is \textit{Ward's minimum variance clustering}: dissimilarities between clusters are measured through a weighted squared Euclidean distance between mean vectors within each cluster. More precisely, if clusters $i$ and $j$ have $n_i$ and $n_j$ elements and mean vectors $g_i$ and $g_j$ then 
Ward's dissimilarity between clusters $i$ and $j$ is
$$\delta_{W}(i,j)={\textstyle \frac{n_i n_j}{n_i+n_j}}\| g_i -g_j\|^2,$$
where $\| \cdot\|$ denotes the usual Euclidean norm. Other methods based on prototypes are the centroid method or Gower's median method (see \cite{hier_clust}). However, these last two methods may present some undesirable features (the related dendrograms may present \textit{reversals} that make the interpretation harder, see, e.g., \cite{Everittetal}) and Ward's method is the most frequently used within this prototype-based class of agglomerative hierarchical clustering methods.

Hence, in our approach to diversity enhancing clustering we will focus on Ward's method. Given two clusters $i$ and $j$ 
consisting of points $\{(X_{i'},S_{i'})\}_{i'=1}^{n_i}$ and $\{(Y_{j'},T_{j'})\}_{j'=1}^{n_j}$, respectively, we define the charged dissimilarity between them as

\begin{equation}\label{dis_ward}
\delta_{W,l}(i,j) = \frac {n_i n_j}{n_i+n_j}\delta_l((\bar{X}_i,\bar{S}_i),  (\bar{Y}_j,\bar{T}_j))
\end{equation}
where $\delta_l$, $l=1,\ldots,4$ is any of the point dissimilarities defined by (\ref{k_add}) to (\ref{k_local}) and the bar notation refers to the standard sample mean over the points in the respective cluster.

The practical implementation of agglomerative hierarchical methods depends on the availability of efficient methods for the computation of dissimilarities between merged clusters. This is the case of the family of Lance-Williams methods (see \cite{lance-wil}, \cite{hier_clust} or \cite{Everittetal}) for which a recursive formula allows to update the dissimilarities when clusters $i$ and $j$ are merged into cluster $i\cup j$ in terms of the dissimilarities of the initial clusters. This allows to implement the related methods using computer time of order $O(n^2\log n)$. We show next that a recursive formula like the Lance-Williams class holds for the dissimilarities $\delta_{W,l}$ and, consequently, the related agglomerative
hierarchical method can be efficiently implemented. The fact that we are dealing differently with genuine and protected attributes results in the need for some additional notation (and storage). Given clusters $i$ and $j$ 
consisting of points $\{(X_{i'},S_{i'})\}_{i'=1}^{n_i}$ and $\{(Y_{j'},T_{j'})\}_{j'=1}^{n_j}$, respectively, we denote

\begin{equation}\label{dis_x}
d_x(i,j) = \big\|\bar{X}_i-\bar{Y}_j\|.
\end{equation}

Note that $d_x(i,j)$ is simply the Euclidean distance between the means of the $X$-attributes in clusters $i$ and $j$. Similarly, we set

\begin{equation}\label{dis_s}
d_s(i,j) = \big\|\bar{S}_i-\bar{T}_j\|.
\end{equation}

\begin{proposition}\label{lma_recurs_form}
For $\delta_{W,l}$ as in (\ref{dis_ward}), $d_x(i,j)$ as in (\ref{dis_x}) and $d_s(i,j)$ as in (\ref{dis_s}) and assuming that clusters $i, j$ and $k$ have sizes $n_i,n_j$ and $n_k$, respectively, we have the following recursive formulas:
\begin{enumerate}
\item[i)] $\delta_{W,1}(i\cup j,k) = \frac{n_i + n_k}{n_i+n_j + n_k}\delta_{W,1}(i,k) + \frac{n_j + n_k}{n_i+n_j+n_k}\delta_{W,1}(j,k)-\frac{n_k}{n_i+n_j+n_k}d^2_{W,x}(i,j);$
\item[ii)] \begin{eqnarray*}
\lefteqn{\delta_{W,2}(i\cup j,k) = \Big(1+ue^{-v (\frac{n_i}{n_i+n_j}d^2_s(i,k)+\frac{n_j}{n_i+n_j}d^2_s(j,k)-\frac{n_in_j}{(n_i+n_j)^2}d^2_s(i,j))}\Big)}\hspace*{3cm}\\
&\times &\Big({\textstyle \frac{n_i + n_k}{n_i+n_j + n_k}}d^2_{W,x}(i,k)
+{\textstyle\frac{n_j + n_k}{n_i+n_j+n_k}}d^2_{W,x}(j,k)-{\textstyle\frac{n_k}{n_i+n_j+n_k}}d^2_{W,x}(i,j)\Big);\\
\end{eqnarray*}
\item[iii)] $\delta_{W,3}(i\cup j,k) = \frac{n_i+n_k}{n_i+n_j+n_k}\delta_{W,3}(i,k) + \frac{n_j+n_k}{n_i+n_j+n_k}\delta_{W,3}(j,k) - \frac{n_k}{n_i+n_j + n_k}\delta_{W,3}(i,j)$,
\end{enumerate}
where $d^2_{W,x}(i,j)=\frac{n_i n_j}{n_i+n_j}d_x^2(i,j)$.
\end{proposition}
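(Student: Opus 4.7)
The plan is to reduce each item to a single elementary centroid identity: if $g_i,g_j,g_k$ are points in a Hilbert space and $g_{i\cup j}:=(n_ig_i+n_jg_j)/(n_i+n_j)$, then expanding $\|g_{i\cup j}-g_k\|^2$ and rewriting the cross term via polarization gives
\[
\|g_{i\cup j}-g_k\|^2=\frac{n_i}{n_i+n_j}\|g_i-g_k\|^2+\frac{n_j}{n_i+n_j}\|g_j-g_k\|^2-\frac{n_in_j}{(n_i+n_j)^2}\|g_i-g_j\|^2.
\]
Applied in the $S$-space this is precisely the recursion for $d_s^2(i\cup j,k)$ that appears in the exponent of (ii). Multiplying it through in the $X$-space by the Ward weight $(n_i+n_j)n_k/(n_i+n_j+n_k)$ yields the classical Ward--Lance--Williams recursion
\[
d_{W,x}^2(i\cup j,k)=\frac{n_i+n_k}{n_i+n_j+n_k}d_{W,x}^2(i,k)+\frac{n_j+n_k}{n_i+n_j+n_k}d_{W,x}^2(j,k)-\frac{n_k}{n_i+n_j+n_k}d_{W,x}^2(i,j),
\]
and, verbatim with $X$ replaced by $S$, the analogous recursion for $d_{W,s}^2$. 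These are the only algebraic facts I would need.

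Given these, item (iii) is immediate: since $\delta_{W,3}(i,j)=d_{W,x}^2(i,j)-u\,d_{W,s}^2(i,j)$ is a linear combination of two quantities that obey the Ward recursion with identical coefficients, $\delta_{W,3}$ inherits the same recursion. For (ii) I would factor
\[
\delta_{W,2}(i\cup j,k)=\bigl(1+u\,e^{-v\,d_s^2(i\cup j,k)}\bigr)\,d_{W,x}^2(i\cup j,k),
\]
then apply the un-weighted centroid identity in the $S$-space to the exponent and the Ward recursion to $d_{W,x}^2$; re-assembling produces exactly the stated expression.

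Item (i) needs slightly more care because $\delta_1$ is not a squared distance but a sum of three pieces, $\mathbf{1}'U\mathbf{1}$, $S_1'VS_2$ and $\|X_1-X_2\|^2$. I would verify the recursion piece by piece. The $\|X\|^2$ piece contributes $d_{W,x}^2$ and is handled by Ward's recursion, which supplies exactly the residual $-\frac{n_k}{n_i+n_j+n_k}d_{W,x}^2(i,j)$ appearing in the final statement. The bilinear piece $S_1'VS_2$ is linear in each argument, so using $\bar S_{i\cup j}=(n_i\bar S_i+n_j\bar S_j)/(n_i+n_j)$ together with the arithmetic identity
\[
\frac{n_i+n_k}{n_i+n_j+n_k}\cdot\frac{n_in_k}{n_i+n_k}+\frac{n_j+n_k}{n_i+n_j+n_k}\cdot\frac{n_jn_k}{n_j+n_k}=\frac{(n_i+n_j)n_k}{n_i+n_j+n_k}
\]
produces a clean convex combination with no correction; the same identity absorbs the constant $\mathbf{1}'U\mathbf{1}$ contribution.

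The only real obstacle is keeping straight which terms inherit a correction and which do not: a \emph{squared distance} between centroids is not convex in $(i,j)$ and so the centroid identity leaves a residual $-\frac{n_k}{n_i+n_j+n_k}d_{W,x}^2(i,j)$, whereas a \emph{linear or bilinear} function of centroids is simply a convex combination and leaves no residual. This is also why the $-\frac{n_k}{n_i+n_j+n_k}d_{W,x}^2(i,j)$ term in (i) involves only the $X$-distance, with no corresponding $S$-correction. Once this bookkeeping is settled, each of (i), (ii) and (iii) follows by direct substitution and collection of terms.
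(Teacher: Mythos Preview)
Your proposal is correct and follows essentially the same route as the paper: both arguments reduce everything to the classical Ward--Lance--Williams recursion for $d_{W,x}^2$ (and its $S$-analogue), together with the observation that the constant $\mathbf{1}'U\mathbf{1}$ and bilinear $\bar S_i'V\bar S_k$ terms split as a convex combination with no residual. The paper writes out (i) explicitly and omits details for (ii) and (iii); your organization around a single centroid identity makes those two cases more transparent, but the underlying algebra is the same.
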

\begin{proof}
For \textit{i)} we just denote by $R_s$, $S_t$ and $T_r$ the protected attributes in clusters $i,j$ and $k$, respectively and  note that
\begin{align*}
\delta_{W,1}(i\cup j,k) &= {\textstyle \frac{(n_i+n_j)n_k}{n_i+n_j+n_k}}\Big(1'U1 + {\textstyle \frac{1}{n_i+n_j}\big(\sum_{s=1}^{n_i}R_s+\sum_{t=1}^{n_j}S_t\big)'V \bar{T}_k}+d^2_{x}(i\cup j, k)\Big)\\
&= {\textstyle \frac{(n_i+n_j)n_k}{n_i+n_j+n_k}\frac{n_i}{n_i+n_j}\Big(1'U1+\bar{R}_i'V\bar{T}_k\Big)}\\
&+ {\textstyle  \frac{(n_i+n_j)n_k}{n_i+n_j+n_k}\frac{n_j}{n_i+n_j}\Big(1'U1+\bar{S}_j'V\bar{T}_k\Big)+d^2_{W,x}(i\cup j, k)}\\
&={\textstyle \frac{n_i+n_k}{n_i+n_j+n_k}\frac{n_in_k}{n_i+n_k}\Big(1'U1+\bar{R}_i'V\bar{T}_k + d_{x}^2(i,k)\Big)}\\
&+ {\textstyle \frac{n_j+n_k}{n_i+n_j+n_k}\frac{n_jn_k}{n_j+n_k}\Big(1'U1+\bar{S}_j'V\bar{T}_k + d_x^2(j,k)\Big)}\\
&-{\textstyle \frac{n_k}{n_i+n_j+n_k}d^2_{W,x}(i,j)}\\
&={\textstyle \frac{n_i+n_k}{n_i+n_j+n_k}\delta_{W,1}(i,k)+\frac{n_j+n_k}{n_i+n_j+n_k}\delta_{W,1}(j,k)}-{\textstyle \frac{n_k}{n_i+n_j+n_k}d^2_{W,x}(i,j)}.
\end{align*}
Observe that we have used the well-known recursion for Ward's dissimilarities, namely,
\begin{equation}\label{Ward_recursion}
d_{W,x}^2(i\cup j, k)={\textstyle \frac{n_i+n_k}{n_i+n_j+n_k}d_{W,x}^2(i,k)+\frac{n_j+n_k}{n_i+n_j+n_k}d_{W,x}^2(j,k)-{\textstyle \frac{n_k}{n_i+n_j+n_k}d^2_{W,x}(i,j)}} 
\end{equation}
(see, e.g., \cite{Everittetal}). The update formulas \textit{ii)} and \textit{iii)} are obtained similarly. We omit details.

\end{proof}

From Proposition \ref{lma_recurs_form} we see that a practical implementation of agglomerative hierarchical clustering based on $\delta_{W,l}$, $l=1,2$ would require the computation of $d_{W,x}^2(i,j)$, which can be done using the Lance-Williams formula (\ref{Ward_recursion}).
In the case of $\delta_{W,2}$ we also need $d_s^2(i,j)$, which again can be obtained through a Lance-Williams recursion. This implies that 
agglomerative hierarchical clustering based on $\delta_{W,l}$, $l=1,2$ or $3$ can be implemented using computer time of order $O(n^2\log n)$ (at most twice the required time
for the implementation of a `standard' Lance-Williams method). 

We end this section with an outline of the implementation details for our proposal for diversity enhancing agglomerative hierarchical clustering based on dissimilarities $\delta_{W,l}$.
\begin{description}
\item[Iterative Attraction-Repulsion Clustering]
For  $l \in \{1,2,3\}$

\begin{itemize}
\item \textit{Compute the dissimilarity matrix $[\Delta_{i,j}]=[\delta_l((X_i,S_i),(X_j,S_j))]$ with a particular choice of the free parameters.}
\item \textit{If $\mathrm{min}\, \Delta_{i,j}\leq 0$, transform the original dissimilarity to have positive entries: $\Delta_{i,j} = \Delta_{i,j} + |\mathrm{min}\Delta| + \epsilon$, where $\epsilon$ is arbitrarily small.}
\item \textit{Use the Lance-Williams type recursion to determine the clusters $i$ and $j$ to be merged; iterate until there is a single cluster}
\end{itemize}
\end{description}
\section{Diversity enhancing clustering with kernels}\label{sec_kkmeans}
Clustering techniques based on the minimization of a criterion function typically result in clusters with a particular geometrical shape. For instance,
given a collection of points $x_1,\dots x_n \in\real^d$, the classical $k$-means algorithm looks for a grouping of the data into $K\leq n$ clusters $C = \{C_1,\dots, C_K\}$ with corresponding means $\{\mu_1,\dots, \mu_K\}$ such that the objective function 
\[\sum_{k=1}^{K}\sum_{x\in C_i}\|x-\mu_i\|^2\]
is minimized. The clusters are then defined by assigning each point to the closest center (one of the minimizing $C_i$'s). This results in convex clusters with linear boundaries. It is often the case that this kind of shape constraint does not adapt well to the geometry of the data. A non-linear transformation of the data could map some
clustered structure to make it more adapted to convex linear boundaries (or some other pattern). In some cases, this transformation can be implicitly handled via kernel methods. This approach is commonly called the `kernel trick'. The use of kernels in statistical learning is well documented, see, for instance, \cite{Cristianini2004} and Chapter 16 in \cite{Shalev2014}. For completeness, let us state that in this work a kernel is a symmetric and non-negative function  $\kappa: \real^{d}\times\real^{d}\to \real$. Additionally, a Mercer (positive semidefinite) kernel, is a kernel for which there is a transformation $\phi:\mathbb{R}^d \to \Omega$, with $\Omega$ a Hilbert space, such that $\kappa(x,y)=<\phi(x),\phi(y)>_{\Omega}$. In this section we explore how the attraction-repulsion dissimilarities that we have introduced can be adapted to the kernel clustering setup, focusing on the particular choice
of kernel $k$-means.

Kernel $k$-means is a non-linear extension of $k$-means that allows to find arbitrary shaped clusters introducing a suitable kernel similarity function, where the role of the squared Euclidean distance between two points $x,y$ in the classical $k$-means is taken by 
\begin{equation}
\label{d_kappa}
d_\kappa^2(x,y) = \kappa(x,x)+\kappa(y,y)-2\kappa(x,y).
\end{equation}
Details of this algorithm can be found in \cite{kkmeans}. 

In a first approach, we could try to introduce a kernel function for vectors $(X_1,S_1), (X_2,S_2)\in \real^{d+p}$ such that $d_\kappa^2$ takes into account the squared Euclidean distance between $X_1$ and $X_2$ but also tries to separate points of the same class and/or tries to bring closer points of different classes, i.e., makes use of $S_1,S_2$. Some simple calculations show that this is not an easy task, if possible at all in general. If we try, for instance, a joint kernel of type $\kappa((X_1,S_1),(X_2,S_2)) = \tau(S_1,S_2) + k(X_1, X_2)$, $S_1,S_2\in\{-1,1\}$ with $\tau, k$ Mercer (positive semi-definite) kernels (this covers the case $k(X_1,X_2)=X_1\cdot X_2$, the usual scalar product in $\mathbb{R}^d$), our goal can be written as 
\begin{equation}\label{desig_squared_distance}
d_\kappa^2((X_1,S_1),(X_2,S_1)) > d_\kappa^2((X_1,S_1),(X_2,S_2)),
\end{equation}
for any $X_1,X_2$, with $S_1\neq S_2$. However, (\ref{desig_squared_distance}) implies that
\[2\tau(S_1,S_2)> \tau(S_1,S_1)+\tau(S_2,S_2)\]
which contradicts that $\tau$ is positive-semi-definite. Therefore, there is no kernel on the sensitive variables that we can add to the usual scalar product. Another possibility is to consider a multiplicative kernel, $\kappa((X_1,S_1),(X_2,S_2)) = \tau(S_1,S_2)k(X_1, X_2)$, $S_1,S_2\in\{-1,1\}$ with $\tau, k$ Mercer kernels. From (\ref{desig_squared_distance}) we get
\[2\left(\tau(S_1,S_1)-\tau(S_1,S_2)\right)k(X_1, X_2)<\left(\tau(S_1,S_1)-\tau(S_2,S_2)\right)k(X_2,X_2)\]
which depends on $k(X_1,X_2)$ and makes it challenging to eliminate the dependence of the combinations $X_1,X_2$.

The previous observations show that it is difficult to think of a simple and interpretable kernel $\kappa$ that can be a simple combination of a kernel in the space of unprotected attributes and a kernel in the space of sensitive attributes. This seems to be caused by our desire to separate vectors that are similar in the sensitive space, which goes against our aim to use norms induced by scalar products. In other words, a naive extension of the kernel trick to our approach to diversity enhancing clustering seems to be inappropriate.

Nonetheless, the difficulty comes from a naive desire to carry out the (implicit) transformation of the attributes and the penalization of homogeneity in the protected
attributes in the clusters in a single step. We still may obtain gains in diversity, while improving the separation of the clusters in the unprotected attributes if we embed the $X$ data into a more suitable space by virtue of some sensible kernel $\kappa$ and consider the corresponding kernel version of $\delta_l$, with $\delta_l$ as in (\ref{k_add}) to (\ref{k_local}). Instead of using the Euclidean norm $\|X_1-X_2\|$ we should use $d_\kappa(X_1,X_2)$. In the case of $\delta_1$, for instance,
this would amount to consider the dissimilarity
\begin{equation} \label{k_add_kappa}
\delta_{\kappa,1}\left(\left(X_1,S_1\right),\left(X_2,S_2\right)\right) = 1'U1 + S_1'VS_2 + d_\kappa^2(X_1,X_2),
\end{equation}
with similar changes for the other dissimilarities. Then we can use an embedding (MDS the simplest choice) as in Section \ref{section_mds} and apply a clustering
procedure to the embedded data. This would keep the improvement in cluster separation induced (hopefully) by the kernel trick and apply, at the same time, a
diversity correction. An example of this adaptation of the kernel trick to our setting is given in Section \ref{k_trick_example}. We recall that our procedure inherits the possible benefits of the kernel trick, but also the difficulties related to it, for example the appropriate selection of kernels. 

\section{Parameter selection and tunning}\label{section_parameters}
There are two main steps required for tuning our methods. First, the user must select some criteria for diversity and some criteria for the quality of a clustering which are suitable to the problem at hand. Second, the practitioner should choose values in an informed way or just define a set of possible values. Then, an optimal selection of parameters and clustering methods can be done. A complete example of how to select the best parameters, the best dissimilarity, and the best clustering method (among some reasonable selection of methods) is provided in Section \ref{CRDC}. We do not advise trying to select the best method among all available clustering methods, since this seems rather infeasible.

\subsection{Diversity and quality of a clustering}
There are different measures for diversity, which, in a context where diversity is a good proxy for fairness, can also measure fairness. We will be mainly interested in two of them. As introduced in \cite{fair_k_means}, the balance of a set of points $X$ with protected attributes $S=\{red,black\}$ is defined as 
$$\mathrm{balance}(X)=\min\left(\frac{\# Black}{\# Red},\frac{\# Red}{\# Black}\right)$$ 
and the balance of $\mathcal{C}$, a clustering of the data in $X$, is given by 
\begin{equation}\label{balance}
\mathrm{balance}(\mathcal{C})=\min_{C\in\mathcal{C}}\mathrm{balance}(C).
\end{equation}
Let $\mathcal{C}$ be a clustering of a dataset into $K$ clusters. For each cluster $k$, $1\leq k\leq K$, there is an associated proportions vector $p_k$, where $p_k$ is formed by the proportion of each value of the protected attributes in the cluster $k$. A simple measure for the diversity of a partition is
\begin{equation}\label{unfiarness}
\mathrm{unbalance}(\mathcal{C}) = \frac{1}{K}\sum_{k=1}^K\|p_k-p_t\|,
\end{equation}
where $\|\cdot\|$ denotes the usual Euclidean norm and $p_t$ the vector of proportions of each value of the protected attributes in the whole dataset. Notice that $\mathrm{unbalance}(\mathcal{C}) = 0$ means that the partition  fulfils the diversity preserving criteria (\ref{fair_constraints}).

Due to the lack of ground truth in unsupervised learning and due to the characteristics of clustering, many notions of what a good clustering is and how to compare different partitions exist (see Section VI in \cite{HennigMeilaMurtaghRocci}). One popular approach is the Silhouette Index which measures how similar objects are in the same cluster with respect to objects in different clusters. Hence, a good partition has clusters that are cohesive and well separated. We recall that the silhouette index of an observation $X_i$ is given by 
\begin{equation}\label{silhouette}
s(i) = \frac{b(i)-a(i)}{\max(a(i),b(i))}
\end{equation}
where $a(i)$ is the average distance to $X_i$ of the observations in the same group as $X_i$, and $b(i)$ is the average distance to $X_i$ of the observations in the closest group different from the one of $X_i$ (see \cite{silhouette}). The average silhouette index of a group is the average of the silhouette indexes of the members of the group and the average silhouette index is the average of the silhouette indexes of all points.

On the other hand, in diversity enhancing clustering, a good benchmark for comparisons could be the partition obtained by a `standard' clustering procedure. A well-established measure for comparing partitions is the Adjusted Rand Index (ARI) (see \cite{Hubert1985}). Hence, there is a simple quantity that measures the difference between a diversity enhancing clustering and a standard one that we can interpret as the effect that a diversity correction has on the clustering structure.

\subsection{Parameter proposal}

The attraction-repulsion dissimilarities (\ref{k_add}) -
(\ref{k_local}) introduced in Section \ref{section_mds} depend on two main sets of parameters. The first set consist of free parameters used to balance the influence of the variables $X$ and the protected variable $S$. The second set is formed by the labels for the different classes, where different encoding of the labels allows for different interactions between the groups. Below, we propose some guidelines on the choice of these parameters focusing on how to understand them and what their effect on diversity is.  

The  dissimilarities we consider can be divided into two groups: (\ref{k_mult}) and (\ref{k_norm}) do not depend on the
codification of the class variable, while (\ref{k_add}) and
(\ref{k_local}) do depend on such a choice. In our method, the level of perturbation, which influences the level of diversity, is imposed through the choice of
the parameters in the dissimilarities. Choosing the parameters
enables us to balance diversity and  the original structure of the data which
may convey information that should not be erased by diversity constraints.

Consider first  dissimilarities (\ref{k_mult}) and (\ref{k_norm}). They rely on two parameters $u$ and $v$. In the multiplicative dissimilarity (\ref{k_mult}), $v$ is a parameter that measures how sudden the change in the distance is when switching  from elements with a different protected class to elements with the same protected class. For $v$ large enough,  $e^{-v\|S_1-S_2\|^2}$ is small when $S_1\neq S_2$, which implies that the diversity dissimilarity only modifies the distance between points inside the same protected class, increasing heterogeneity of the clusters.

Once $v$ has been fixed, the main parameter $u$ controls the intensity of the perturbation in a similar way for both dissimilarities (\ref{k_mult}) and (\ref{k_norm}). To illustrate the effect of this parameter, we focus on  (\ref{k_mult}) and perform a diversity enhancing
clustering, with MDS or hierarchically, for different values of the
intensity parameter $u$ and measure the diversity of the clusters
obtained. Such an example is depicted in the left middle row of Figure
\ref{fig_transf}. We can see that, as expected, increasing the values for $u$ puts more weight on the part of the dissimilarity that enforces heterogeneity of the clusters. $u=0$ leads to the usual clustering. Indeed, varying $u$ from $0$ to $4.5$ in steps of
$0.5$ increases the diversity achieved for both clusters, with a saturation effect  from
$4.5$ to $5$ where we do not appreciate an improvement in diversity. Hence, maximum diversity is achieved for $u = 4.5$
 and gives  the lowest perturbation
that achieves the highest level of diversity. If one aims at preserving more of the structure of the original
information at the expense of a lower level of diversity, some smaller value of $u$ can be selected. For example, in the right middle row of Figure \ref{fig_transf} we provide
the result of choosing $u=1$. Hence $u$ balances both effects of closeness to the usual dissimilarities and the amount of heterogeneity reached in the clustering.

Next,  dissimilarities (\ref{k_add}) and (\ref{k_local}), as described in
Section \ref{section_mds}, depend on the values chosen for the protected variable $S$, and a matrix $V$, which  plays the role of the matrix of interactions for different classes. When dealing with a two-class discrimination problem where the protected class has only two values, labelling the classes as
$\{-1,1\}$ or $\{(1,0),(0,1)\}$ can lead to the same results for
appropriate choices of $V$. However, for more than two protected classes
we will use only the following vectorial codification:  for $q$ different
values of the protected class, we will codify the values as the $q$
unitary vectors $\{a_1,\dots,a_q\}$ where $a_{i,j} =1$ if $i = j$ and
$a_{i,j} =0$ if  $i \neq j$ for $1\leq i,j\leq q$.

To build the interaction matrix we proceed as follows. First, consider a matrix $\tilde{V}_{i,j}$ with $1\leq i,j\leq q$. We fix
$\tilde{V}_{i,j} = 0$ if we want no interaction between classes $i$ and
$j$, in particular, if $i=j$ this means that there is no interaction
between elements with the same class. We take $\tilde{V}_{i,j} = 1$ if
we want repulsion (relative increase in the distance) between classes
$i$ and $j$. We fix $\tilde{V}_{i,j}=-1$ if we want attraction (relative
decrease in distance) between classes $i$ and $j$.  Hence, if the practitioner wants to promote diversity for a class represented by
$a_{i^*}$, it is recommendable to set values of $\tilde{V}_{i^*,j}=-1$
for $j\neq i^*$.  As an example, in Section \ref{k_trick_example}, we have chosen the interaction matrix
$V = ((1,-1)'|(-1,0)')$, to model repulsion between elements of the same class $(1,0)$, attraction between
elements of the classes $(1,0)$ and $(0,1)$, and no interaction between
the elements of the same class $(0,1)$. Then  intensity of the interaction is modelled using  a constant $v_0 > 0$, and we set  $V=v_0\tilde{V}$. In the previous example $v_0=1$.
The parameter $v$ for dissimilarity
(\ref{k_local}) has the same meaning as the corresponding parameter for
(\ref{k_mult}) and can be selected in the same way.

Finally, matrix $U$ for
dissimilarity (\ref{k_add}) represents an extra additive shift.  In many
cases it can be set to $U = 0$ (the zero matrix).

We provide an example to explain how  to select the intensity $v_0$ for dissimilarity (\ref{k_add}) in  the top left image of Figure \ref{fig_transf}. Notice that using
$V>0$ and $S\in\{-1,1\}$ is the same as using $V=v_0\tilde{V}$ with
$\tilde{V} = ((1,-1)'|(-1,1)')$ and $S\in\{(1,0),(0,1)\}$. We plot the variation of diversity in each cluster when we vary the
intensity of the interaction between  $0$ and $4.4$ with steps of size  $0.44$.
There is a steady improvement in diversity in both clusters until the
intensity reaches  $v_0 = 3.52$, but from this level, as previously, there is no more
improvement in diversity. Therefore, if a practitioner wants to achieve
the highest level of diversity, $v_0 = 3.52$ should be the proper
intensity, since it corresponds to the smallest perturbation to the geometry that
achieves the best observed diversity. However, a smaller correction in
diversity may be of interest, we have a representation of that top right
in Figure \ref{fig_transf} for $v_0 = 1.32$.

For dissimilarity (\ref{k_local}), after choosing the interaction matrix
$\tilde{V}$, we can try to find a maximum in diversity, fixing a grid
formed by different combinations for the vector of parameters
$(v_0,u,w)$. In the second and third column of Table \ref{table_k_trick}
we see the diversity of the respective clusters when we look at the grid
$(1,u,0.05)$ with $u \in \{0,0.098,\dots, 0.98\}$. What we notice is an
improvement in diversity for all values of $u$, therefore a practitioner
would be advised to select $u = 0.98$ where we obtain the best diversity
values.

\section{Experiments} \label{section_applications}
In this section we provide examples of attraction-repulsion clustering and some insights into the complex relation between diversity and other properties of clustering.

In the first two subsections we deal with simulated examples since we mainly want to describe how attraction-repulsion clustering works, how it compares to some diversity preserving clustering procedures and how diversity relates to cluster structure. To the best of our knowledge, this last point seems to be overlooked in the fair clustering literature, whenever diversity is the proxy used for fairness, in favour of obtaining high fairness with respect to some fairness measure.

The last subsection is a full example on a non-trivial real data set where full tuning of our methods is provided and where preservation of the geometrical structure is of crucial importance. We use a range of different clustering procedures to illustrate that our methods are, as claimed, easily adaptable to a wide range of clustering algorithms. We do not advocate for any particular clustering method, but we do try to use some of the extensions we have developed in Section \ref{section_chcl}.

\subsection{Synthetic data} \label{sub_section_synthetic}
\subsubsection{Diversity and cluster structure} \label{general_example}
Our first example shows how our procedures behave and gives an intuition of how diversity can affect cluster structure. We generate 50 points from four distributions,
$$\mu_1 \sim N((-1,0.5),\mathrm{diag}(0.25,0.25)),\mu_2 \sim N((-1,-0.5),\mathrm{diag}(0.25,0.25));$$
$$\mu_3 \sim N((1,0.5),\mathrm{diag}(0.25,0.25)),\mu_4 \sim N((1,-0.5),\mathrm{diag}(0.25,0.25)),$$ and label the samples from $\mu_1$ and $\mu_2$ as $S = 1$ (squares) and the samples from $\mu_3$ and $\mu_4$ as $S = -1$ (circles). A representation of the data in the original space is given in the third column of Figure \ref{fig_transf}. We can think of the $x$ direction of the data as a source of a lack of diversity, therefore any sensible clustering procedure is going to have clusters that are highly homogeneous in the class $S$ when the original coordinates are used. This is exemplified in Table \ref{tabla_bias}, as we look for different number of clusters: with $k$-means we are detecting almost pure groups (1st row); the same happens with a complete linkage hierarchical clustering with the Euclidean distance (5th row) and with Ward's method with the Euclidean distance (9th row).

Therefore, it may be useful to apply our procedures to the data to gain diversity in $S$. In the first column of Figure \ref{fig_transf} we study the gain in diversity from the increase in intensity of the corrections we apply, the relation between the gains in diversity and the disruption of the cluster structure of the original classes after MDS, and the relation between stronger correction and the change with respect to the original $k$-means partition measured by ARI. In the first row we use dissimilarity (\ref{k_add}), where we fix $U = 0$, and we vary $V= 0,0.44,0.88,\dots,4.4$. In the second row we work with dissimilarity (\ref{k_mult}), where we fix  $v = 20$ and set $u = 0,0.5,1,\dots,5$. In the last row we work with dissimilarity (\ref{k_local}) fixing $V = 1,\,v = 20,\, w = 1$ and we vary $u = 0, 0.099,0.198,\dots,0.99$. We do not show results for dissimilarity (\ref{k_norm}), since in this example it gives results very similar to dissimilarity (\ref{k_add}). With some abuse of notation, throughout Section \ref{section_applications} we will use $S$ as the name of the protected variable. Solid and dashed black lines represent, respectively, the proportion of class $S = 1$ in the two clusters found by $k$-means after the MDS transformation. Solid and dashed red lines represent, respectively, the average silhouette index of class $S=1$ and class $S=-1$. The green line is the Adjusted Rand Index between the $k$-means partition obtained in the original data and the different corrected partitions.

What we see top-left and middle-left in Figure \ref{fig_transf} is that higher intensity (higher $V$ an $u$, respectively) relates to greater diversity and to significantly different partitions compared to the original $k$-means clustering, but also relates to lower silhouette index. This can be interpreted as the fact that greater intensity in dissimilarities (\ref{k_add}) and (\ref{k_mult}) has a greater impact in the geometry of the original problem. In essence, the greater the intensity, the more indistinguishable $S = 1$ and $S= -1$ become after MDS, therefore, any partition with $k$-means will result in very diverse clusters in $S$. This is equivalent to saying that diversity is essentially destroying the original cluster structure of the data, since it is mainly due to the variable $x$. Hence, here a diversity enhancing partition and a good partition are competing against each other. By construction this is not what happens with dissimilarity (\ref{k_local}). The strong locality penalty ($w = 1$) allows to conserve the geometry, shown by the little reduction in silhouette index (row 3 column 1), but results in smaller corrections in the proportions and in small differences with the original $k$-means clustering.

\begin{figure}
\caption{Top row: dissimilarity (\ref{k_add}). Middle row: dissimilarity (\ref{k_mult}). Bottom row: dissimilarity (\ref{k_local}). Left column: proportions of  $S = 1$ in each cluster (black lines), average silhouette indexes for $S = 1$ and $S = -1$ in the transformed space (red lines) and ARI with respect to the $k$-means in the original space (green line), for input parameters that vary. The $x$ label represents $i$ where: $\{V_i\}_{i=1}^{11}=\{ 0,0.44,0.88,\dots,4.4\}$ for dissimilarity (\ref{k_add}); $\{u_i\}_{i=1}^{11} = \{0,0.5,1,\dots,5\}$ for dissimilarity (\ref{k_mult}); $\{u_i\}_{i=1}^{11} = \{0, 0.099,0.198,\dots,0.99\}$ for dissimilarity (\ref{k_local}). Middle column: two clusters in the transformed space for a particular choice of parameters. Right column: same two clusters in the original space.}
\label{fig_transf}
\begin{center}
\includegraphics[scale=0.21]{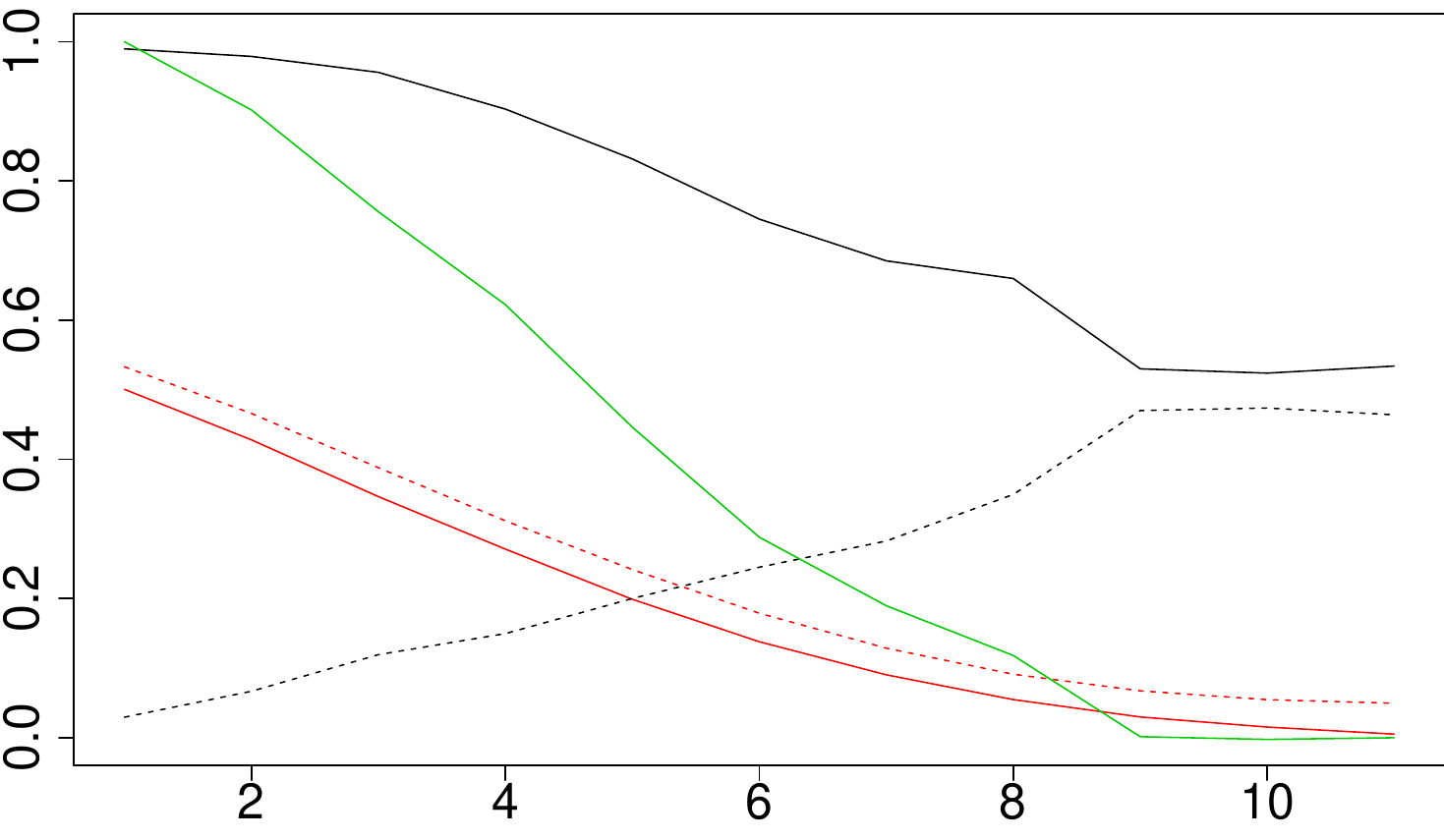}\includegraphics[scale=0.21]{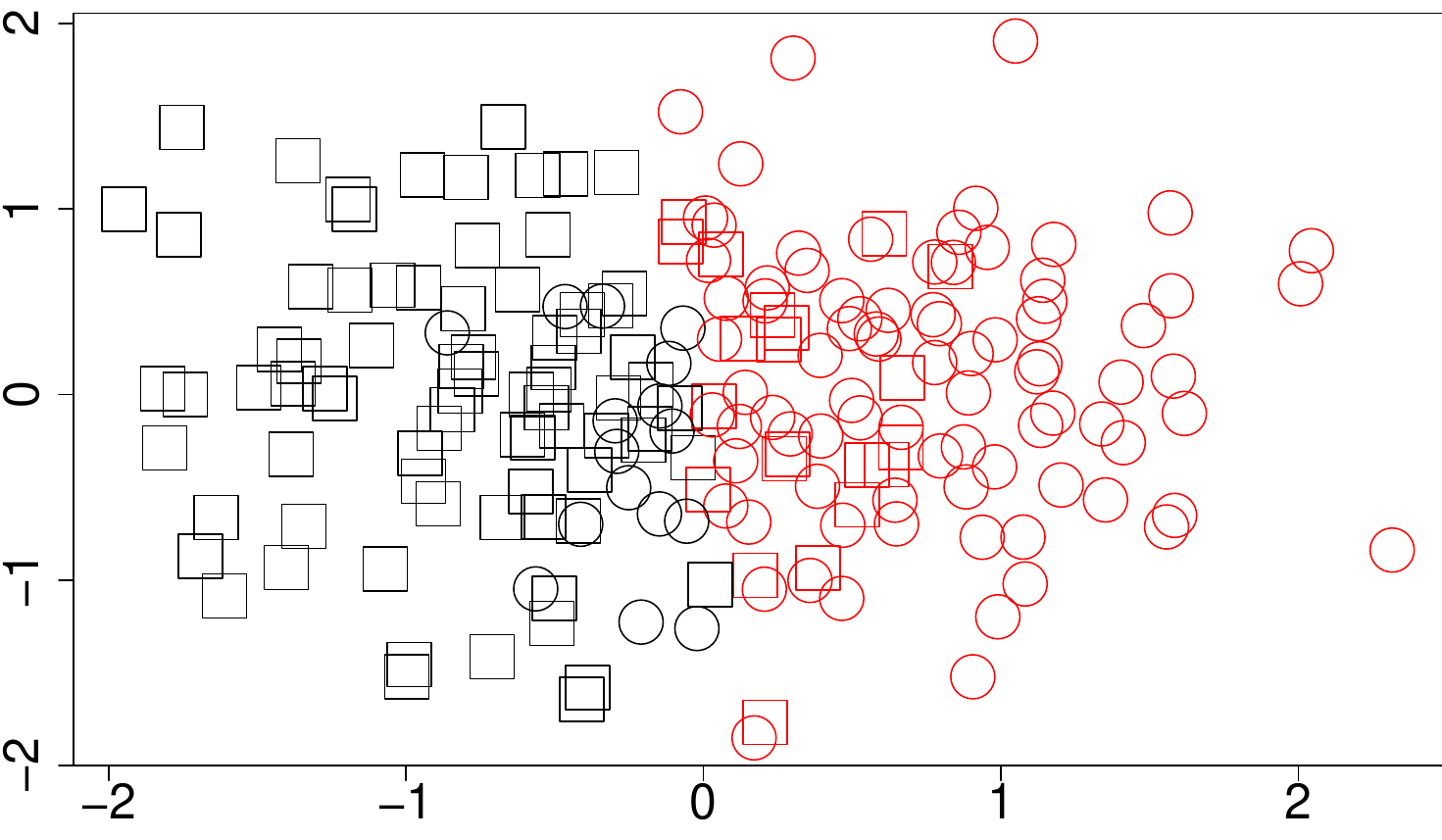}\includegraphics[scale=0.21]{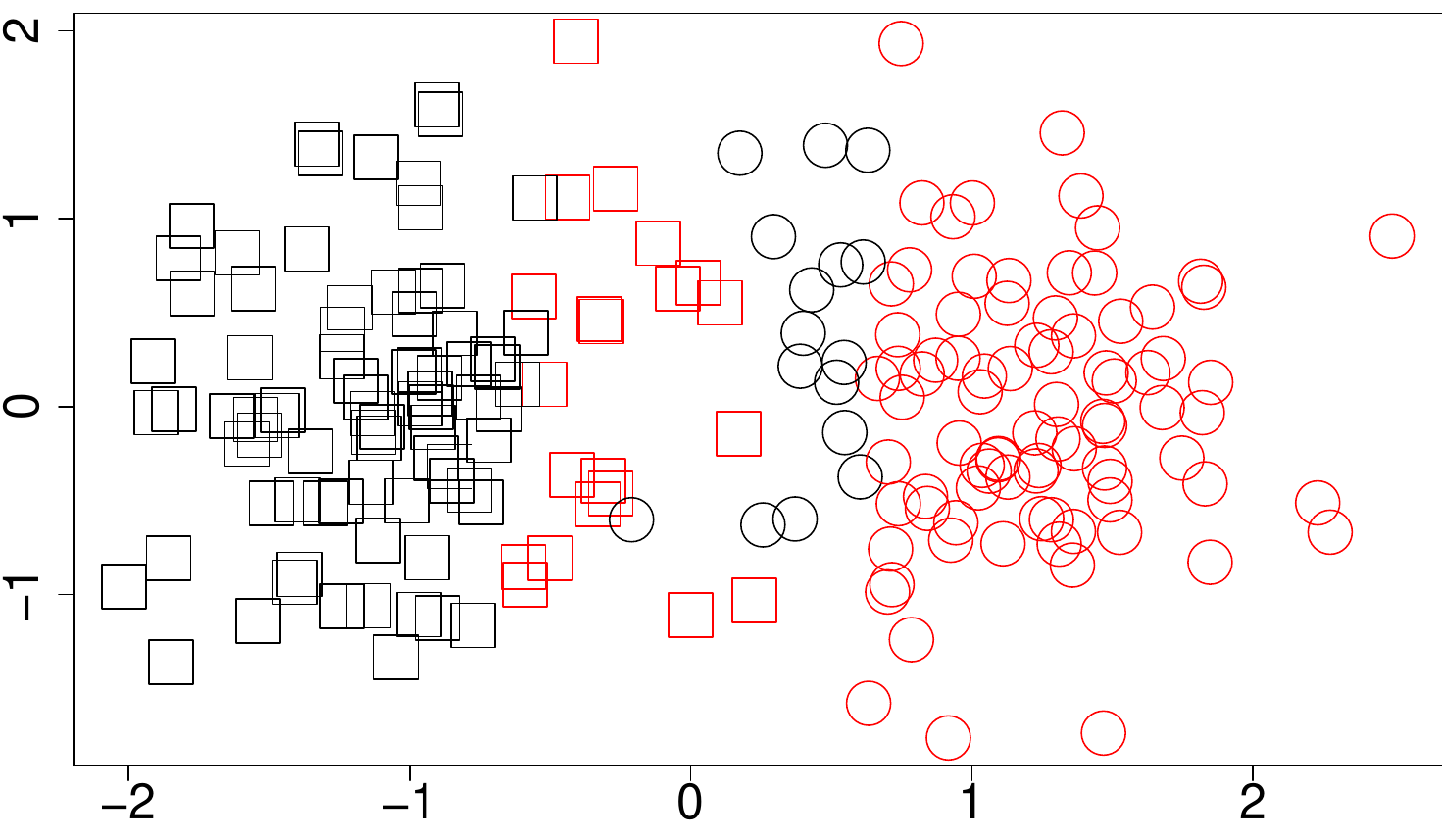}
\includegraphics[scale=0.21]{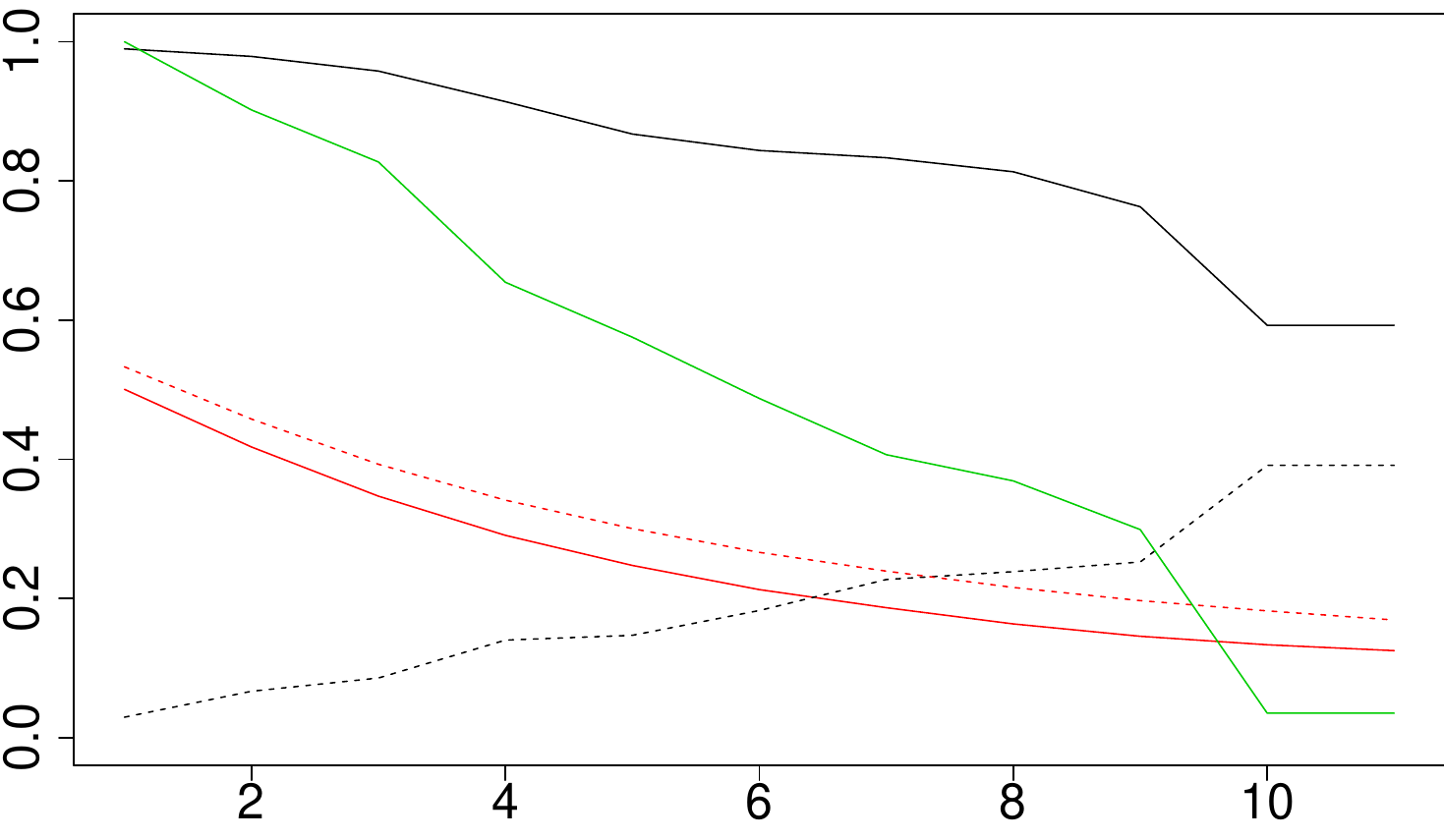}\includegraphics[scale=0.21]{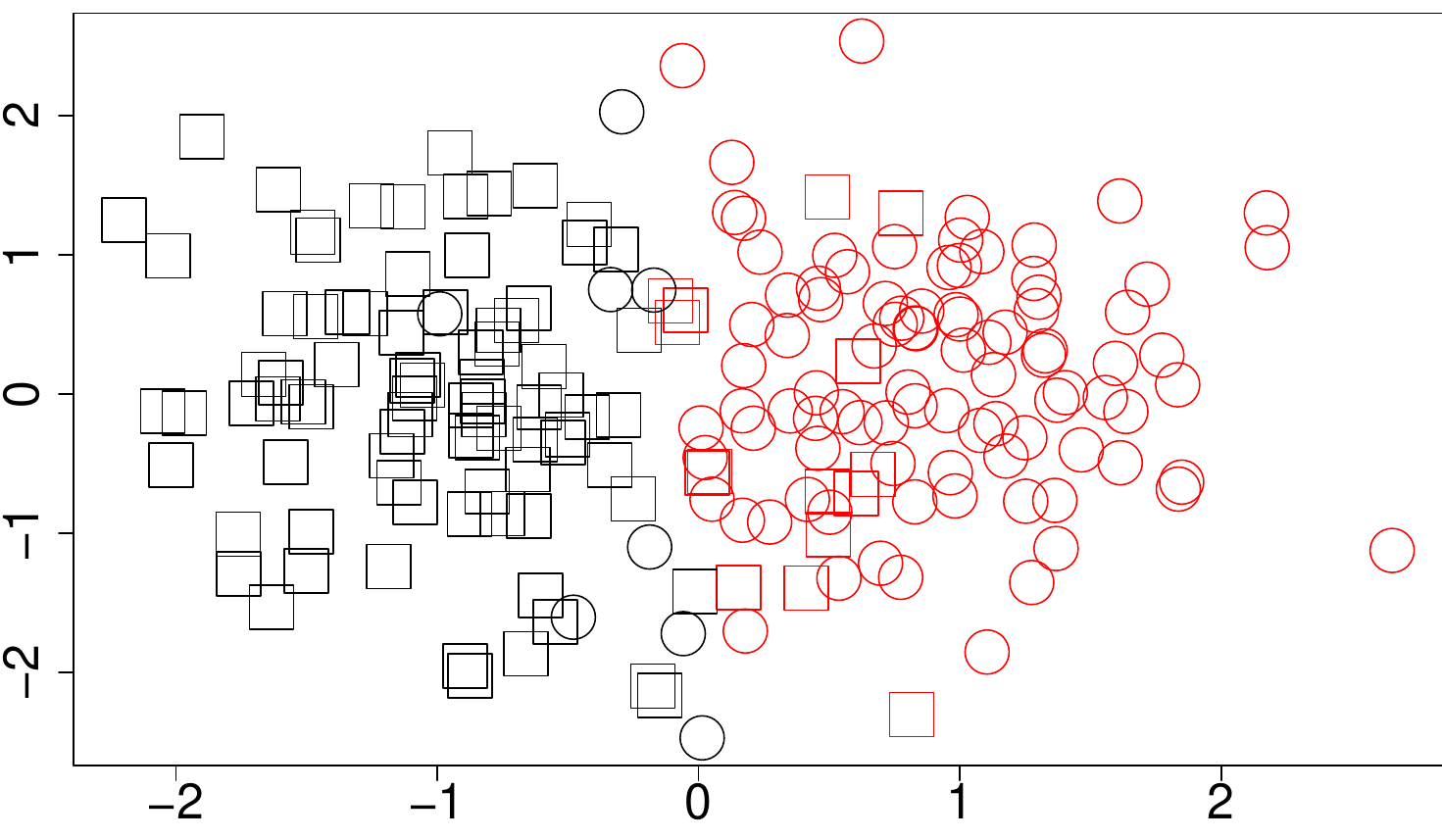}\includegraphics[scale=0.21]{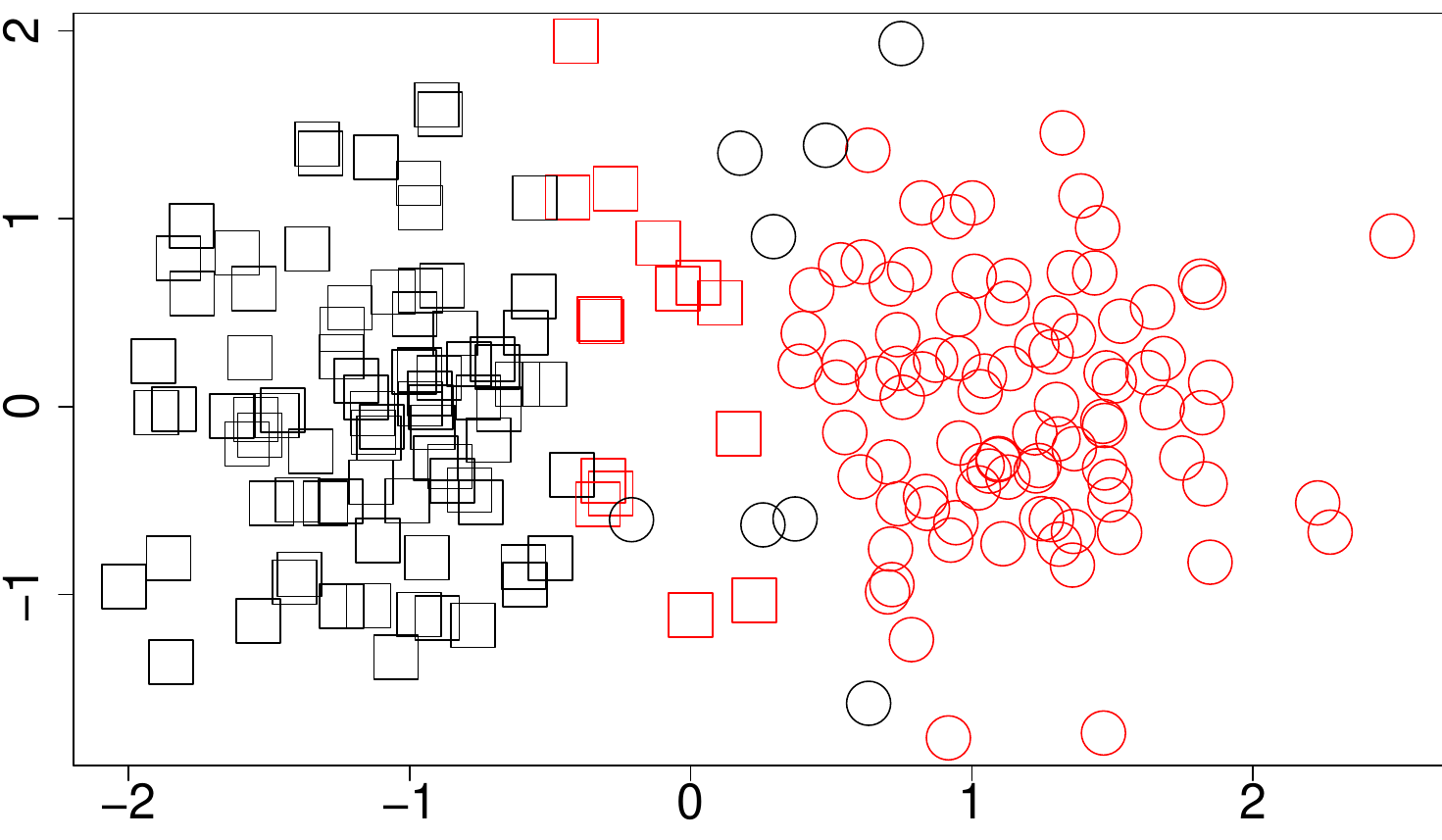}
\includegraphics[scale=0.21]{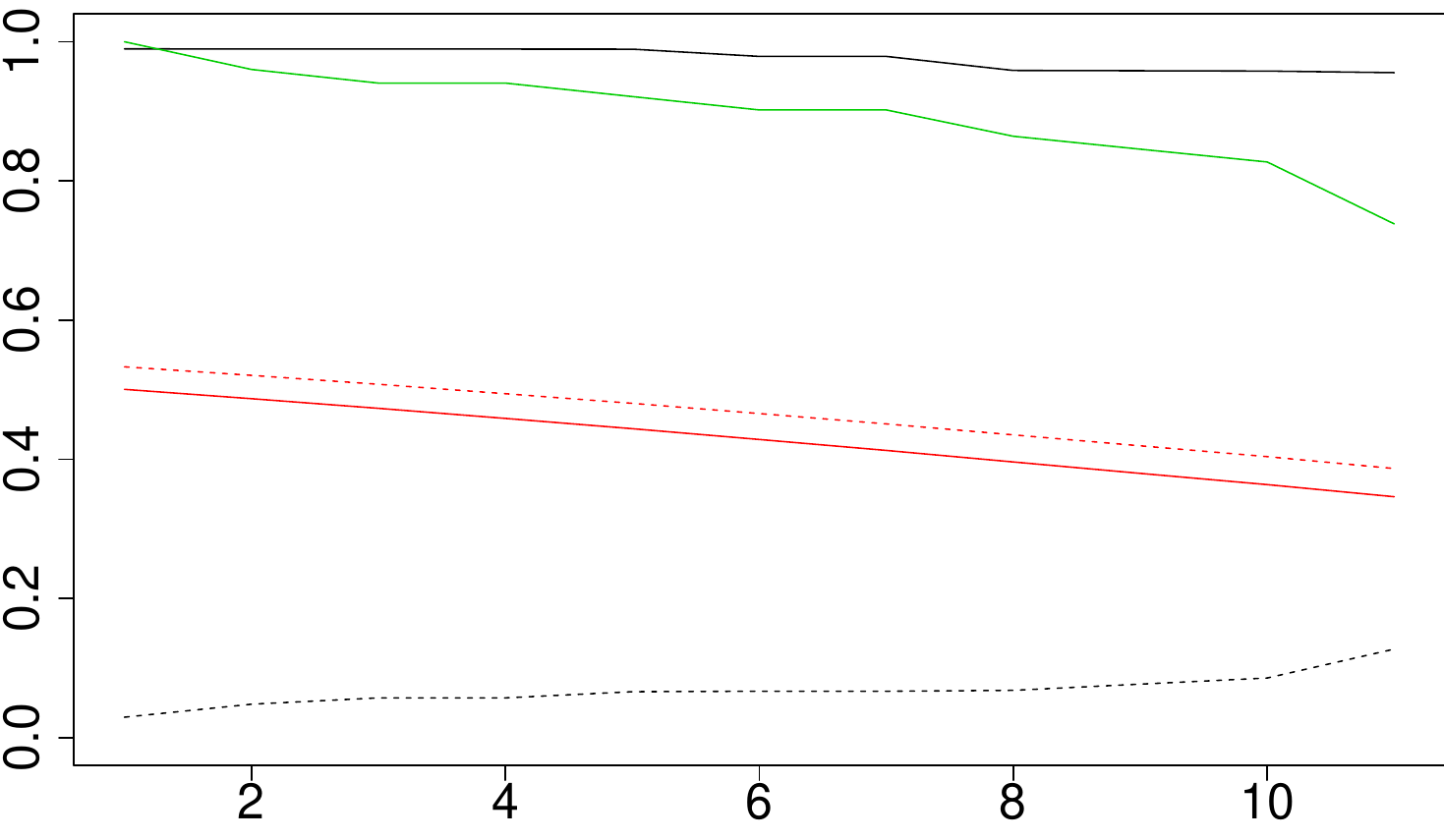}\includegraphics[scale=0.21]{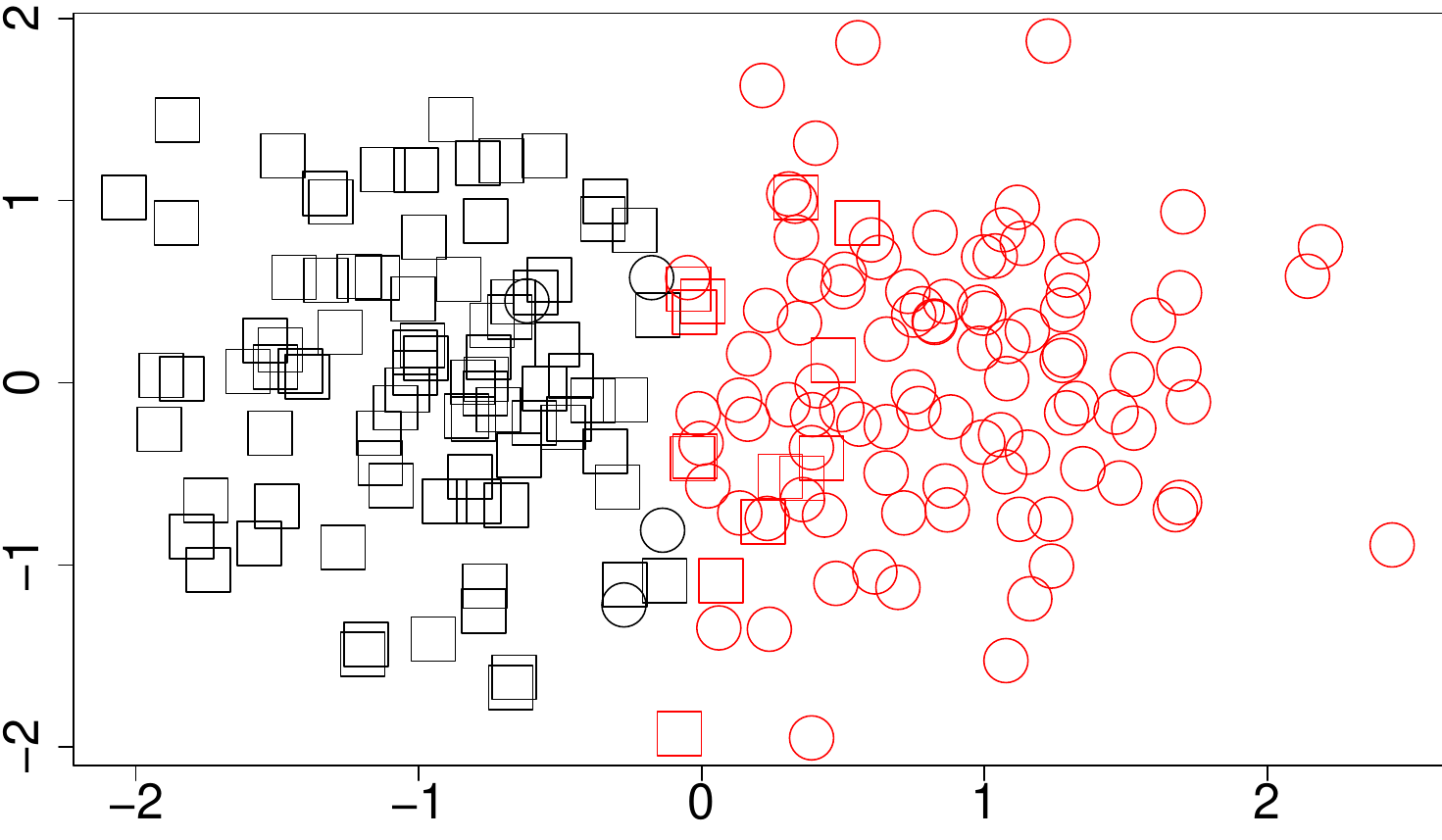}\includegraphics[scale=0.21]{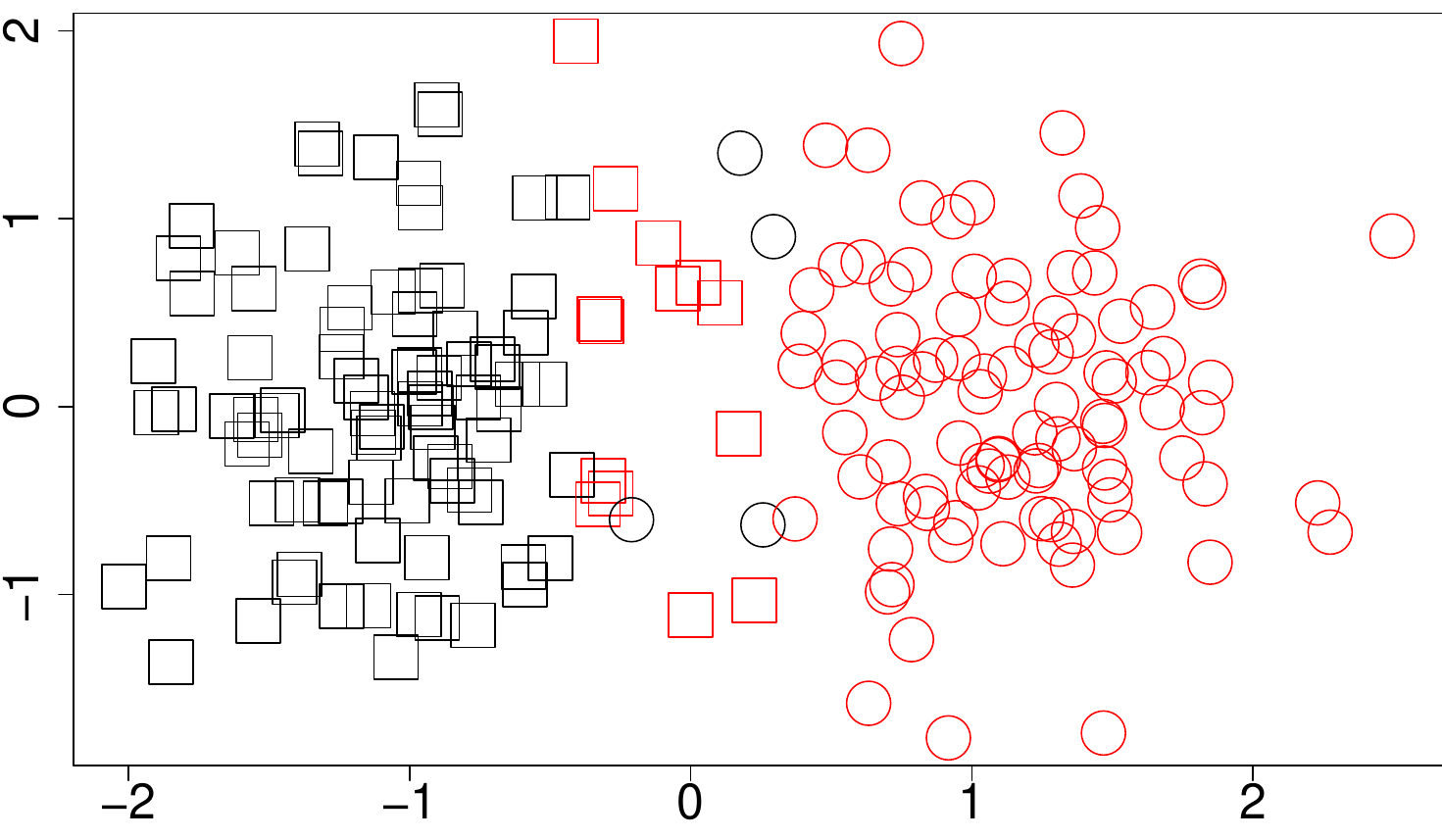}
\end{center}
\end{figure}

\begin{figure}
\caption{Top row: dissimilarity (\ref{k_add}). Middle row: dissimilarity (\ref{k_mult}). Bottom row: dissimilarity (\ref{k_local}). Left column: proportions of  $S = 1$ in the clusters. Middle column: average silhouette indexes for $S = 1$ in the transformed space. Right column: ARI index with respect to the unperturbed $k$-means partition. Curves represent averages over 20 samples. Colours correspond to an increase in perturbation on the label space, ranging from red (lowest amount) to magenta (highest amount). The $x$ label represents $i$ where: $\{V_i\}_{i=1}^{11}=\{ 0,0.44,0.88,\dots,4.4\}$ for dissimilarity (\ref{k_add}); $\{u_i\}_{i=1}^{11} = \{0,0.5,1,\dots,5\}$ for dissimilarity (\ref{k_mult}); $\{u_i\}_{i=1}^{11} = \{0, 0.099,0.198,\dots,0.99\}$ for dissimilarity (\ref{k_local}).}
\label{fig_transf_pert}
\begin{center}
\includegraphics[scale=0.21]{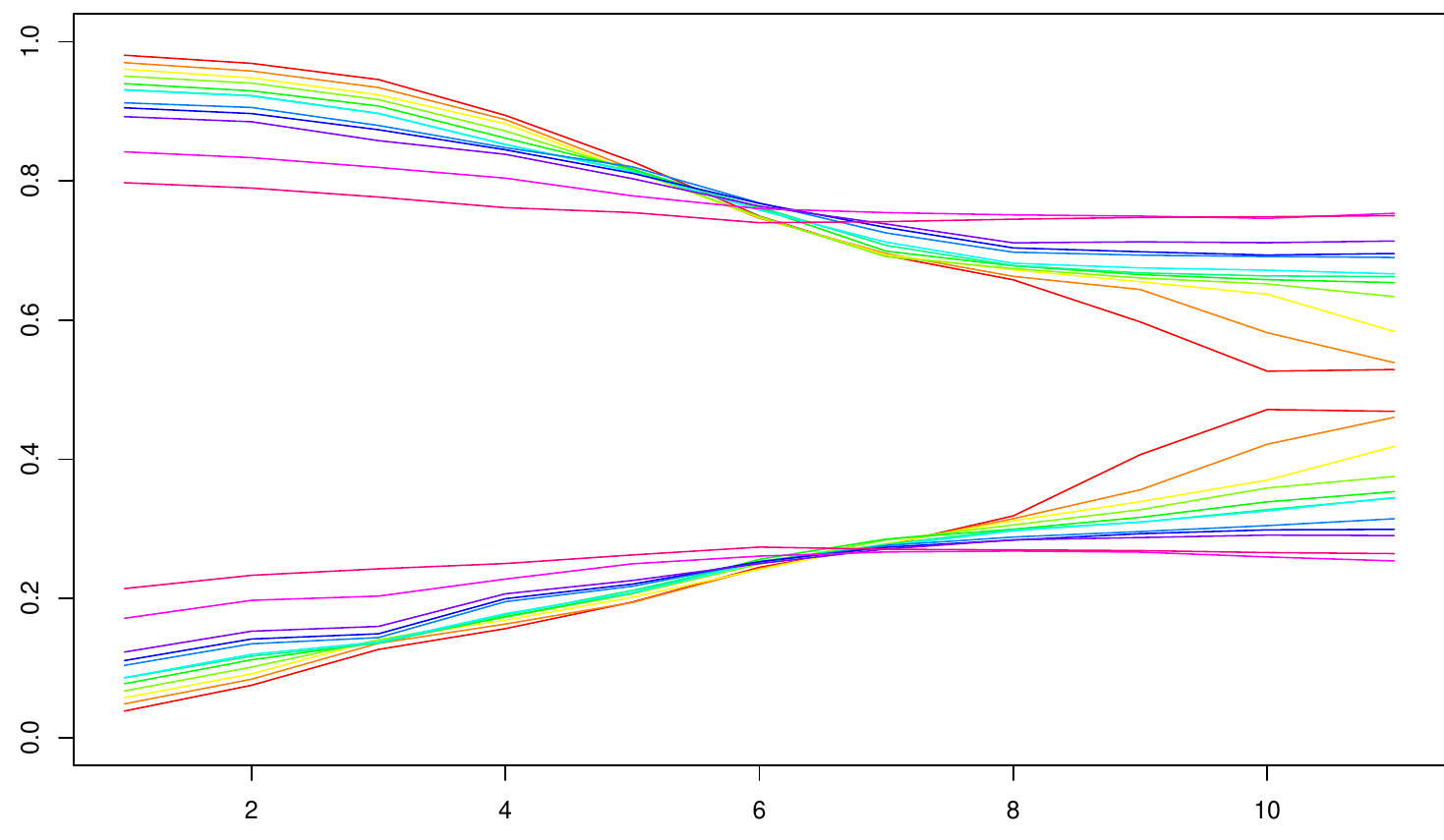}\includegraphics[scale=0.21]{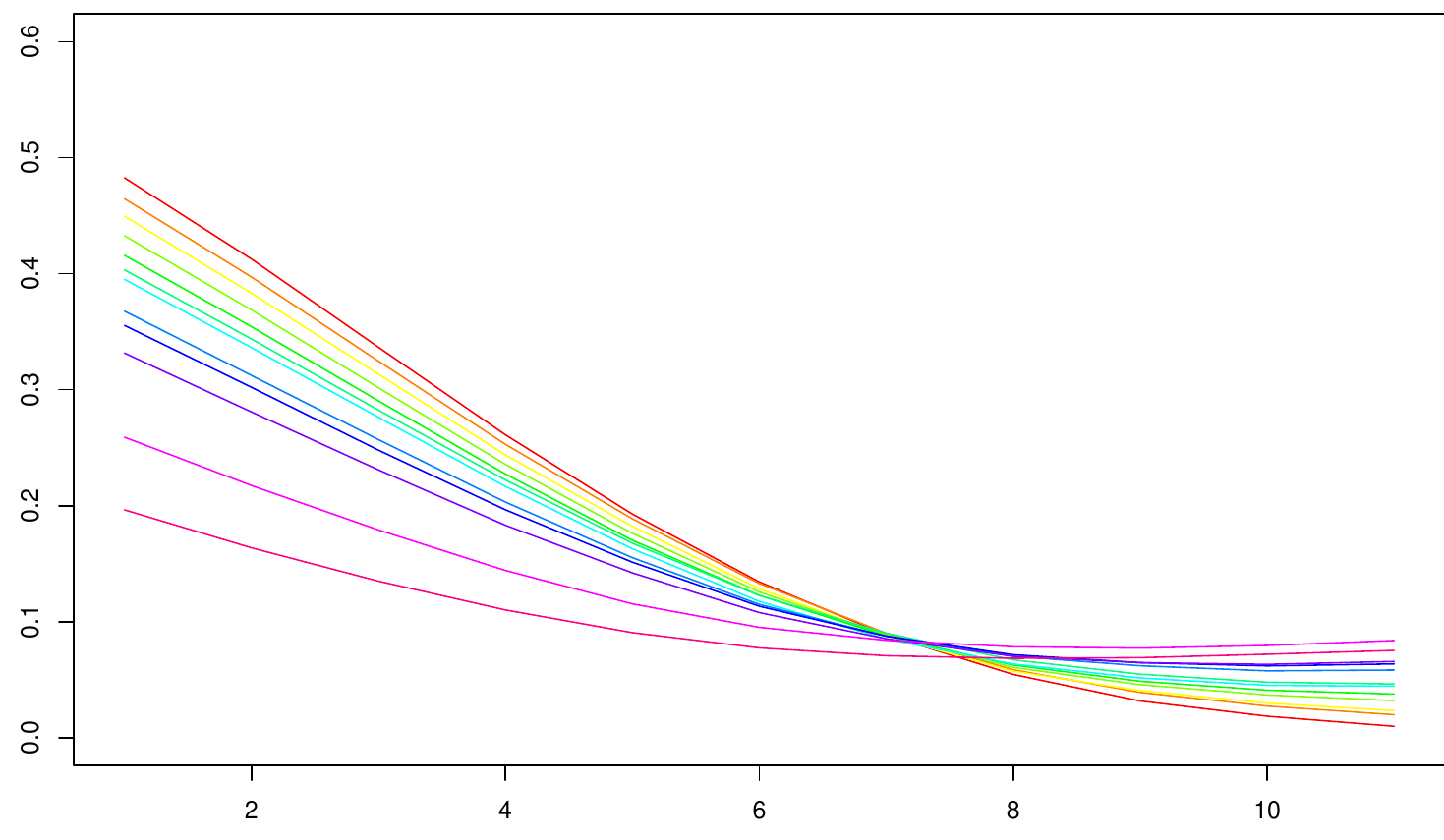}\includegraphics[scale=0.21]{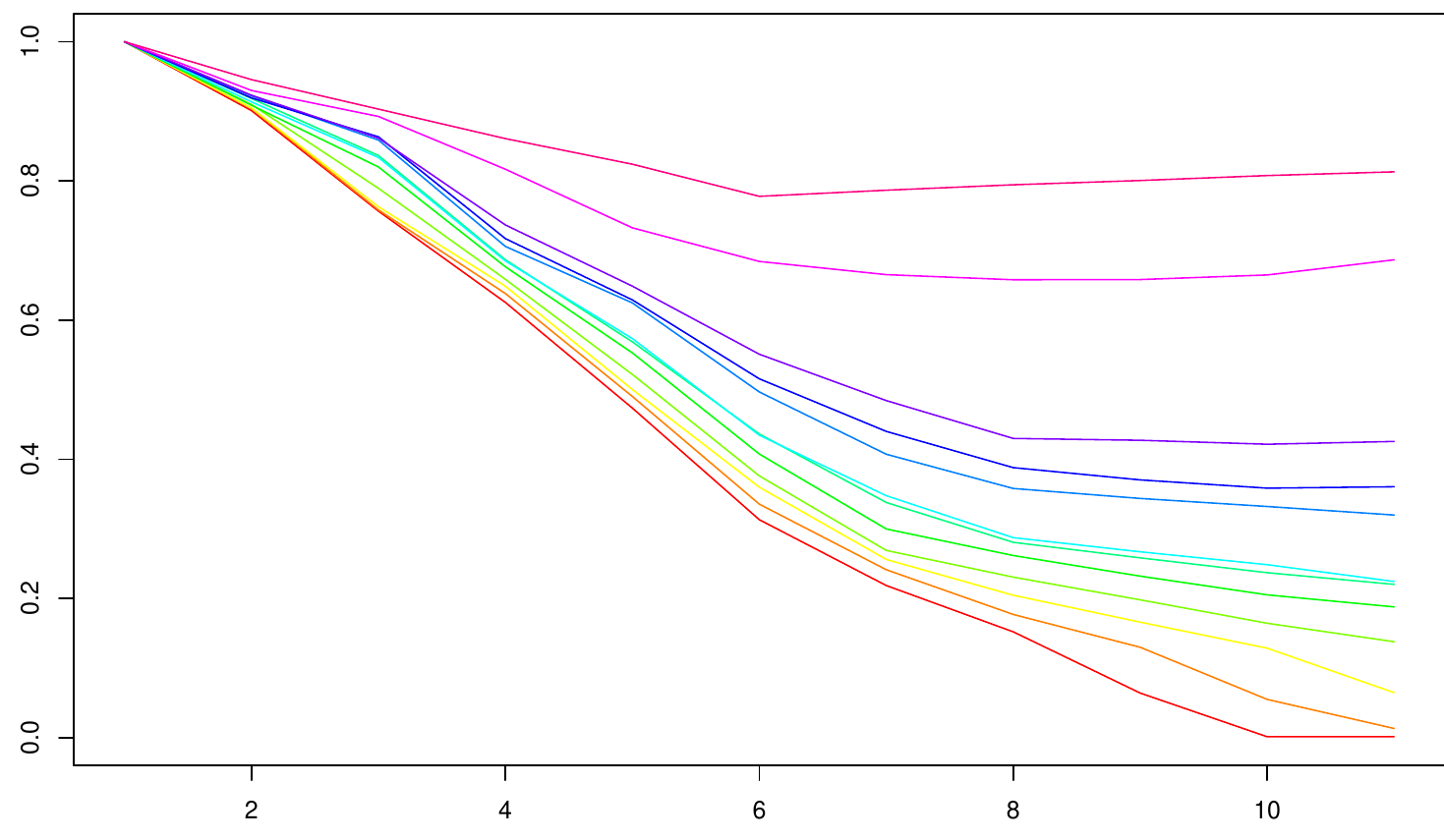}
\includegraphics[scale=0.21]{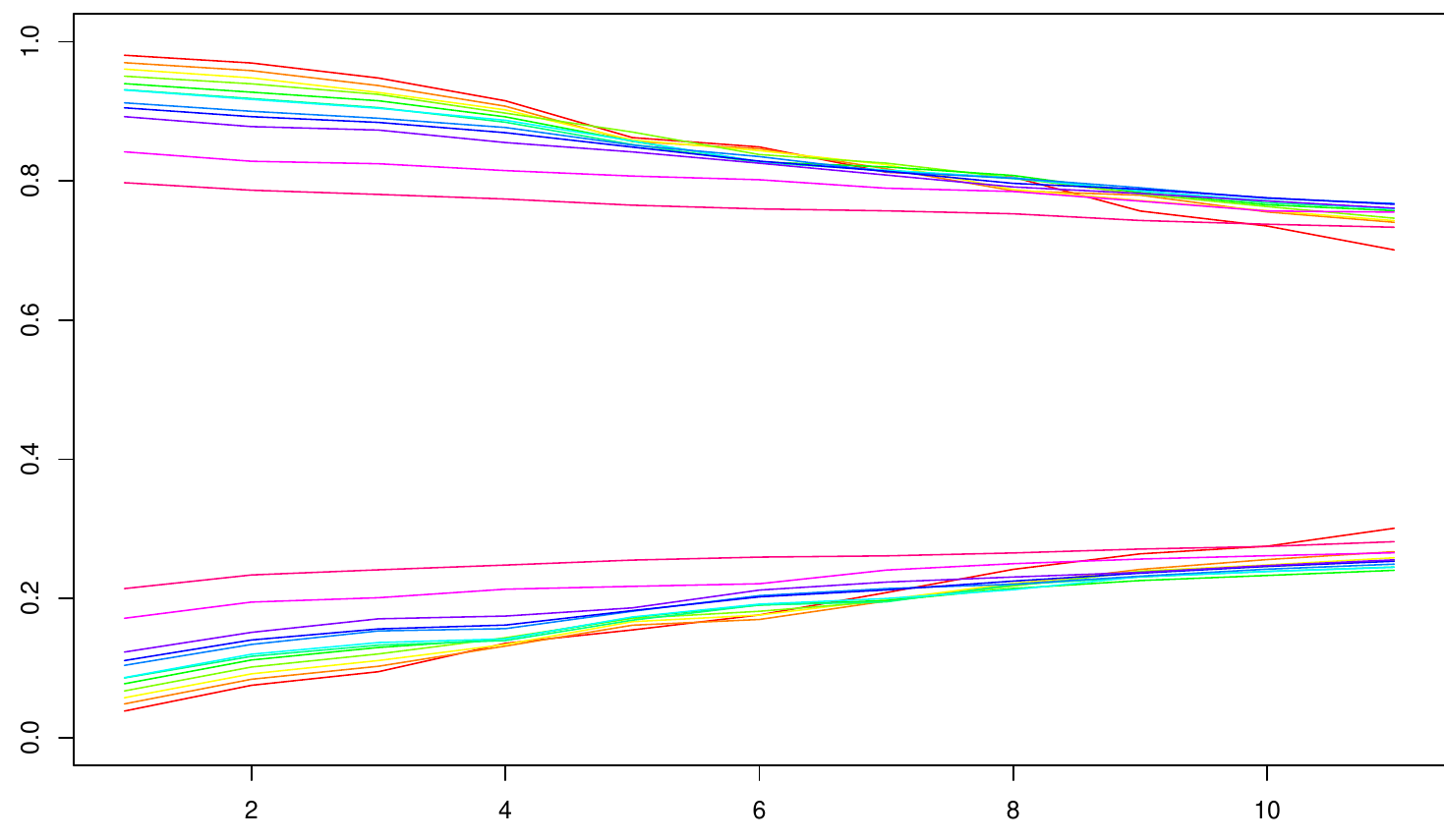}\includegraphics[scale=0.21]{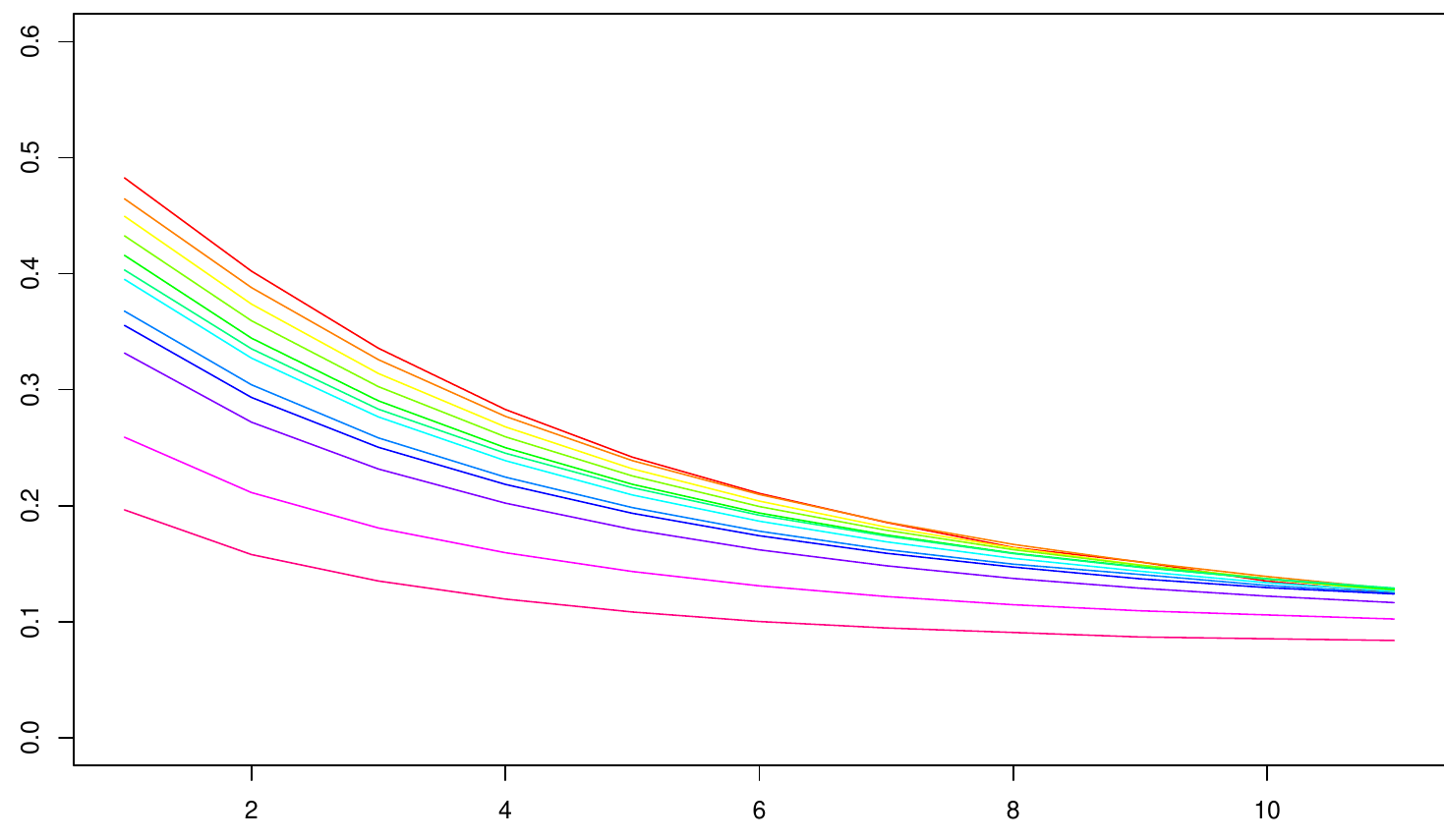}\includegraphics[scale=0.21]{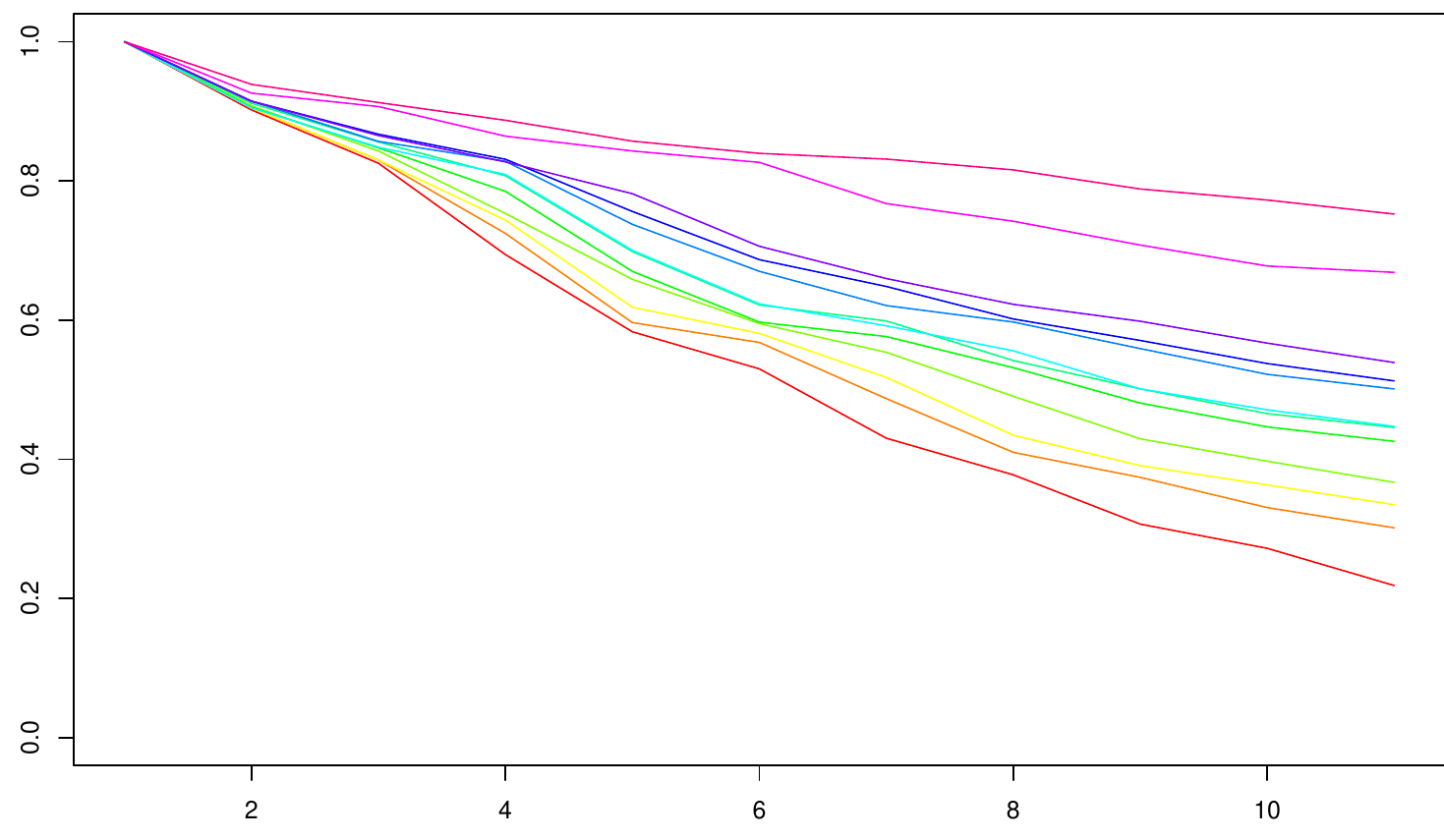}
\includegraphics[scale=0.21]{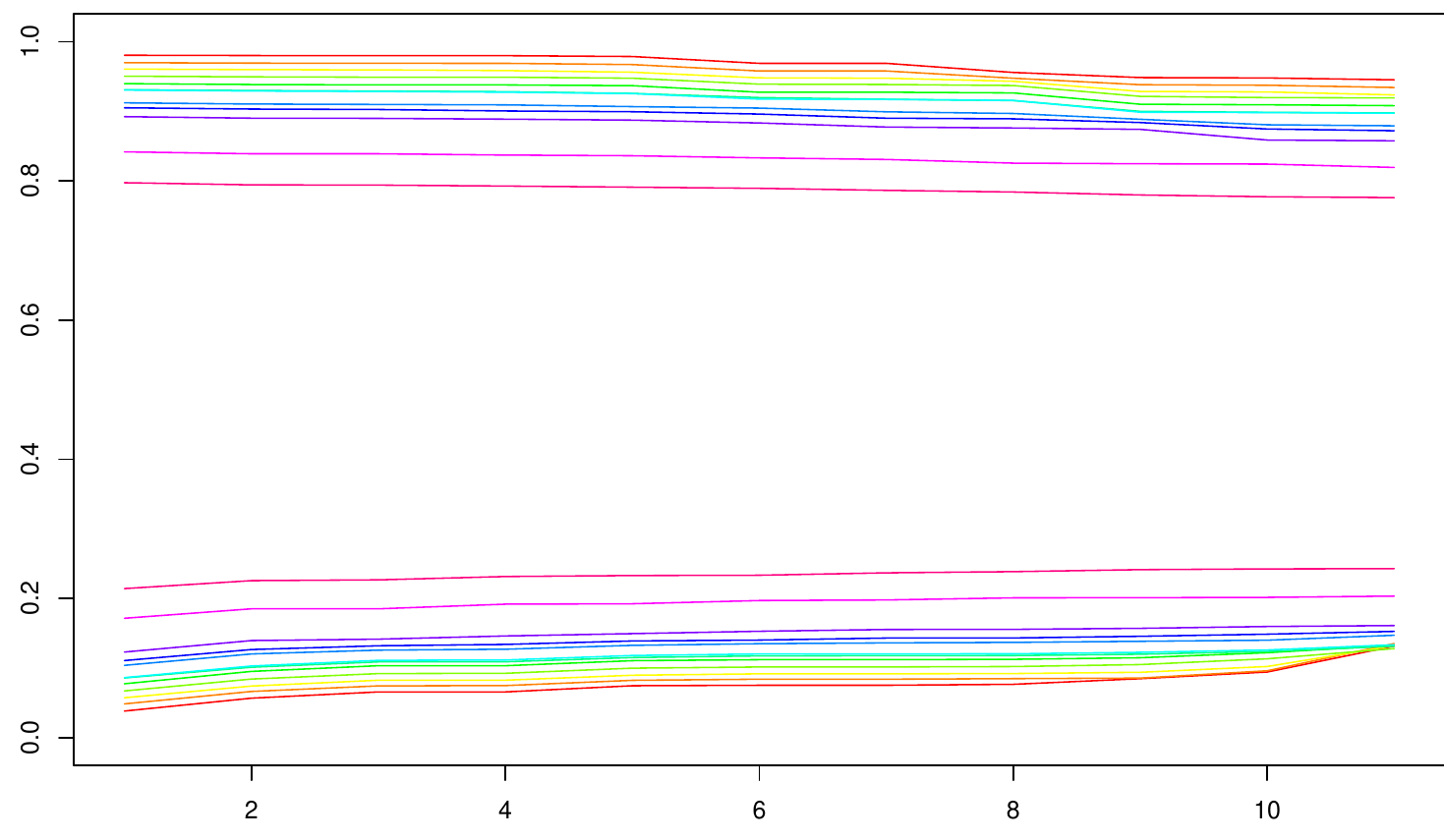}\includegraphics[scale=0.21]{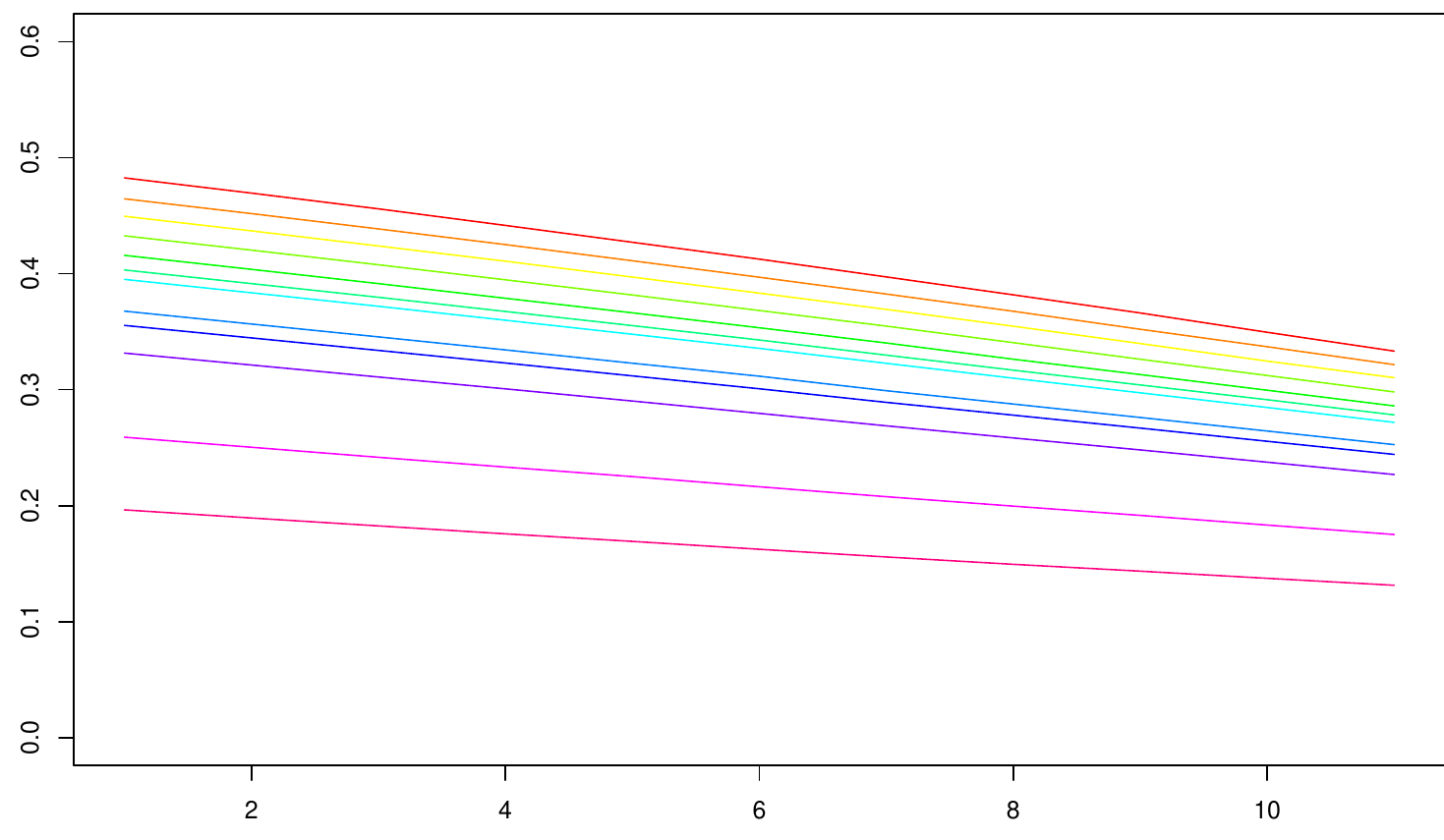}\includegraphics[scale=0.21]{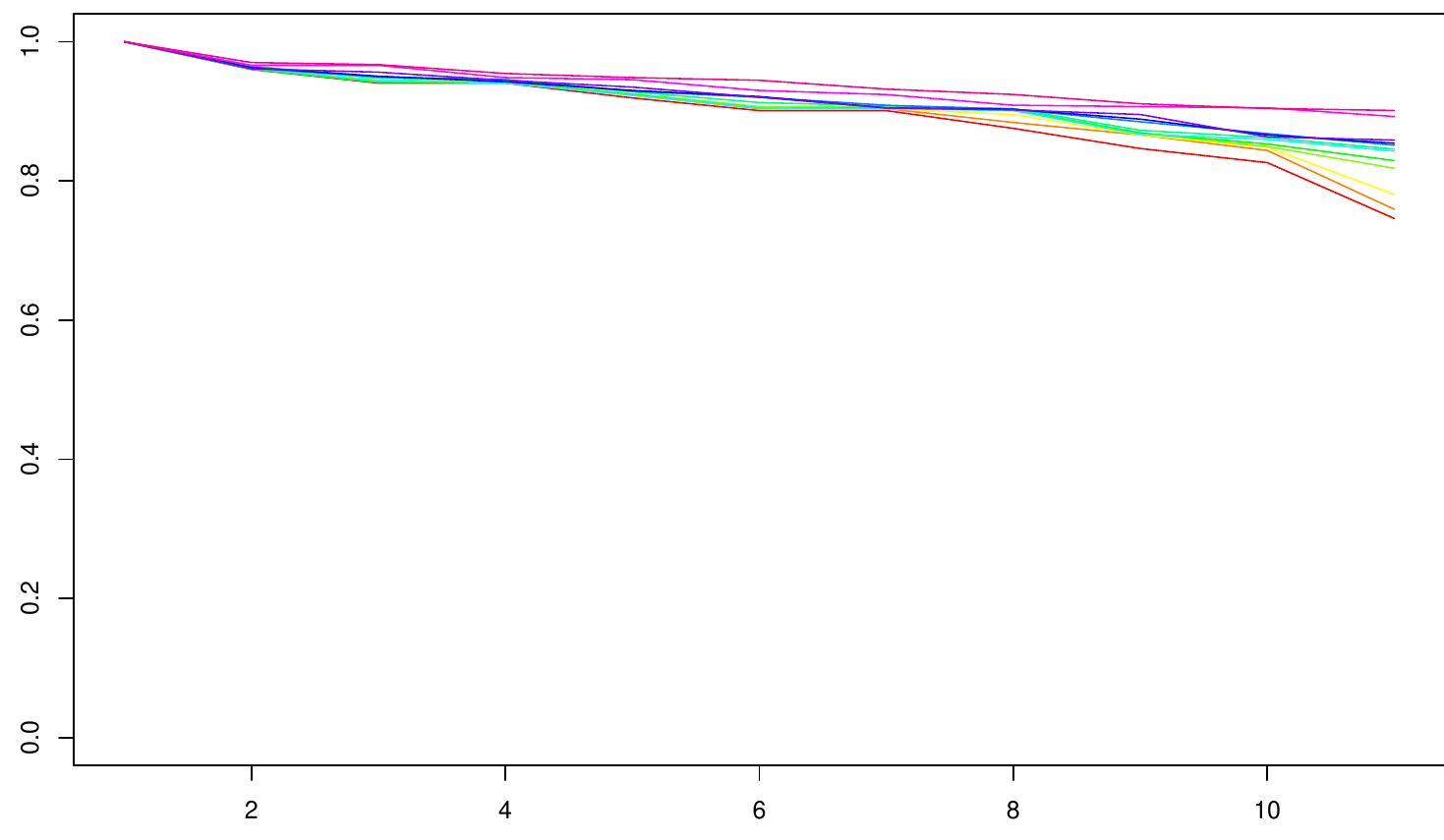}
\end{center}
\end{figure}

In the rest of Figure \ref{fig_transf}, we show the actual clusters in the MDS embedding obtained with $k$-means (column 2) and the same clusters in the original space (column 3), for some moderate intensities. For dissimilarity (\ref{k_add}) we take $V = 1.32$, for (\ref{k_mult}) $u = 1$ and for (\ref{k_local}) we use $u = 0.99$. A short remark is that a rotation of a MDS is a MDS, and that is the cause of the rotations that we see in column 2. Indeed, after MDS the geometry of the groups is not heavily modified, but at the same time some corrections to the proportions are achieved when clustering. These corrections appear very natural once we return to the original space.
\begin{table}[]
\caption{Proportion of class S = 1 in every group in different clustering procedures.} 
\label{tabla_bias}
\centering
\begin{tabular}{ccc|c|c|c|c|c|c|c|c|c|}
\cline{4-12}
\multicolumn{1}{l}{}                                    & \multicolumn{1}{l}{}                      & \multicolumn{1}{l|}{} & \multicolumn{9}{c|}{Proportion of squares in the group}                              \\ \cline{4-12} 
                                                        &                                           &                       & \multicolumn{2}{c|}{K = 2} & \multicolumn{3}{c|}{K = 3} & \multicolumn{4}{c|}{K = 4} \\ \hline
\multicolumn{1}{|c|}{\multirow{4}{*}{\footnotesize{$k$-means}}}          & \multicolumn{2}{c|}{\footnotesize{Unperturbed}}                             & 0.03         & 0.99        & 0.02    & 0.88    & 0.98   & 0.98  & 1.00 & 0.06 & 0.02 \\ \cline{2-12} 
\multicolumn{1}{|c|}{}                                  & \multicolumn{1}{c|}{\multirow{3}{*}{MDS}} & $\delta_1$            & 0.15         & 0.90        & 0.11    & 0.49    & 0.95   & 0.75  & 0.94 & 0.16 & 0.14 \\ \cline{3-12} 
\multicolumn{1}{|c|}{}                                  & \multicolumn{1}{c|}{}                     & $\delta_2$            & 0.09         & 0.96        & 0.43    & 0.04    & 0.97   & 0.08  & 0.97 & 0.85 & 0.06 \\ \cline{3-12} 
\multicolumn{1}{|c|}{}                                  & \multicolumn{1}{c|}{}                     & $\delta_4$            & 0.13         & 0.96        & 0.96    & 0.05    & 0.42   & 0.96  & 0.11 & 0.13 & 0.95 \\ \hline
\multicolumn{1}{|c|}{\multirow{4}{*}{\footnotesize{Complete Linkage}}} & \multicolumn{2}{c|}{\footnotesize{Unperturbed}}                            & 0.99         & 0.07        & 0.99    & 0.08    & 0.05   & 0.99  & 1.00 & 0.08 & 0.05 \\ \cline{2-12} 
\multicolumn{1}{|c|}{}                                  & \multicolumn{2}{c|}{$\delta_1$}                                   & 1.00         & 0.32        & 1.00    & 0.40    & 0.25   & 1.00  & 0.56 & 0.25 & 0.00 \\ \cline{2-12} 
\multicolumn{1}{|c|}{}                                  & \multicolumn{2}{c|}{$\delta_2$}                                   & 0.72         & 0.22        & 0.72    & 0.30    & 0.00   & 1.00  & 0.30 & 0.24 & 0.00 \\ \cline{2-12} 
\multicolumn{1}{|c|}{}                                  & \multicolumn{2}{c|}{$\delta_4$}                                   & 0.78         & 0.11        & 1.00    & 0.43    & 0.11   & 1.00  & 0.43 & 0.47 & 0.00 \\ \hline
\multicolumn{1}{|c|}{\multirow{3}{*}{\footnotesize{Ward's Method}}}     & \multicolumn{2}{c|}{\footnotesize{Unperturbed}}                            & 0.99         & 0.07        & 0.08    & 0.05    & 0.99   & 0.97  & 0.08 & 0.05 & 1.00 \\ \cline{2-12} 
\multicolumn{1}{|c|}{}                                  & \multicolumn{2}{c|}{$\delta_1$}                                   & 0.11         & 0.78        & 0.54    & 0.98    & 0.11   & 0.54  & 0.18 & 0.00 & 0.98 \\ \cline{2-12} 
\multicolumn{1}{|c|}{}                                  & \multicolumn{2}{c|}{$\delta_2$}                                   & 0.99         & 0.18        & 0.37    & 0.99    & 0.03   & 0.07  & 0.00 & 0.37 & 0.99 \\ \hline
\end{tabular}
\end{table}

For the same values as the previous paragraph, we present Table \ref{tabla_bias}, where we look for 2, 3 and 4 clusters with MDS and $k$-means, but also using the approximation-free complete linkage hierarchical clustering and our Ward's-like method. Since we are applying a small perturbation, we see some, but not a drastic, improvement in the heterogeneity of the groups. We also see that the more clusters we want the smaller the improvement. We stress that we can produce some improvements in diversity while modifying slightly the geometry of the data. This is a desirable situation when a lot of relevant information is codified in the geometry of the data. We also notice that we produce a partition that is almost diversity preserving with a stronger perturbation as shown in the last row of Table \ref{Table_Fig_1_comparison}.

In Figure \ref{fig_transf_pert} we study the behaviour of the same set-up as in Figure \ref{fig_transf} but with perturbation on the sensitive attribute labels. We attempt to empirically estimate a kind of break point behaviour with respect to the sensitive class, i.e., the maximal amount of perturbation in the original labels that our methods can sustain before returning no gains in diversity. The results can be interpreted as a measure of robustness against perturbations in the sensitive class, but also as a further study of the behaviour of attraction-repulsion clustering. Specifically, we select uniformly without replacement the same number of points belonging originally to class $S=1$ and $S=-1$ and we switch their labels. We take 20 samples of this kind, perform our methods, and average the resulting proportions, silhouette indexes and ARIs. The amount of perturbation goes from changing labels of $1,2,\dots,10,15$ to $20$ points in each sensitive class. Red represents the smallest perturbation in the labels and the perturbation increases as we get closer to magenta, following the rainbow colour convention.  We observe that diversity gains diminish as we increase the number of swapped labels, becoming almost non-existent when we change 20 percent of the labels in each sensitive class. At that level, essentially all dissimilarities return a clustering very similar to the $k$-means partition in the original space represented in Figure \ref{fig_k_means_pert}. This clustering does not fulfil the diversity preserving condition (\ref{fair_constraints}), however it can be easily argued that clusters capture spatial information well and are diverse.

These observations about diversity are similar to the familiar notion that fairness definitions can differ sharply, together with the fact that adequate cluster structure can depend on the particular goals of the analysis.

\begin{figure}
\caption{A partition obtained with $k$-means when randomly swapping the sensitive labels of 20 points in each original class.}
\label{fig_k_means_pert}
\begin{center}
\includegraphics[scale=0.33]{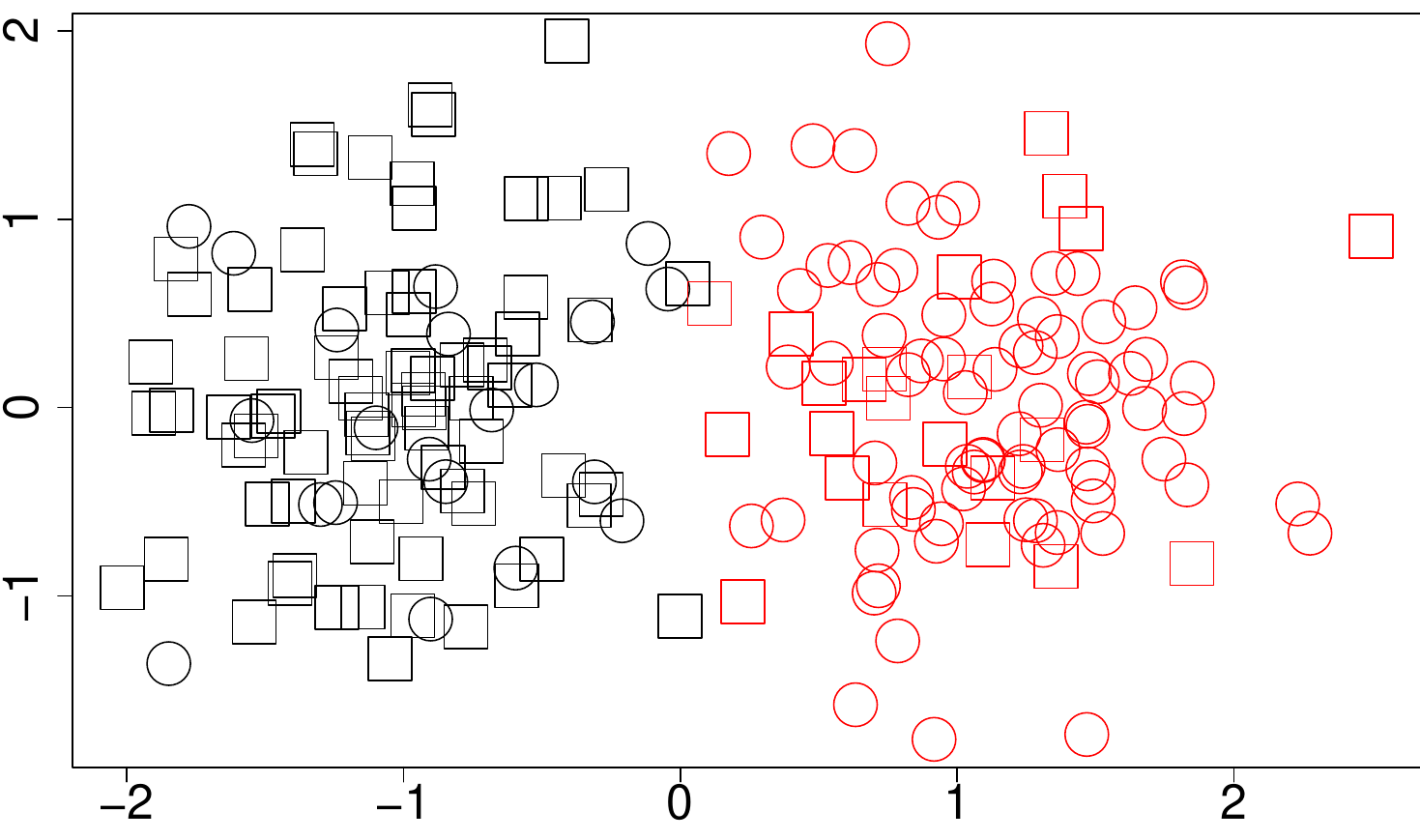}
\end{center}
\end{figure}

\subsubsection{Diversity in a non-linear setting}\label{k_trick_example}
We explore the methods introduced in Section \ref{sec_kkmeans}. With this example we want to stress that our methodology is also well-suited for non-linear clustering structure. Additionally, we want to emphasize that good partition properties and diversity may not be enough to capture some other relevant information in the data, which in this example is represented by the geometric structure.

We consider the data in the top-left image of Figure \ref{fig_k_trick}. These data have a particular geometrical shape and are split into two groups. There is an inside ring of squares, a middle ring of circles, and then an outer ring of squares. There are 981 observations, and the proportions of the classes are approximately 3 to 1 (circles are 0.246 of the total data).

It is natural to apply to the original data some clustering procedure as $k$-means or a robust extension as tclust (deals with groups with different proportions and shapes and with outliers \cite{tclust}). Looking for two clusters, we are far from capturing the geometry of the groups, but the clusters have proportions of the classes that are like the total proportion, hence, the diversity preserving condition (\ref{fair_constraints}) is satisfied, and there is a nice cluster structure as measured by average silhouette index. Indeed, this is what we see in Figure \ref{fig_k_trick} middle-left when we apply $k$-means to the original data.

On the other hand, the kernel trick is convenient in this situation.  In this toy example it is easy to select an appropriate kernel function, for instance, $\kappa (x,y) = x_1^2y_1^2 + x_2^2y_2^2$, which corresponds to a transformation $\phi((x_1,x_2)) = (x_1^2,x_2^2)$. Indeed, this kernel produces linear separation between the groups. The data in the transformed space is depicted in the top-right of Figure \ref{fig_k_trick}. 
Our adaptation to the kernel trick uses $d_\kappa$ as defined in (\ref{d_kappa}) and dissimilarity (\ref{k_local}) in the form
\begin{equation}
\label{k_loc_kt}
\delta_{\kappa,4}((X_1,S_1),(X_2,S_2)) = \big(1 + \mathrm{sign}(S_1'VS_2)u(1-e^{-v(S_1'VS_2)^2})e^{-wd_\kappa(X_1,X_2)}\big)d_\kappa(X_1,X_2),
\end{equation}
for $X_1,X_2$ in the original two-dimensional space, as described in Section \ref{sec_kkmeans}.


Considering the discussion at the end of Section \ref{section_mds} and Section \ref{section_parameters} we use dissimilarity (\ref{k_loc_kt}) with $S_1,S_2\in \{(1,0),(0,1)\}$. In our setting circles are labelled as $(1,0)$ and squares as $(0,1)$. Now if we fix $u = 0$, use (\ref{k_loc_kt}) to calculate the dissimilarity matrix $\Delta$ and use MDS, essentially, we will be in the space depicted top-right on Figure \ref{fig_k_trick}. Looking for two clusters with tclust, allowing groups with different sizes, we get the result depicted middle-right in Figure \ref{fig_k_trick}. We have captured the geometry of the clusters but the proportions of the class $S$ are not the best, as seen in row 1 columns 2 and 3 of Table  \ref{table_k_trick} (ideally they should be close to 0.754).
\begin{figure}
\caption{Top row: data in the original space (left) and after transformation $\phi$ (right). Middle row: $k$-means in the original space (left) and tclust applied in the transformed space and plotted in the original one (right). Bottom row: tclust after diversity enhancing corrections, corresponding to the case shown in the last row of Table \ref{table_k_trick}, applied in the transformed space (left) and represented in the original space (right).}
\label{fig_k_trick}
\begin{center}
\includegraphics[scale=0.32]{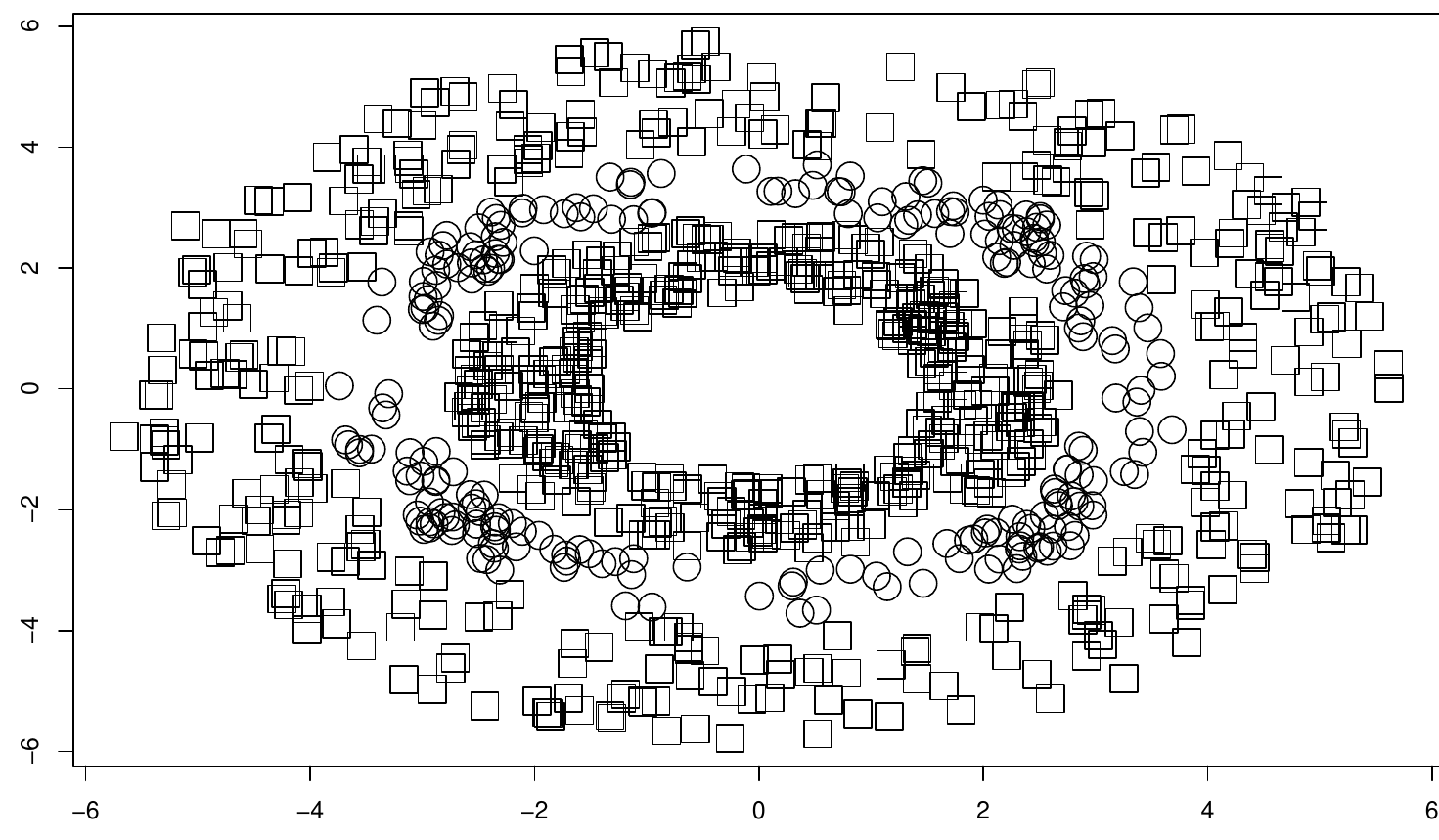}\includegraphics[scale=0.32]{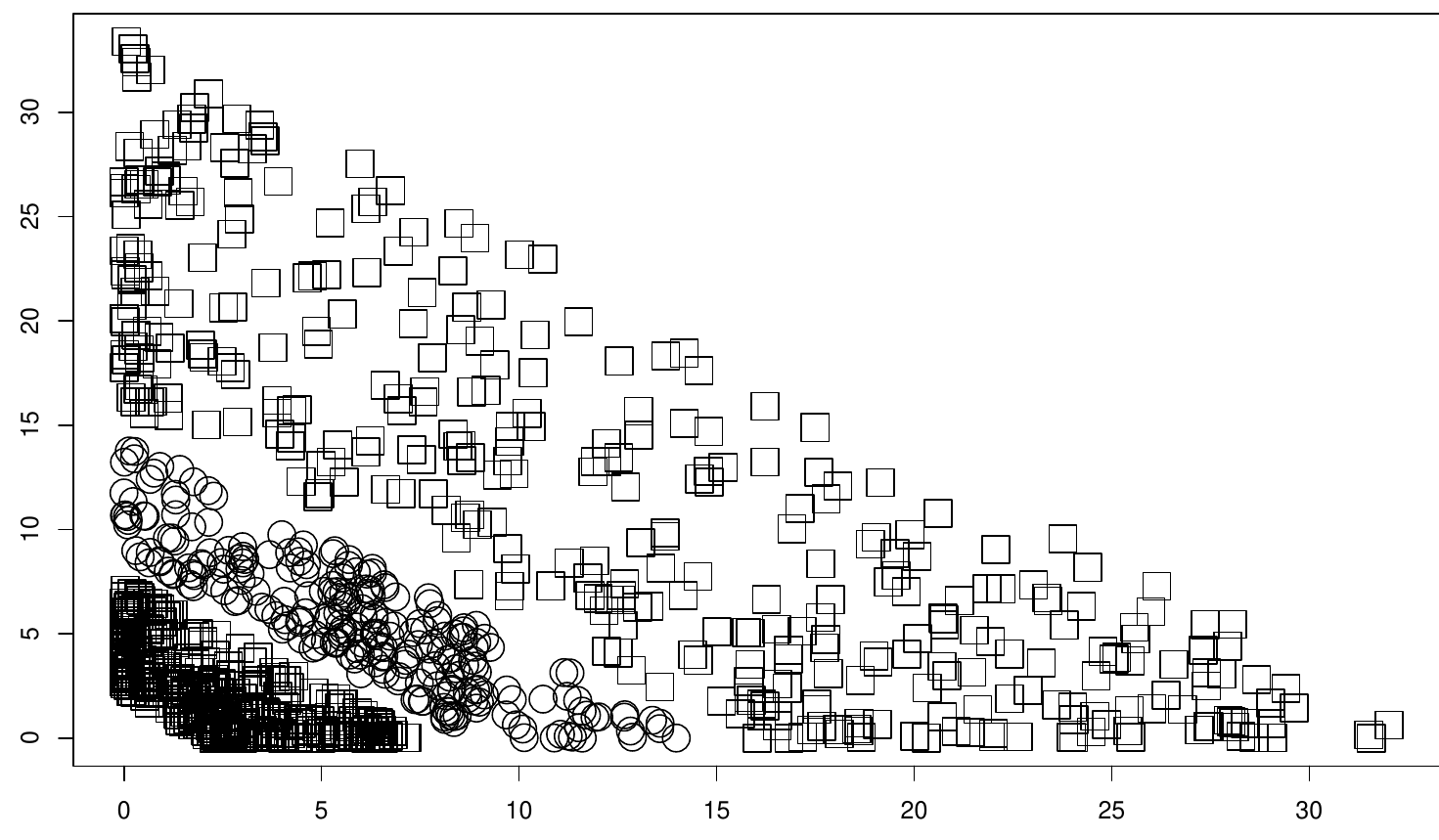}
\includegraphics[scale=0.32]{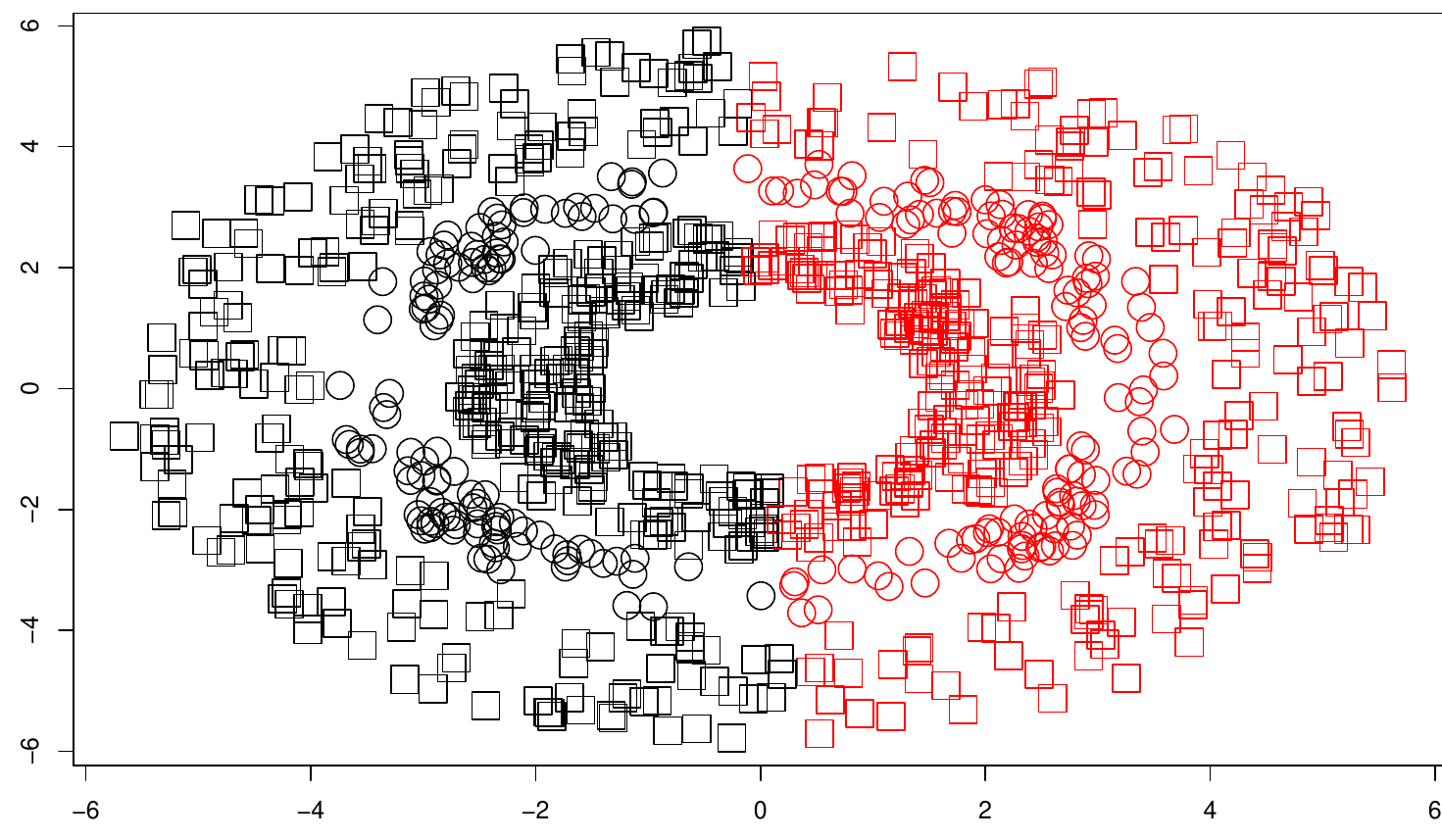}\includegraphics[scale=0.32]{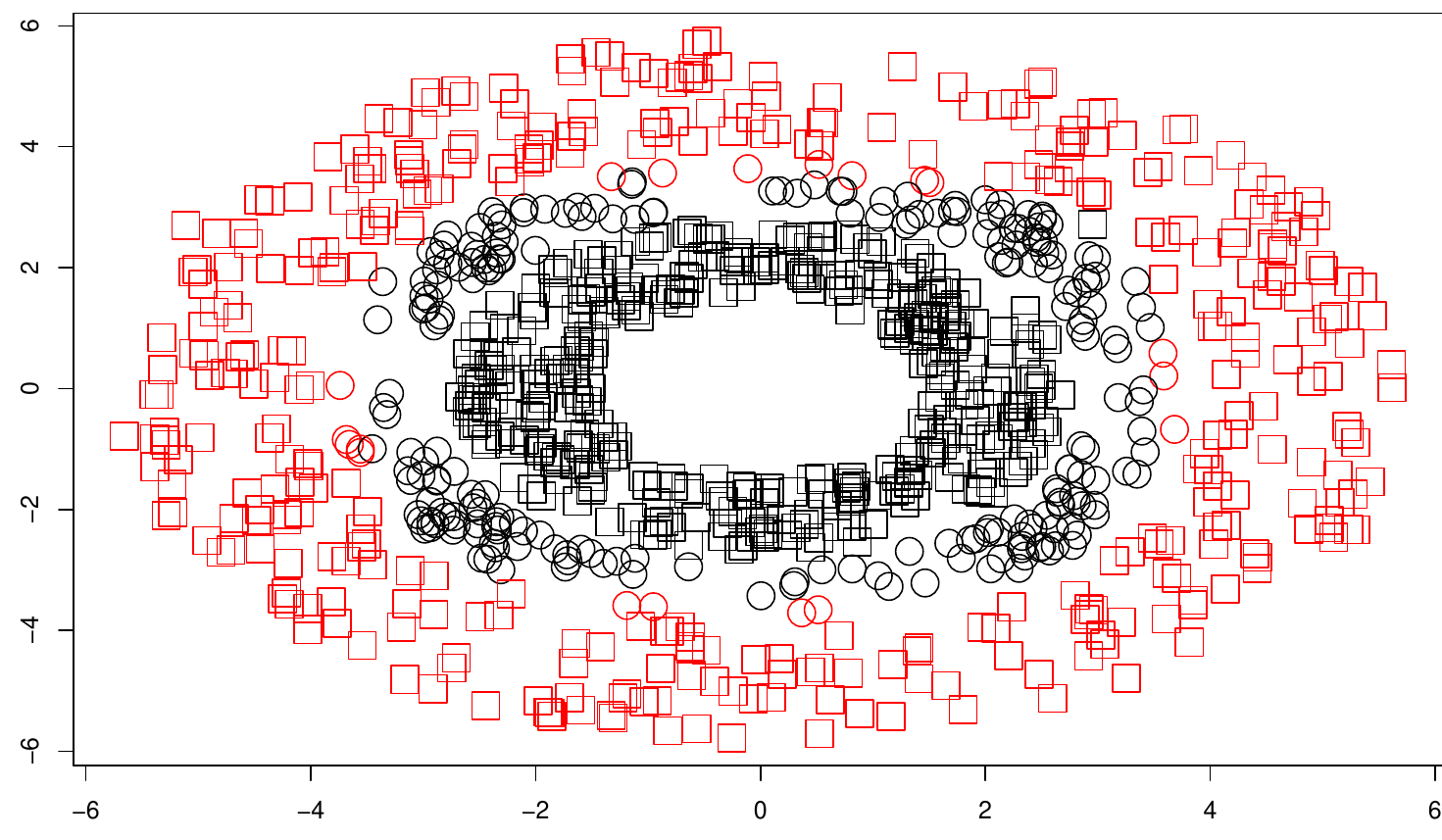}
\includegraphics[scale=0.32]{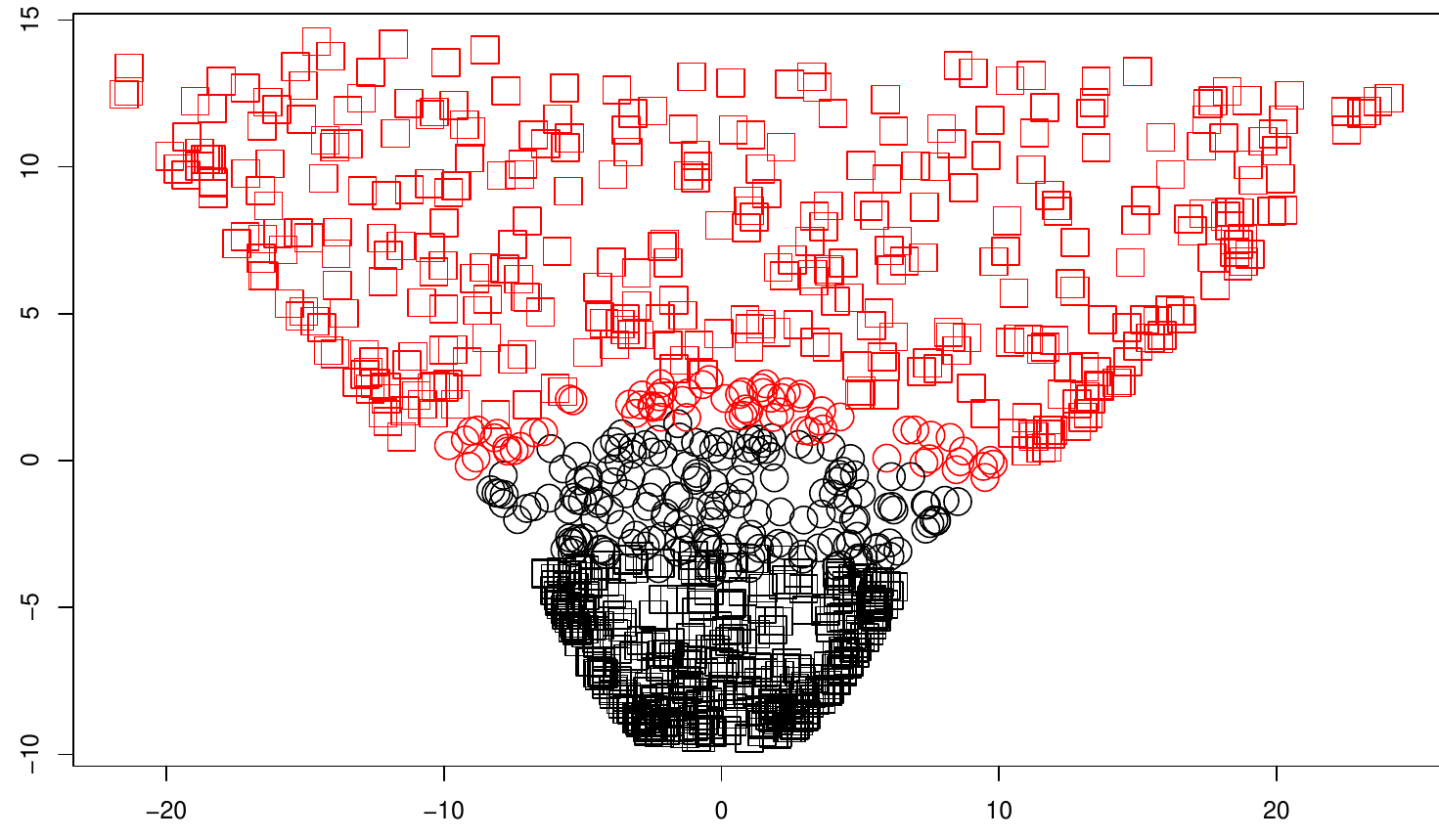}\includegraphics[scale=0.32]{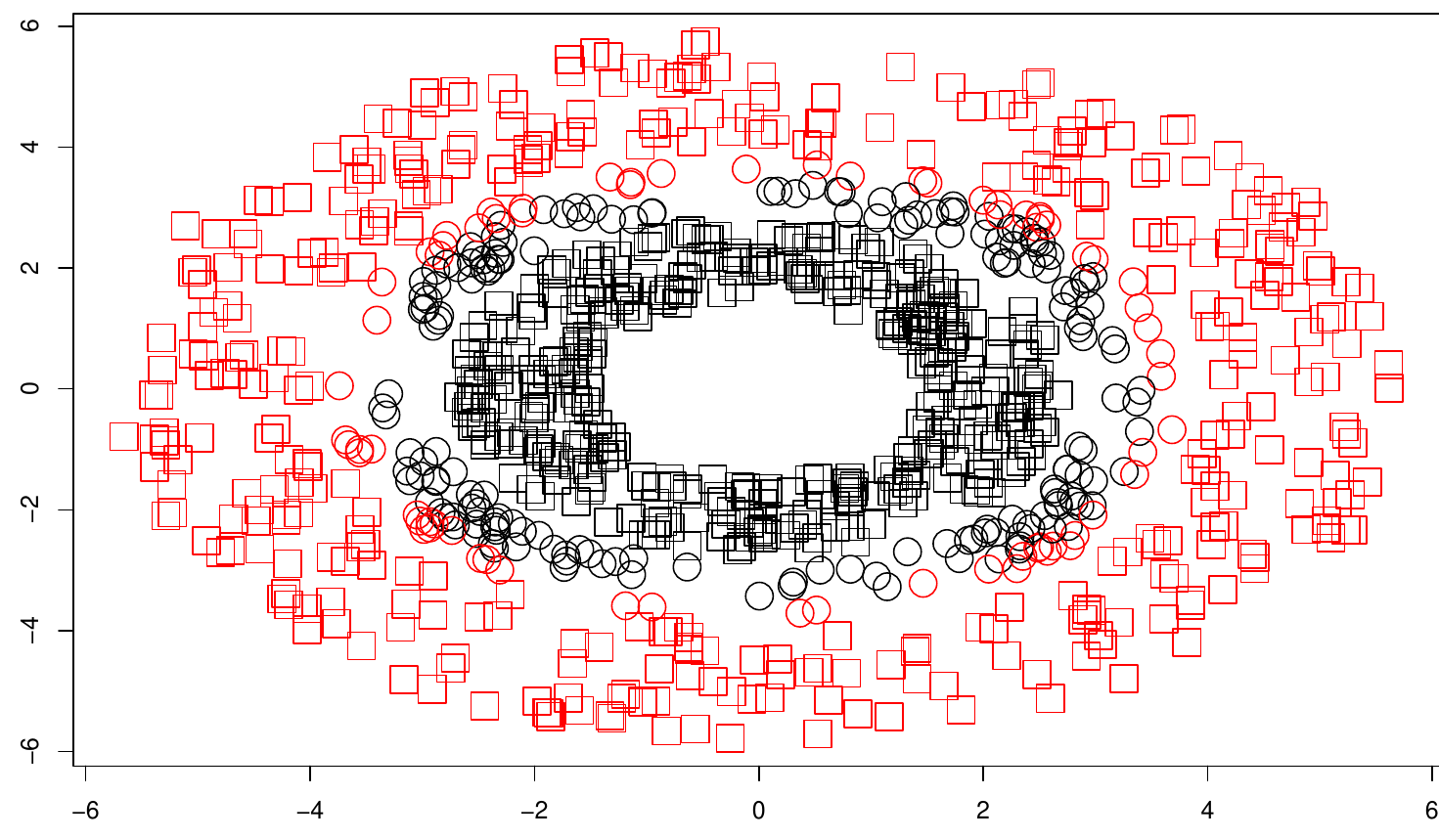}
\end{center}
\end{figure}
To gain diversity in what is referred to as cluster 2 in Table \ref{table_k_trick} (in red in the plots), we vary the intensity $u$ of our local dissimilarity, with the other parameters set as indicated in Table \ref{table_k_trick}. We see that as we increase the intensity of the interactions we gain heterogeneity in cluster 2, and both proportions come closer to the total proportion 0.754 (columns 2-3). Again, this is achieved without destroying the geometry of the original classes after the MDS, as seen in the small variation of the average silhouette index in columns 4-5.

We plot the best performance, given by $u = 0.98$, after MDS in bottom-left and in the original space in bottom-right of Figure \ref{fig_k_trick}. Clearly, we have been able to capture the geometry of the groups and to produce relatively divers clusters.

\begin{table}[]
\caption{Effect of varying the intensity $u$ of the local dissimilarity (\ref{k_loc_kt}), for fixed $V = ((1,-1)'|(-1,0)')$, $v = 20$ and $w = 0.05$. First two columns contain the proportion of points with $S = (0,1)$ in the clusters found with tclust in the transformed space. Last two columns show the silhouette of the original classes in the MDS.}
\label{table_k_trick}
\centering
\begin{tabular}{|c|c|c|c|c|}
\hline
u     & Prop. in cluster 1 & Prop. in cluster 2 & Silhouette for (0, 1) & Silhouette for (1,0) \\ \hline
0.000 & 0.629              & 0.950              & -0.247               & 0.502               \\ \hline
0.098 & 0.629              & 0.950              & -0.245               & 0.502               \\ \hline
0.196 & 0.629              & 0.950              & -0.243               & 0.499               \\ \hline
0.294 & 0.630              & 0.948              & -0.241               & 0.495               \\ \hline
0.392 & 0.631              & 0.945              & -0.239               & 0.491               \\ \hline
0.490 & 0.631              & 0.943              & -0.237               & 0.486               \\ \hline
0.588 & 0.631              & 0.943              & -0.235               & 0.481               \\ \hline
0.686 & 0.631              & 0.943              & -0.234               & 0.476               \\ \hline
0.784 & 0.630              & 0.946              & -0.232               & 0.471               \\ \hline
0.882 & 0.672              & 0.863              & -0.231               & 0.467               \\ \hline
0.980 & 0.681              & 0.849              & -0.229               & 0.465               \\ \hline
\end{tabular}
\end{table}

\subsection{Comparison with fair clustering through fairlets}\label{section_comparison}
In this section we present a comparison of the results of our diversity enhancing clustering methods with results obtained by implementing, in Python and R, the fair clustering procedure introduced in \cite{fair_k_means} based on fairlets decomposition. We recall that this clustering method is an example of what we call diversity preserving clustering. Since our examples are concerned with two values for the protected class it is justified to use \cite{fair_k_means} for comparison since it is well suited for this situation. Our implementation of the case when the size of both protected classes is the same, which reduces to an assignment problem, is implemented using the function \textit{max\_bipartite\_matching} of the package \emph{igraph} in R. In the case of different sizes, we must solve a min cost flow problem as stated in \cite{fair_k_means}, which can be done in Python with the function \textit{min\_cost\_flow} of the package \textit{networkx} (also it can be solved with a \textit{min\_cost\_flow} solver in \textit{ortools}).

When implemented as a min cost flow problem, Chierichietti's et al. methodology has two free parameters, $t'$ and $\tau$. First, for a partition $\mathcal{C}$ of data $X$, we have that $1/t'\leq \mathrm{balance}(\mathcal{C})\leq \mathrm{balance}(X)$ and hence $t'$ controls the lower bound of the diversity of the partition. Recall the definition of balance in (\ref{balance}). Second, $\tau$ is a free parameter related to the distance between points, and has a defined lower limit given by the maximum distance taken from the set formed by the distance between each point in one class and its respective closest point in the other class. For more information we refer to \cite{fair_k_means}.

We start with the data used for the example studied in Figure \ref{fig_transf}. We will address $k$-median clustering, for which \cite{fair_k_means} has a fair implementation. Since the data has two groups of the same size, we can solve an assignment problem or use a min cost flow problem. For the min cost flow problem, we have the set of parameters $\{(t',\tau)\}$ with $t' = 2,3,4$ and $\tau = \infty, 2.42$. Values for $\tau$ represent no locality and maximum locality. As comparison we will use $k$-median clustering after perturbing the data with $\delta_1$ with parameters $(U = 0, V = 4.4)$ and doing a MDS embedding. Results are shown in Table \ref{Table_Fig_1_comparison}. Since the data is in one-to-one correspondence between the two classes, both the alignment solution and the different min cost flow solutions give balance $=1$, hence the diversity preserving condition (\ref{fair_constraints}) is fulfilled. Our method gives balance close to 1 but does not fulfil (\ref{fair_constraints}). However, the average silhouette index of our method is higher, which means that clusters are more identifiable and compact, and the $k$-median objective function is also lower, and hence better. A plot of some of the different clusterings can be seen in Figure \ref{fig_comparison_1}.

\begin{table}[]
\caption{Rows 1-7 are implementations of fair clustering with fairlets while the last row is $k$-median clustering after using $\delta_1$, with $(U = 0, V = 4.4)$, and MDS embedding.}\label{Table_Fig_1_comparison}
\begin{center}
\begin{tabular}{c|c|c|c|}
\cline{2-4}
\multicolumn{1}{l|}{}                      & Balance & Average Silhouette & $k$-median Objective \\ \hline
\multicolumn{1}{|c|}{Assignment Problem}  & 1       & -0.004           & 463           \\ \hline
\multicolumn{1}{|c|}{$(2, \infty)$}        & 1       & -0.006            & 469           \\ \hline
\multicolumn{1}{|c|}{$(2, 2.42)$}      & 1       & -0.001            & 471           \\ \hline
\multicolumn{1}{|c|}{$(3, \infty)$}        & 1       & -0.002            & 466           \\ \hline
\multicolumn{1}{|c|}{$(3, 2.42)$}      & 1       & -0.004            & 468           \\ \hline
\multicolumn{1}{|c|}{$(4, \infty)$}        & 1       & -0.005            & 466           \\ \hline
\multicolumn{1}{|c|}{$(4, 2.42)$}      & 1       & -0.001            & 464          \\ \hline
\multicolumn{1}{|c|}{Attraction-repulsion} & 0.89 & 0.2             & 440           \\ \hline
\end{tabular}
\end{center}
\end{table}

\begin{figure}
\caption{Left: clusters obtained by fair $k$-median as an assignment problem(\cite{fair_k_means}). Middle: clusters obtained by fair $k$-median as min cost flow problem with $t' = 4,\tau = 2.42$(\cite{fair_k_means}). Right: clusters obtained by attraction-repulsion clustering with dissimilarity (\ref{k_add}) and MDS, using $k$-median.}
\label{fig_comparison_1}
\begin{center}
\includegraphics[scale=0.26]{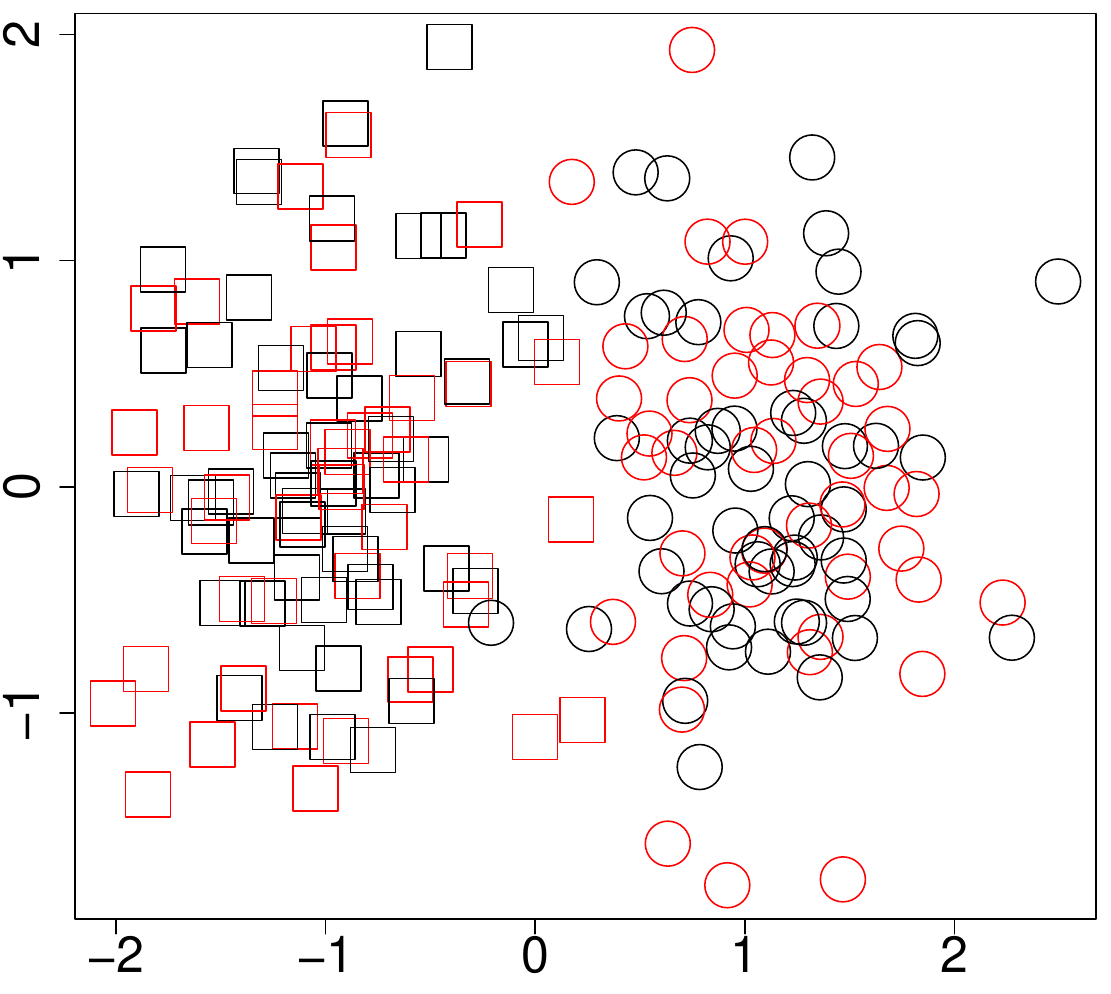}
\includegraphics[scale=0.26]{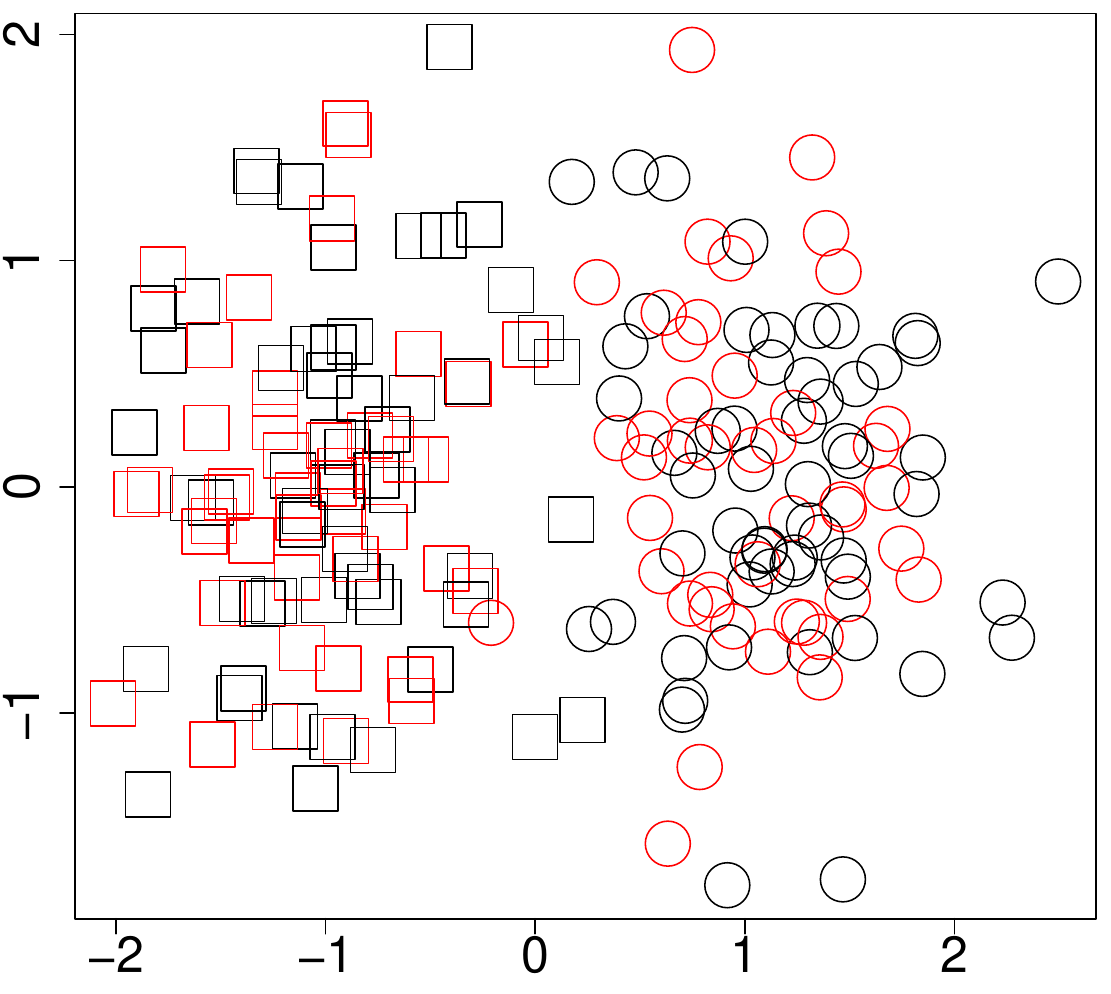}\includegraphics[scale=0.26]{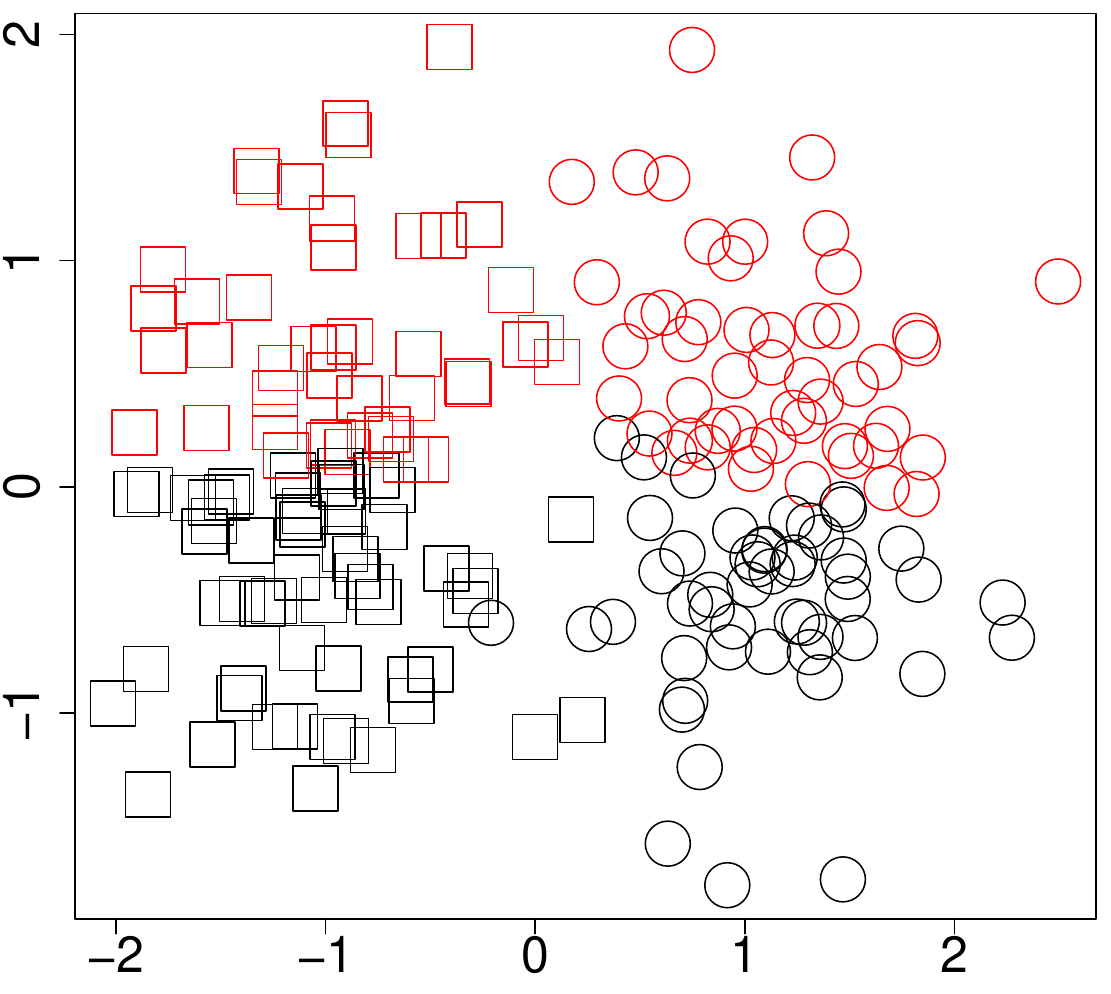}
\end{center}
\end{figure}

Our next comparison is for the data used in Figure \ref{fig_k_trick}. We stress that for fairlets we are working with the data after the implicit kernel transformation, i.e., the data shown top-right in Figure \ref{fig_k_trick}, and therefore results are comparable with attraction-repulsion clustering in the kernel trick setting. Results are shown in Table \ref{Table_Fig_4_comparison}. We see that consistently the fair $k$-median implementation gives balance values very close to $241/740 \approx 0.3257$, showing a high degree of diversity in the clusters. Our method gives a lower balance value; hence groups are less diverse, but as we see from the silhouette and $k$-median objective function values, the groups are more identifiable and more compact. Even more, comparing the middle-right of Figure \ref{fig_k_trick} and left of Figure \ref{fig_comparison_2}, we see that our procedure is even able to capture the underlying geometry of the data.
\begin{figure}
\caption{Left: clustering obtained using fair $k$-median as min cost flow problem with $t' = 4,\tau = 19.62$ (\cite{fair_k_means}).}
\label{fig_comparison_2}
\begin{center}
\includegraphics[scale=0.26]{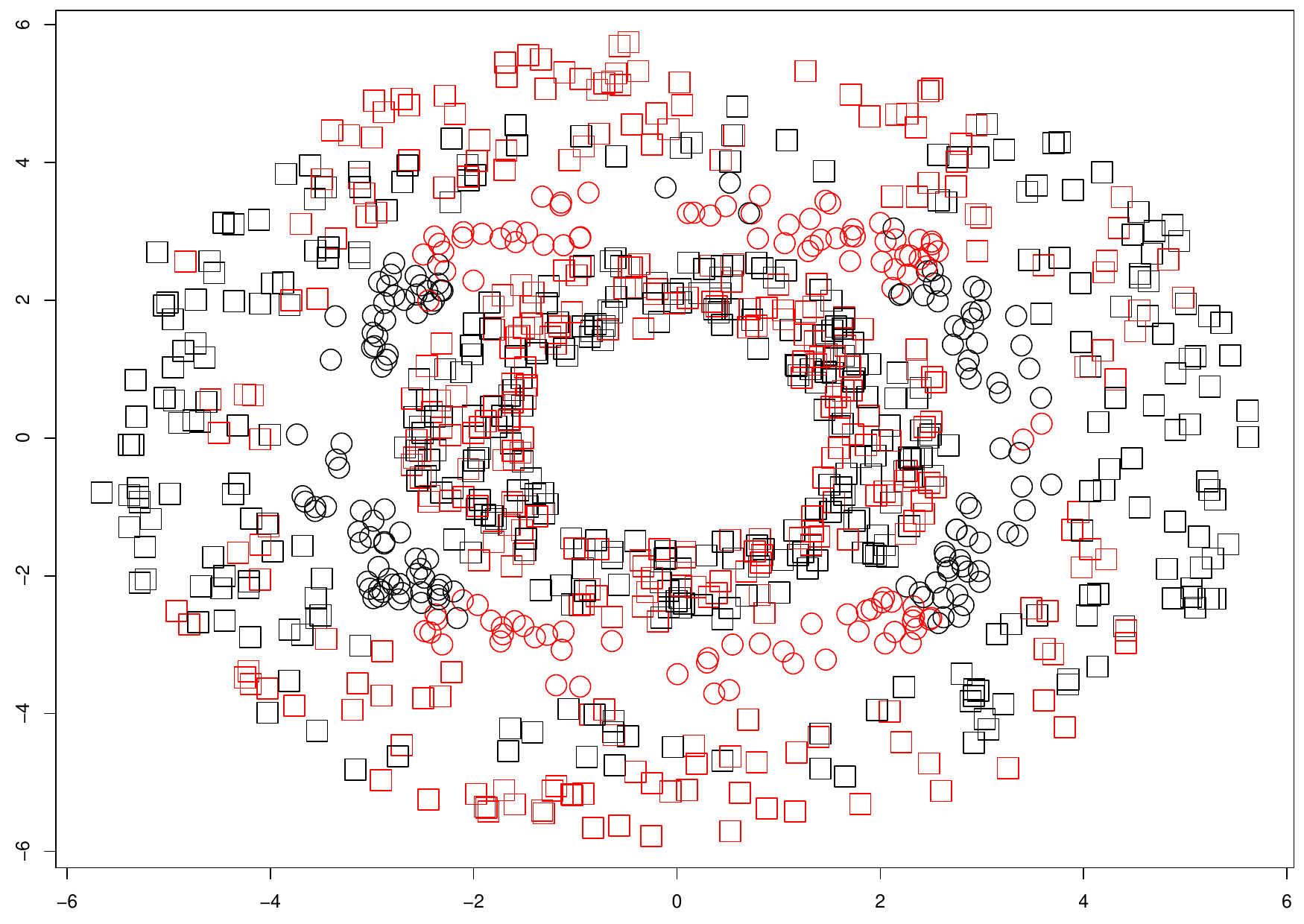}
\end{center}
\end{figure}

\begin{table}[]
\caption{Rows 1-6 are implementations of fair (diversity preserving) clustering with fairlets as a min cost flow problem with different parameters. Last row is a tclust clustering after using $\delta_4$, with $(u = 0.98, v = 20, w = 0.05, V = ((1,-1)'|(-1,0)')$, and MDS embedding.}\label{Table_Fig_4_comparison}
\begin{center}
\begin{tabular}{c|c|c|c|}
\cline{2-4}
\multicolumn{1}{l|}{}                      & Balance & Average Silhouette & $k$-median Objective \\ \hline
\multicolumn{1}{|c|}{$(4, \infty)$}        & 0.316  & 0.027             & 14613          \\ \hline
\multicolumn{1}{|c|}{$(4, 19.62)$}      & 0.325  & 0.069             & 14975           \\ \hline
\multicolumn{1}{|c|}{$(5, \infty)$}        & 0.322  & 0.036             & 14819           \\ \hline
\multicolumn{1}{|c|}{$(5, 19.62)$}      & 0.302  & 0.064             & 14854           \\ \hline
\multicolumn{1}{|c|}{$(6, \infty)$}        & 0.305  & 0.026             & 14683           \\ \hline
\multicolumn{1}{|c|}{$(6, 19.62)$}      & 0.299  & 0.051             & 14792           \\ \hline
\multicolumn{1}{|c|}{Attraction-repulsion} & 0.189  & 0.4             & 6751            \\ \hline
\end{tabular}
\end{center}
\end{table}

\begin{table}[]
\caption{Rows 1-8 are implementations of fair clustering with fairlets as a min cost flow problem with different parameters. Rows 9-10 are the best results we have obtained with attraction-repulsion clustering. The last row represents the values for a $k$-means clustering in the original data.}\label{Tab_Ricci_comparison}
\begin{footnotesize}
\begin{center}
\begin{tabular}{c|c|c|c|c|c|c|}
\cline{2-7}
\multicolumn{1}{l|}{}                             & \multicolumn{3}{c|}{k = 2}                        & \multicolumn{3}{c|}{k = 4}                        \\ \cline{2-7} 
\multicolumn{1}{l|}{}                             & Balance & Aver. Silhouette & $k$-median Objec. & Balance & Aver. Silhouette & $k$-median Objec. \\ \hline
\multicolumn{1}{|c|}{$(2, \infty)$}               & 0.735  & 0.01             & 4060            & 0.588  & -0.04            & 4038            \\ \hline
\multicolumn{1}{|c|}{$(2, 18.61958)$}             & 0.707  & 0.21             & 3683            & 0.667  & 0.06           & 3658            \\ \hline
\multicolumn{1}{|c|}{$(3, \infty)$}               & 0.682  & 0.07             & 3881            & 0.655  & -0.04            & 4170            \\ \hline
\multicolumn{1}{|c|}{$(3, 18.61958)$}             & 0.682  & 0.17             & 3546            & 0.625  & 0.06             & 3437            \\ \hline
\multicolumn{1}{|c|}{$(4, \infty)$}               & 0.650  & 0.07             & 3949            & 0.652  & -0.06           & 4160            \\ \hline
\multicolumn{1}{|c|}{$(4, 18.61958)$}             & 0.700  & 0.16             & 3590            & 0.643  & 0.07             & 3547            \\ \hline
\multicolumn{1}{|c|}{$(5, \infty)$}               & 0.674  & 0.06             & 3922            & 0.684  & -0.05            & 3836            \\ \hline
\multicolumn{1}{|c|}{$(5, 18.61958)$}             & 0.615  & 0.23             & 3475            & 0.563  & 0.06             & 3470           \\ \hline
\multicolumn{1}{|c|}{A-R Ward}   & 0.613  & 0.30             & 3866            & 0.357  & 0.28             & 3356            \\ \hline
\multicolumn{1}{|c|}{A-R Kmeans} & 0.524  & 0.40             & 3420            & 0.478  & 0.29             & 3123            \\ \hline
\multicolumn{1}{|c|}{Kmeans} & 0.341  & 0.42             & 3265            & 0.111  & 0.39             & 2988            \\ \hline
\end{tabular}
\end{center}
\end{footnotesize}
\end{table}

\begin{figure}
\caption{Diversity enhancing, diversity preserving and standard clustering for the Ricci dataset, where  circles represent not white individuals and squares represent white individuals. First row: $k$-means for 2 and 4 clusters in the  unperturbed (original) data. Second row: $k$-means for 2 and 4 clusters in the MDS setting with $\delta_1$. Third row: Ward's method for 2 and 4 clusters with $\delta_2$. Fourth row: fair $k$-median as in \cite{fair_k_means} with $t'= 5,\tau = 18.62$ for 2 and 4 clusters.}
\label{fig_plot_soldiers}
\begin{center}
\includegraphics[scale=0.32]{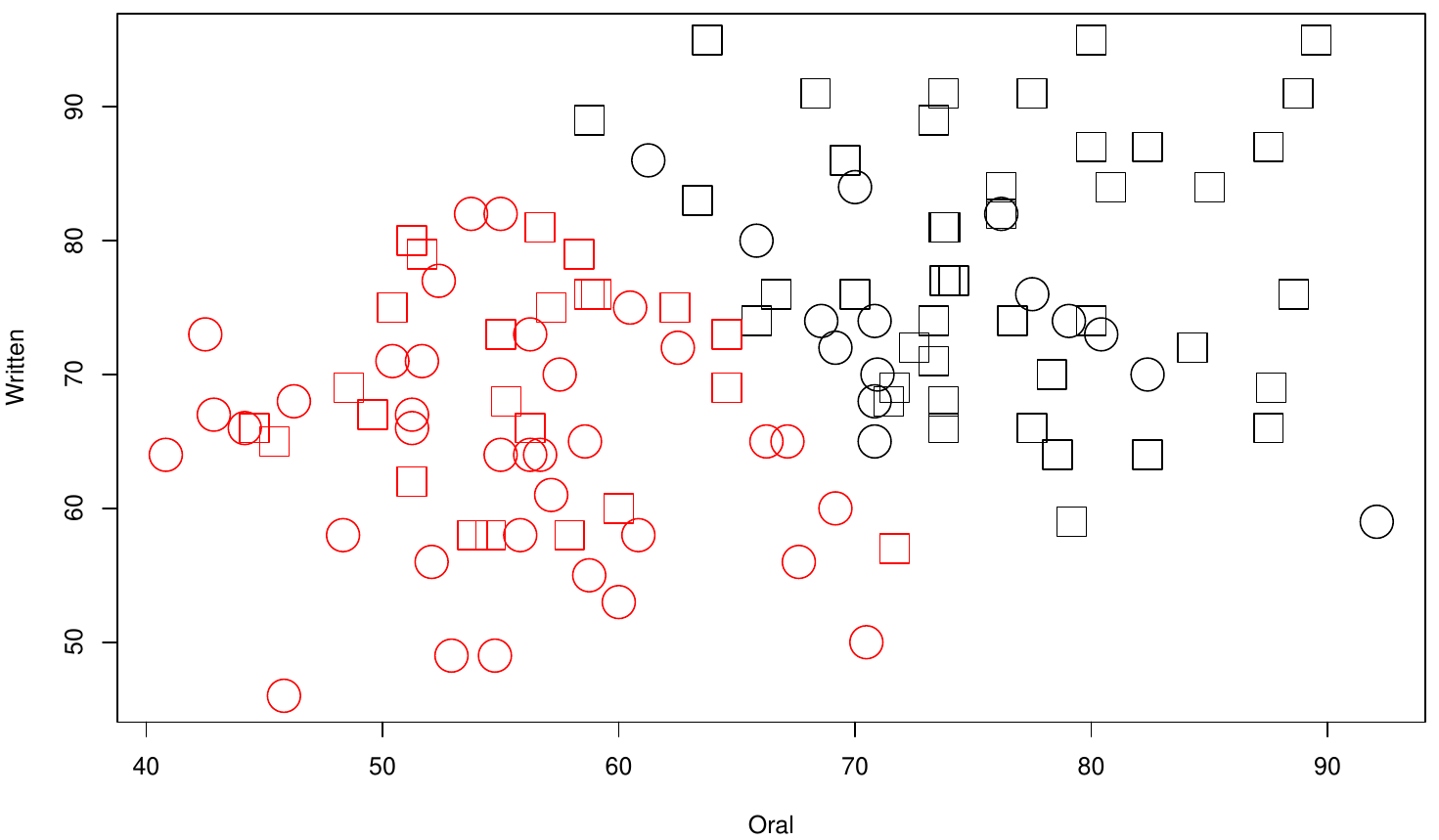}\includegraphics[scale=0.32]{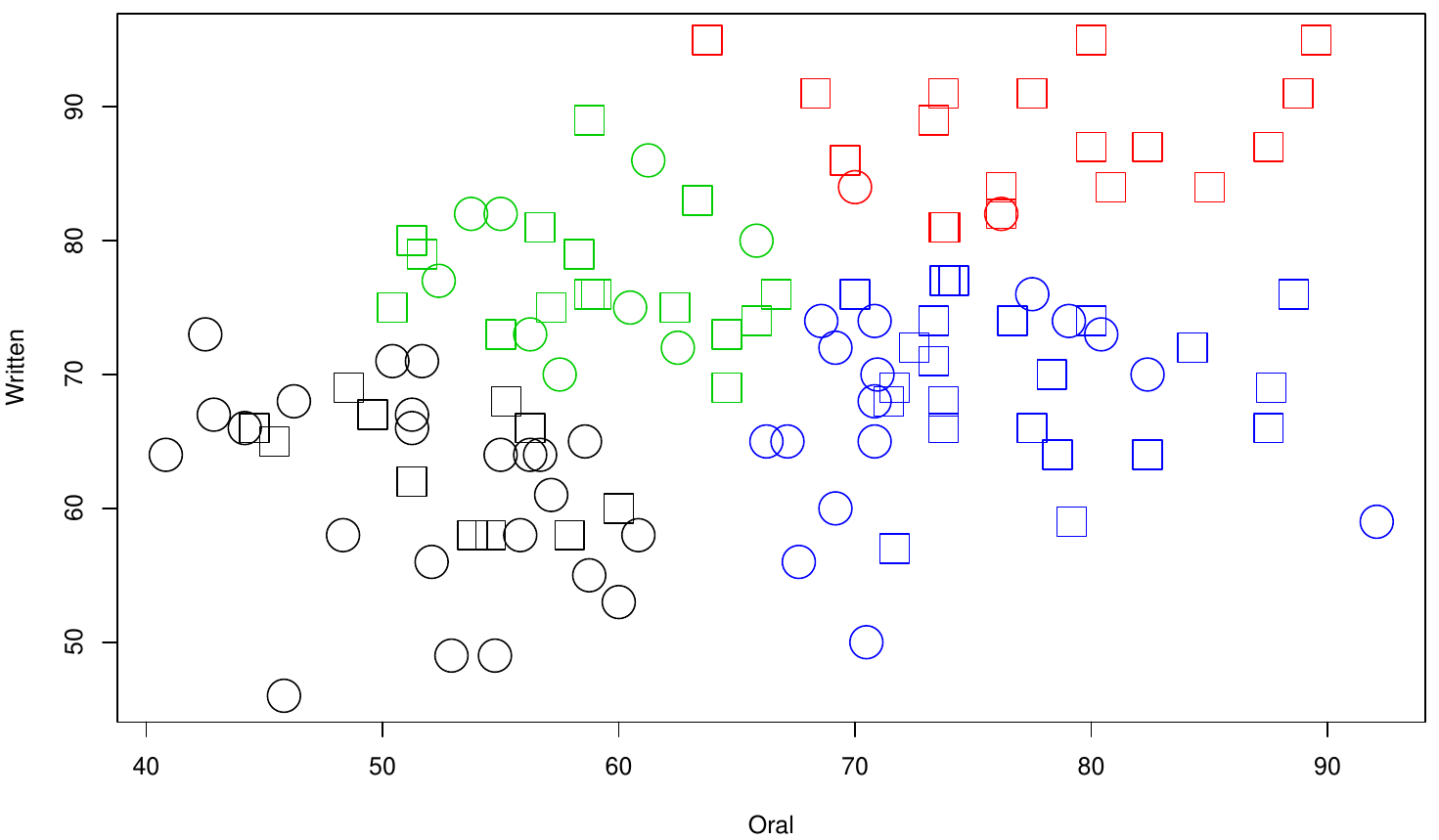}
\includegraphics[scale=0.32]{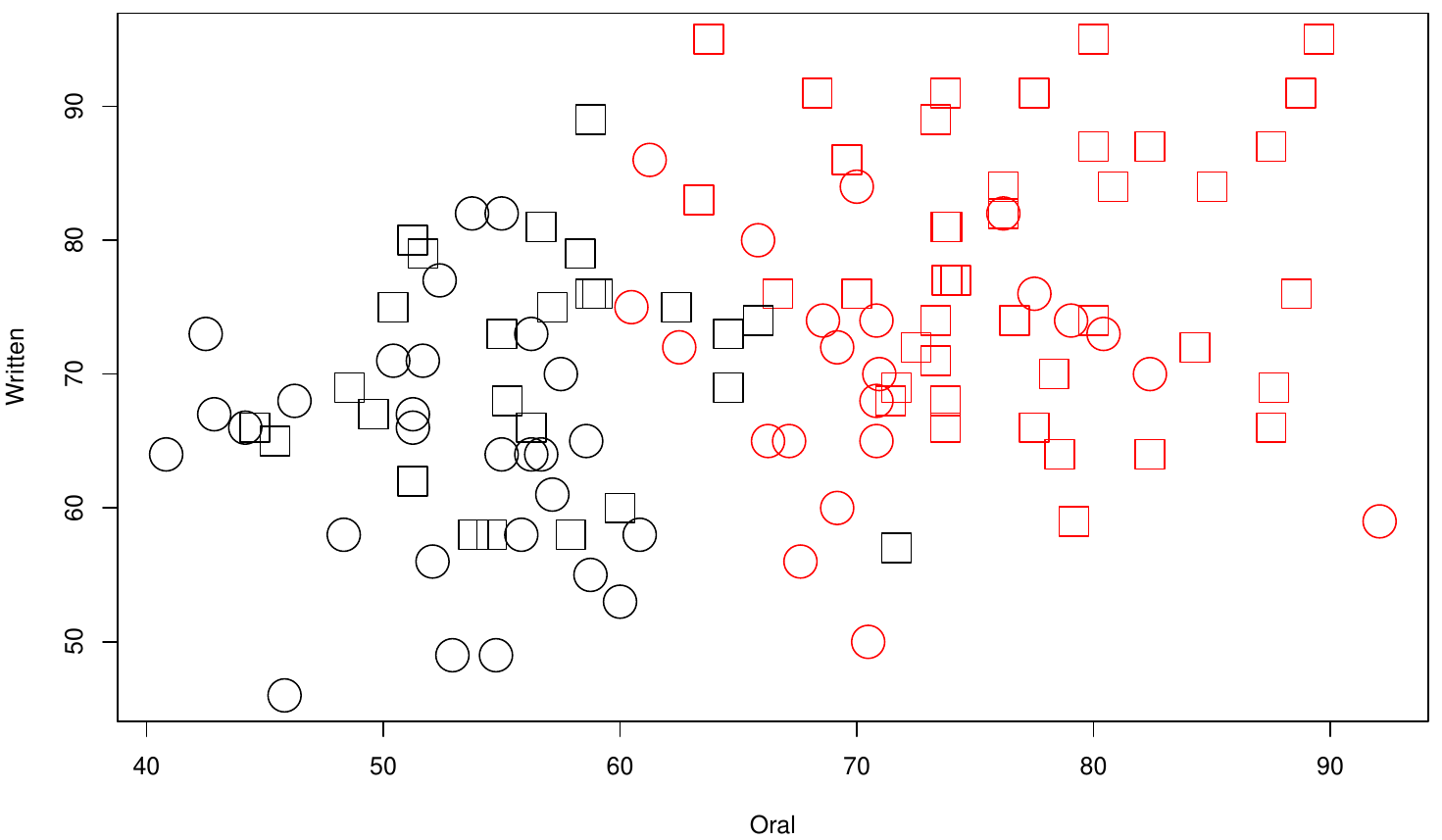}\includegraphics[scale=0.32]{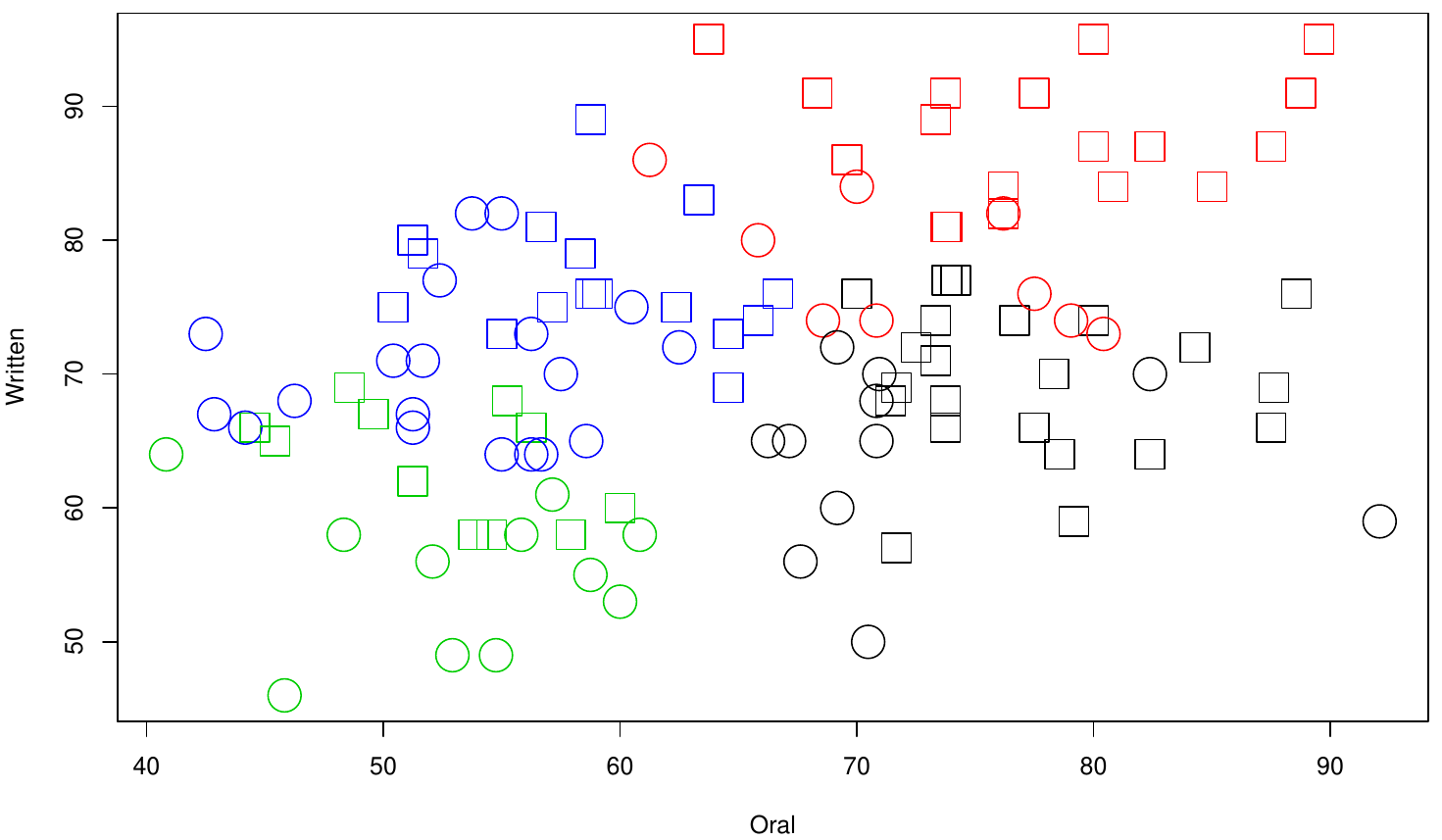}
\includegraphics[scale=0.32]{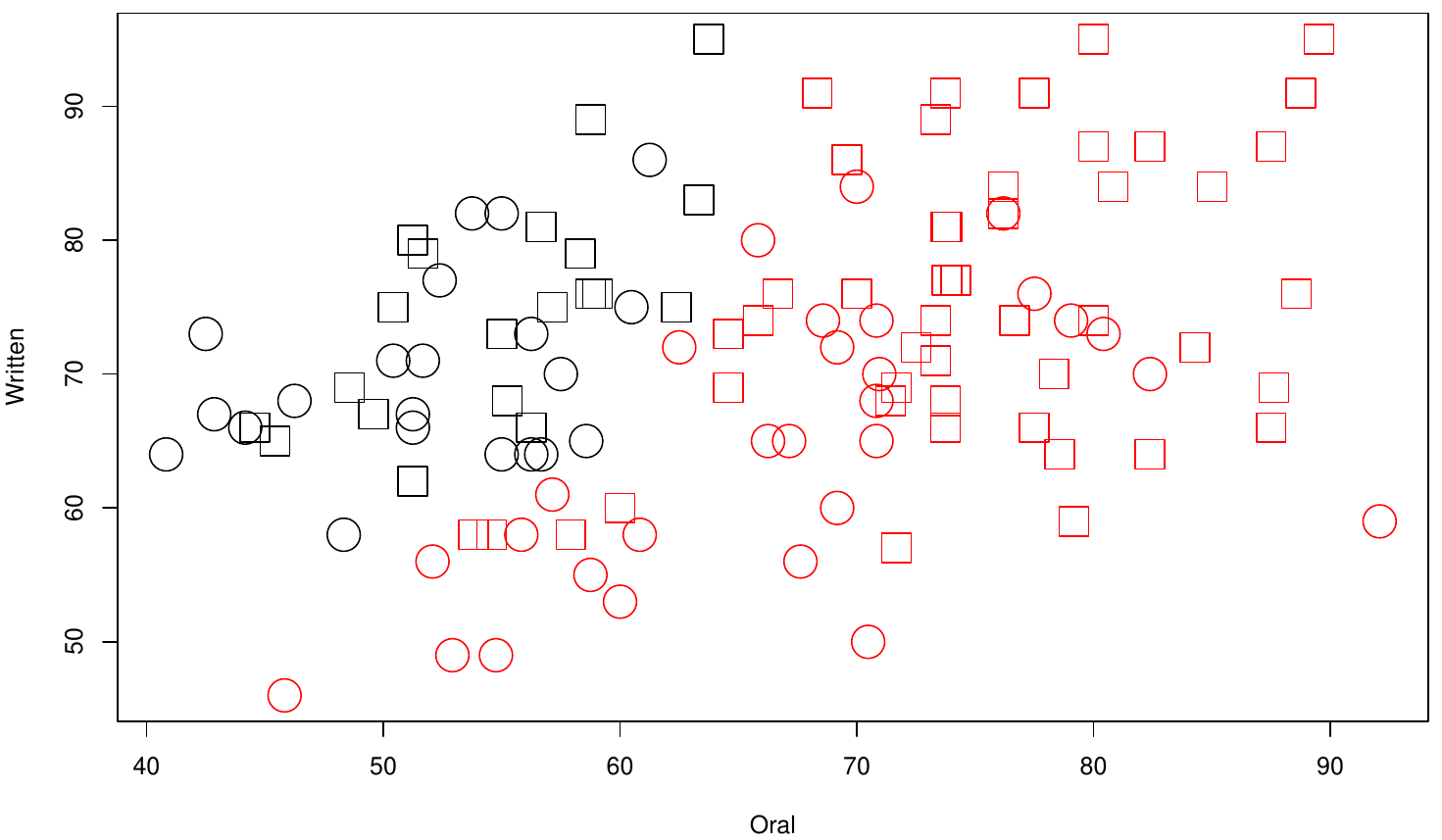}\includegraphics[scale=0.32]{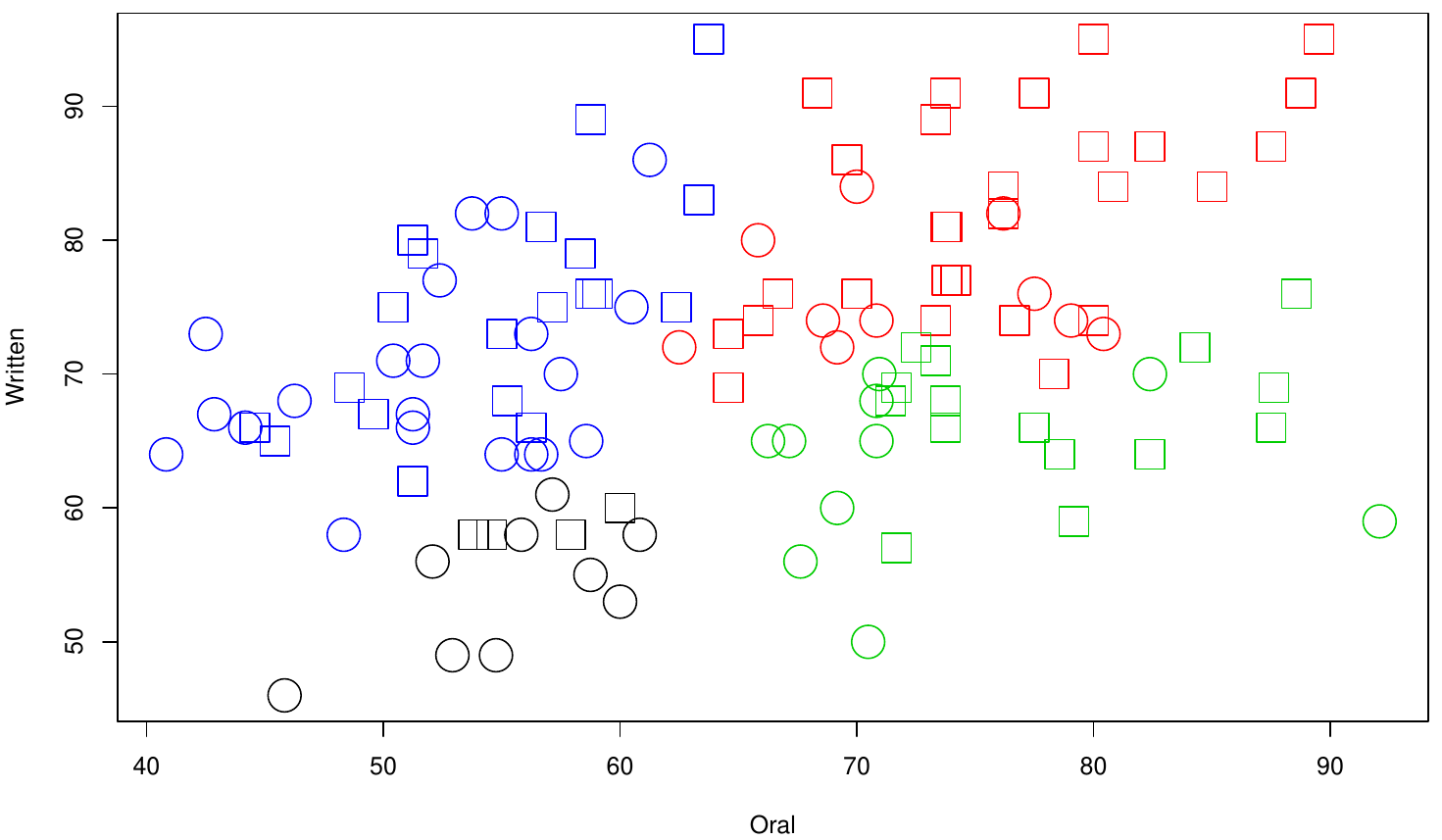}
\includegraphics[scale=0.31]{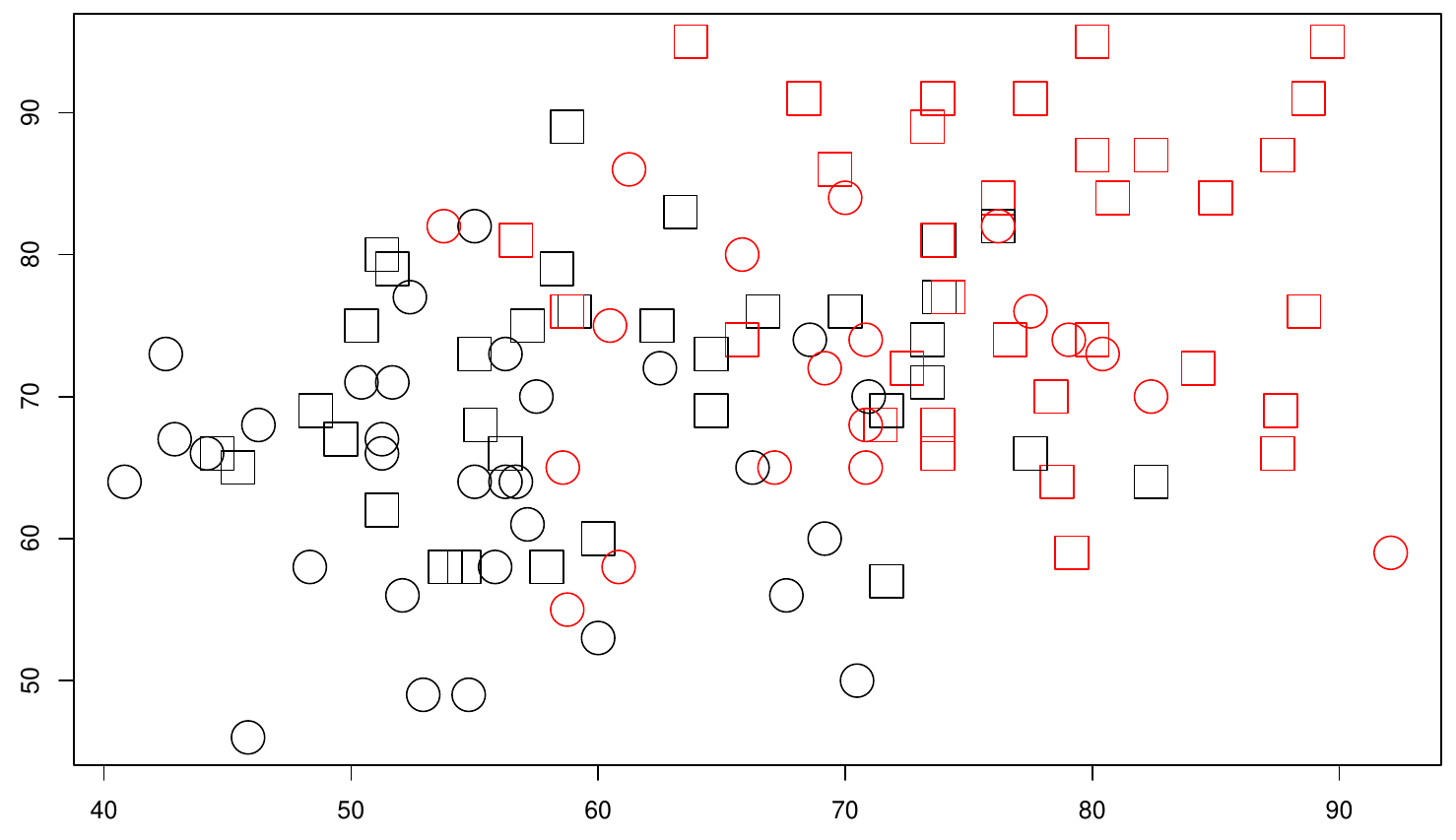}\includegraphics[scale=0.31]{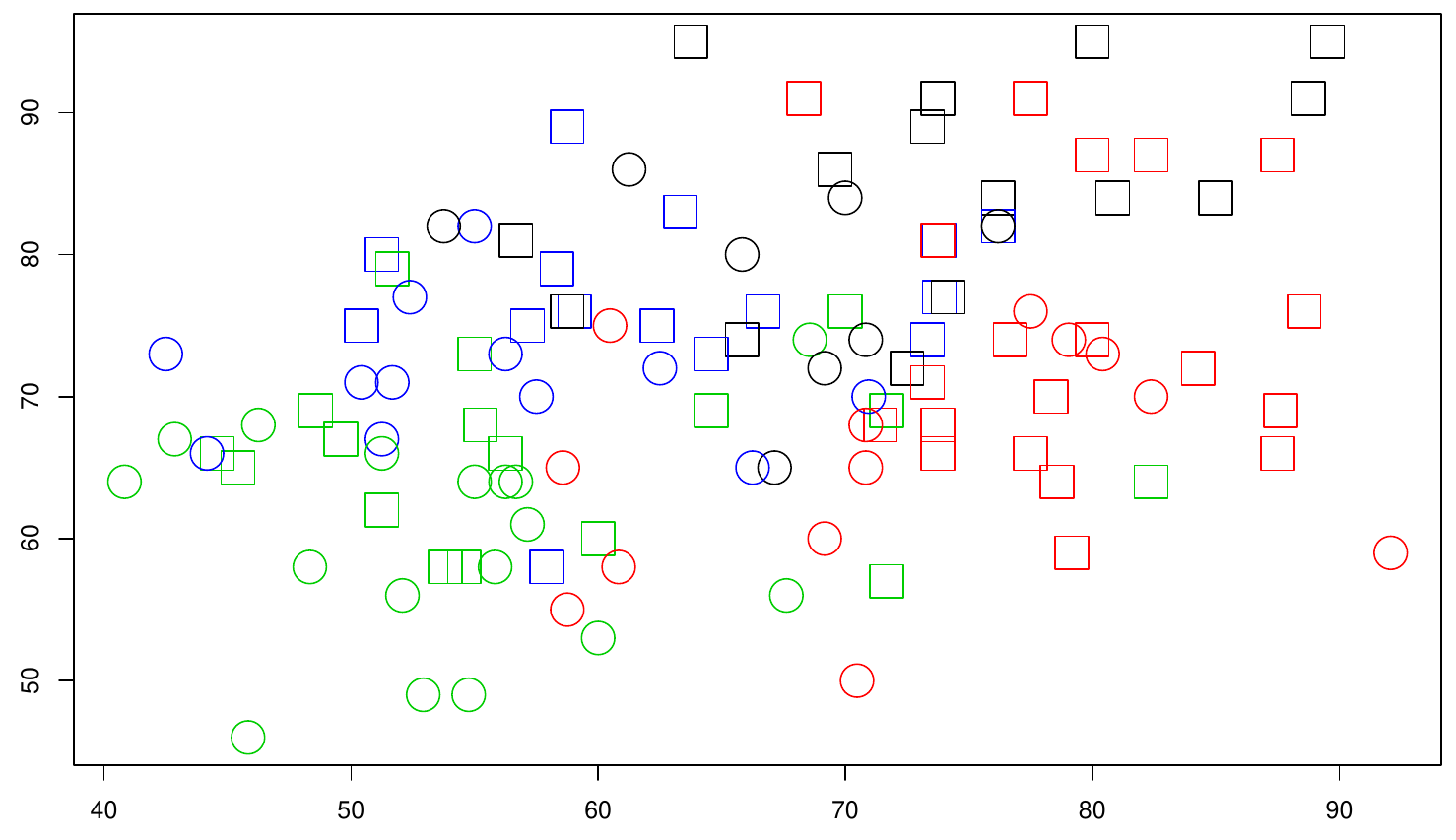}
\end{center}
\end{figure}

Our last comparison is on a real data set known as the Ricci dataset, consisting of scores in an oral and a written exam of 118 firefighters, where the sensitive attribute is the race of the individual. This dataset was a part of the case Ricci v. DeStefano presented before the Supreme Court of the United States.

For applying our attraction-repulsion clustering we codify white individuals as $S = 1$ and black and hispanic individuals as $S = -1$. The appropriate parameters for the dissimilarities are chosen to give a good performance (after a grid search as suggested in Section \ref{section_parameters}). The best results are obtained with our adaptation of Ward's method with $\delta_2$ and parameters $(u = 3.125, v = 10)$ and $k$-means after applying $\delta_1$, with parameters $(U = 0, V= 500)$, and a MDS embedding. Results are given in Table \ref{Tab_Ricci_comparison}. We see the balance given by using fairlets is higher than the one obtained with our procedures. However, we see that our procedures produce more identifiable and compact clusters. As a remark, we see that both procedures achieve a nice improvement in diversity compared to the $k$-means solution in its standard version. 

A plot of some of the clusterings can be seen in Figure \ref{fig_plot_soldiers}. Visually it is quite clear why the average silhouette index is higher in the attraction-repulsion clustering than in the fair $k$-median. It also clarifies what we mean by more identifiable and compact clusters.

\subsection{Civil Rights Data Collection}\label{CRDC}
In this section we are going to apply our procedure to the Schools Civil Rights Data Collection (CRDC) for the year 2015-2016 available for download in the link https://ocrdata.ed.gov/DownloadDataFile. We are going to work with the data for entry, middle level, and high schools in the state of New Jersey.

From the CRDC data we can collect the number of students in each school that belong to six distinct categories: \textit{Hispanic, American Indian/Alaska Native, Asian, Native Hawaiian/Pacific Islander, Black, White.} An entry of our dataset looks like this.
\begin{scriptsize}
\begin{verbatim}
      LEA_STATE_NAME   LEAID                  SCH_NAME hispanic native_american asian pacific_islander black white total
53559     NEW JERSEY 3400004       Chatham High School       46               0   106                2    19  1021  1204
53560     NEW JERSEY 3400004     Chatham Middle School       52               0    91                0     7   877  1052
53561     NEW JERSEY 3400004   Lafayette Avenue School       28               0    70                0     4   526   647
53562     NEW JERSEY 3400004      Milton Avenue School       19               0    37                0     0   274   355
53563     NEW JERSEY 3400004  Washington Avenue School       31               0    34                0     0   337   427
53564     NEW JERSEY 3400004 Southern Boulevard School       25               0    58                0     4   349   461
      hispanic_frac native_american_frac asian_frac black_frack pacific_islander_frac white_frac
53559    0.03820598                    0 0.08803987 0.015780731            0.00166113  0.8480066
53560    0.04942966                    0 0.08650190 0.006653992            0.00000000  0.8336502
53561    0.04327666                    0 0.10819165 0.006182380            0.00000000  0.8129830
53562    0.05352113                    0 0.10422535 0.000000000            0.00000000  0.7718310
53563    0.07259953                    0 0.07962529 0.000000000            0.00000000  0.7892272
53564    0.05422993                    0 0.12581345 0.008676790            0.00000000  0.7570499
\end{verbatim}
\end{scriptsize}
Additionally, using \textit{geolocate} of the package \emph{ggmap} in R we can extract latitude and longitude coordinates of the schools in New Jersey. Hence we have a precise location for the schools and can calculate distances in a straight line between them, i.e., geodesic distance on Earth between schools. A plot of the locations can be seen in Figure \ref{nj_schools_map}.

\begin{figure}
\caption{Left: location of schools in the state of New Jersey. Right: mds-embedding of the straight-line distances between the schools.}\label{nj_schools_map}
\begin{center}
\includegraphics[scale=0.7]{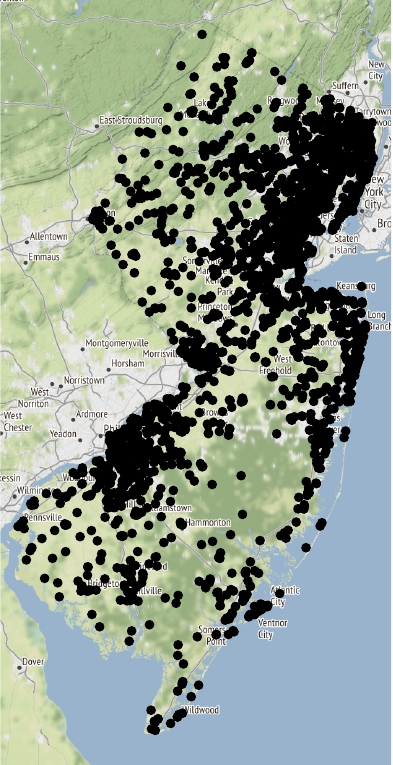}\hspace*{10pt}\includegraphics[scale=1.0845]{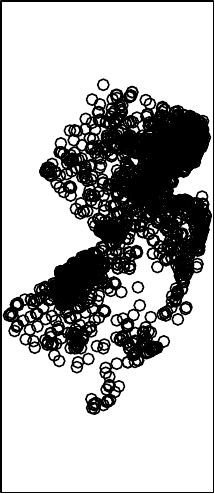}
\end{center}
\end{figure}

A relevant problem in public management is how to group some smaller entities (such as individual residences, schools, buildings, etc...) into some bigger entities (such as districts) to improve decision making, efficiency or some other relevant aspect. This is a typical problem in governance, with a famous example being electoral districts (constituencies) in the USA.

If we were faced with the problem of making diverse school districts, how should we proceed? Usually, districts incorporate meaningful spatial information and the diversity part in this case is related to the overall ethnic composition of the district. A possible approach to the problem is to perform diversity enhancing clustering, hence, identifying clusters of schools with districts. Next, we must consider what is the purpose of a district, since this is highly relevant for what are the ethical ramifications of diversity and for what are considered good clusters. For example, if all districts will be affected by some centralized decision, enforcing demographic parity through (\ref{fair_constraints}) may be a good proxy for fairness. Hence, diversity and fairness may be understood as interchangeable. However, if we were interested in a similar problem, which is making fair voting districts in a winner-takes-all system, promoting diversity may be highly inadequate (see for example \cite{Abbasi2021}). If we decide that we want to use diversity enhancing clustering, we should decide what types of clusters are adequate for our purposes? A simple approach is to look for spatially compact clusters because vicinity is usually a reasonable proxy for other shared features. At this point, we should be aware that diversity and vicinity requirements may be conflicting, and therefore some kind of trade-off between them is advisable. This setting is the one that we adopt and present attraction-repulsion clustering results below.


We will measure the degree of diversity of the partition by comparing the ethnic composition of each of its clusters to the ethnic composition of the overall dataset which is $p_t = (0.2423,\allowbreak 0.0018,\allowbreak 0.0988,0.0028, 0.1604,0.4741).$ This is done by using the \emph{unbalance} index defined in (\ref{unfiarness}).

Recall that the average silhouette index is a measure of the compactness of the clusters in  a partition. Hence our methodology is to use attraction-repulsion clustering where tuning is done over a mesh of distinct parameters, and the best parameters are those that give the lowest $\mathrm{unbalance}$ while keeping the average silhouette index over some threshold $\tau$. In this way we impose the maximum improvement in diversity while keeping part of the geographical information codified by the distance. This procedure can be seen in Algorithm \ref{tuningAlgo}.
\begin{algorithm}
\caption{Tuning}
\label{tuningAlgo}
\textbf{Input:} \emph{data, cluster.methods, $\delta_i$, parameters, $\tau$} 
\begin{algorithmic}[1]
\For {\emph{cluster.method} in \emph{cluster.methods}}
	\State $D \gets$ distance matrix computed using unprotected attributes in \emph{data}.
	\For {\emph{parameter} in \emph{parameters}}
		\State $\Delta \gets$ dissimilarity matrix computed using $\delta_i$, \emph{parameter} and entries of protected and unprotected attributes in \emph{data}
		\If {\emph{cluster.method} = $k$-means}
			\State $X \gets$ MDS embedding using $\Delta$
		\EndIf
		\State $\mathcal{C} \gets$ clustering using \emph{cluster.method}
		\State $U \gets \mathrm{unbalance}(\mathcal{C})$
		\State $aS \gets$ average silhouette index for $\mathcal{C}$ using $D$.
	\EndFor
	\State \emph{param.values} $\gets$ all respective touples $(U,aS)$ for the different \emph{parameter} values
	\State \emph{best.parameter} $\gets$ \emph{parameter} corresponding to the entry in \emph{param.values} such that $aS\geq \tau$ and with lowest $U$.
\EndFor
\end{algorithmic}
\textbf{Output:} best parameter for dissimilarity $\delta_i$ for each clustering method.
\end{algorithm}

Codification of the protected attributes in this case is straightforward, it is just the number of students at the school in each category. Hence, for instance Catham High School has $S_{C.H.S.} = (46,0,106,2,19,1021)$. We want to stress that another easy possibility is to use proportions with respect to the total number of students, i.e., $(0.0382, 0, 0.0880,0.0158,0.0017,0.8480)$. However, we choose to use the number of students since we think it should be considered, and that information is lost when using proportions. 

Selecting the grid or mesh of parameters for the different dissimilarities is an important task. For some of them, $\delta_2$ and $\delta_3$, it is mainly an analytical task of selecting the best values. However, in proposing candidates for $V$ in $\delta_1$ and $\delta_4$ we can use available information as we will see below. In this example we will concentrate on $\delta_1,\delta_2$ and $\delta_4$. To explore the effects of attraction-repulsion dissimilarities on different clustering methods we select a tiny sample. As hierarchical clustering methods, since we only have distances between schools, we will use complete, average, and single linkage. Through a MDS embedding we will use $k$-means. Hence, cluster.methods = (complete, average, single, $k$-means) in Algorithm \ref{tuningAlgo}.

The grid we use for $\delta_2$ in Algorithm \ref{tuningAlgo} is formed by $parameters=\{(u_i,v_j)\}_{i,j=1}^{i=19,j=10}$ with $\{u_i\}_{i=1}^{19} = \{0.1,  0.2,  0.3,  0.4,  0.5,\allowbreak  0.6,  0.7,  0.8,  0.9,  1,  2,  3,  4,  5,  6,\allowbreak  7,  8,  9, 10\}$ and $\{v_j\}_{j=1}^{10} = \{0.1,  0.3,  0.5,\allowbreak  0.7,  0.9,  1,  3.25,\allowbreak  5.5, 7.75, 10\}$.

As we stated previously, for proposing values for $V = v_0\tilde{V}$ in $\delta_1$ and $\delta_4$ we are going to use some a priori information that can be corroborated by the data. The most numerous minorities are the Hispanic, Asian, and Black communities.  Even more, it is well known that poor neighbourhoods have higher concentrations of minorities, and therefore schools in those areas should be representative of that. Hence, this is a major source for a lack of diversity in a mainly geographical clustering of the schools. Since white students are the majority, values of the proportion for the previously mentioned minorities and white students will affect our unbalance index the most. Hence we should attempt to achieve mixing in precisely these groups. Our first three proposals are variations of schemes that should improve mixing in the above-mentioned communities.
$$\tilde{V}=\Bigg\{\left( {\begin{array}{cccccc}
   1 & -1 & -1 & -1 & -1 & -1 \\
   0 & 0 & 0 & 0 & 0 & 0 \\
   -1 & -1 & 1 & -1 & -1 & -1\\
   0 & 0 & 0 & 0 & 0 & 0 \\
   -1 & -1 & -1 & -1 & 1 & -1\\
   0 & 0 & 0 & 0 & 0 &1 \\
  \end{array} } \right),
  \left( {\begin{array}{cccccc}
   1 & -1 & -1 & -1 & -1 & -1 \\
   0 & 0 & 0 & 0 & 0 & 0 \\
   0 & 0 & 1 & 0 & 0 & 0 \\
   0 & 0 & 0 & 0 & 0 & 0 \\
   -1 & -1 & -1 & -1 & 1 & -1\\
   0 & 0 & 0 & 0 & 0 &1 \\
  \end{array} } \right),$$
$$  \left( {\begin{array}{cccccc}
   1 & -1 & -1 & -1 & -1 & -1 \\
   0 & 0 & 0 & 0 & 0 & 0 \\
   0 & 0 & 0 & 0 & 0 & 0 \\
   0 & 0 & 0 & 0 & 0 & 0 \\
   -1 & -1 & -1 & -1 & 1 & -1\\
   0 & 0 & 0 & 0 & 0 &1 \\
  \end{array} } \right),
  \left( {\begin{array}{cccccc}
   1 & -1 & -1 & -1 & -1 & -1 \\
   -1 & 1 & -1 & -1 & -1 & -1 \\
   -1 & -1 & 1 & -1 & -1 & -1 \\
   -1 & -1 & -1 & 1 & -1 & -1 \\
   -1 & -1 & -1 & -1 & 1 & -1\\
   -1 & -1 & -1 & -1 & -1 & 1 \\
  \end{array} } \right),$$
$$  \left( {\begin{array}{cccccc}
   1 & 0 & 0 & 0 & 0 & 0 \\
   -1 & 1 & -1 & -1 & -1 & -1 \\
   0 & 0 & 1 & 0 & 0 & 0 \\
   -1 & -1 & -1 & 1 & -1 & -1 \\
   0 & 0 & 0 & 0 & 1 & 0\\
   0 & 0 & 0 & 0 & 0 & 1 \\
  \end{array} } \right),
  \left( {\begin{array}{cccccc}
   0 & 0 & 0 & 0 & 0 & 0 \\
   -1 & 1 & -1 & -1 & -1 & -1 \\
   0 & 0 & 0 & 0 & 0 & 0 \\
   -1 & -1 & -1 & 1 & -1 & -1 \\
   0 & 0 & 0 & 0 & 0 & 0\\
   0 & 0 & 0 & 0 & 0 & 0 \\
  \end{array} } \right)\Bigg\}.$$
The fourth proposal is the obvious one, which tries to produce mixing in all communities. The fifth and sixth proposals try to impose mixing mainly for the smallest minorities American Indians/Alaska Natives and Native Hawaiians/Pacific Islanders.

\begin{figure}
\caption{Continuous lines represent unbalance while dashed lines represent average silhouette index. The clustering methods are complete linkage in black, $k$-means in red, single linkage in green and average linkage in blue. Top row is a comparison between unperturbed clustering methods, bottom row is a comparison of clustering methods after using $\delta_1$. The x label indicates the number of clusters which goes from 2 to 15. We stress that single linkage with $\delta_1$, green line in bottom row, does not achieve $\tau>0$ for $k\geq 9$.  }\label{unperturbedClusterings}
\begin{center}
\includegraphics[scale=0.8]{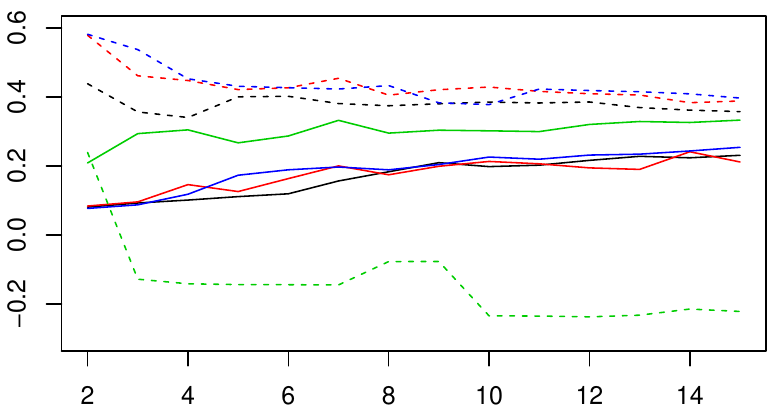}
\includegraphics[scale=0.8]{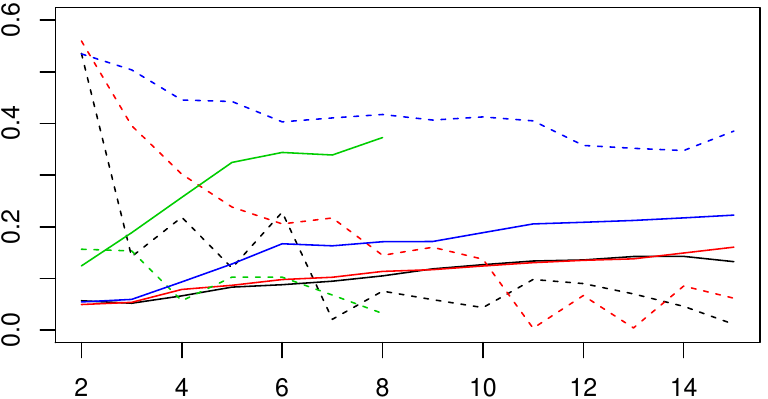}
\end{center}
\end{figure}

\begin{figure}
\caption{Continuous lines represent unbalance while dashed lines represent average silhouette index. Top: a comparison between effects of unperturbed and perturbed situations  for $k$-means. Bottom: same comparison as in Top but for complete linkage clustering. In black we have the unperturbed situation, in red we have the best $\delta_1$ perturbation, in green the best $\delta_2$ and in blue we have $\delta_4$. The x label indicates the number of clusters which goes from 2 to 15.}\label{values_comparison}
\begin{center}
\includegraphics[scale=0.8]{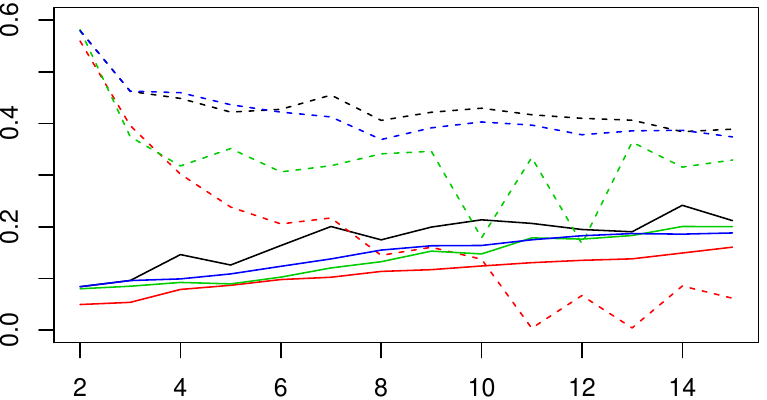}
\includegraphics[scale=0.8]{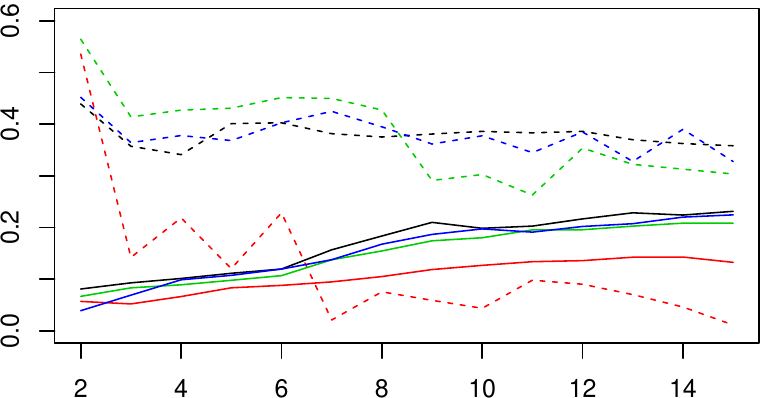}
\end{center}
\end{figure}

Now we can define the values we will use for the input parameters in Algorithm \ref{tuningAlgo}. For $\delta_1$ we have $parameters = \{(U,v_0,V')\}$ where $U = 0_{6\times 6}$, $v_0 \in \{0.001, 0.002, 0.003, 0.004, 0.005, 0.006, 0.007, 0.008,\allowbreak 0.009,\allowbreak 0.01, 0.02, 0.03, 0.04, 0.05, 0.06, 0.07, 0.08,\allowbreak 0.09,\allowbreak 0.1,\allowbreak 0.2, 0.3, 0.4, 0.5, 0.6, 0.7, 0.8, 0.9, 1\}$ and $V'\in\tilde{V}$. For $\delta_4$ another parameter that can incorporate a priori information is  $w$, which in this case tells us how strong the influence between schools  that are further apart should be. For the local dissimilarity $\delta_4$ we propose to use $parameters = \{(u,v,w,V)\}$ such that $u \in \{0.5,2,8\}$, $v \in \{0.1,1,10\}$, $w \in \{0.1,0.5,0.9,1,5.5,10\}$ and $V\in\tilde{V}$.

Before showing results for the attraction-repulsion procedures, we will cluster the data without considering race information, hence only geographical distance is of importance. From the top row of Figure \ref{unperturbedClusterings}, we can say that $k$-means (red) is giving a good performance since it has low unbalance index and a reasonably high average silhouette index. We stress that the $k$-means procedure is done in the MDS embedding shown on the right in Figure \ref{nj_schools_map}. In the left column of Figure \ref{map_comparison} we can see $k$-means clustering for 5 and 11 clusters. We clearly see that spatial proximity is the driving force of the clustering. The values of unbalance and average silhouette index  are respectively $(0.13, 0.42)$ and $(0.21, 0.42)$.

To apply our attraction-repulsion clustering as shown in Algorithm \ref{tuningAlgo} we need to fix the silhouette bound. To do this we take $\tau = 0$, and hence we are not imposing very strong compactness criteria, recall that the silhouette index varies between -1 and 1, but we still want clusters to be relatively compact. In this way we are making a trade-off between reduction in unbalance and spatial coherence. In Figure \ref{values_comparison} we see the effects of the different best parameters for dissimilarities $\delta_1,\delta_2$ and $\delta_4$ for $k$-means and complete linkage clustering. Generally, we see that $\delta_1$ (in red) is the dissimilarity that produces the strongest reduction in unbalance (solid line) and of course this is on behalf of a reduction in average silhouette index (dashed line). Also as expected a local dissimilarity as $\delta_4$ (blue) brings only a modest reduction of unbalance but maintains a high spatial coherence.

Hence, if we want to slightly gerrymander districts to improve diversity while maintaining a very high geographical coherence we should use $\delta_4$. On the other hand, if we want the maximum unbalance reduction achievable with our procedure we should use $\delta_1$.

To decide which clustering method is the best, from our small selection of methods, we use the bottom row of Figure \ref{unperturbedClusterings}. There we see that for attraction-repulsion clustering using $\delta_1$, $k$-means (red) and complete linkage (black) produce similar reduction in unbalance while $k$-means keeps a relatively higher average silhouette index. Therefore, in this case $k$-means with perturbation $\delta_1$ and a MDS embedding seems to be the best procedure.

In Figure \ref{map_comparison} we can see a visual comparison between $k$-means clustering in the different situations. The best parameters for the dissimilarities in the cases we have shown are the following

\begin{footnotesize}
$$params(\mathcal{C}^{\delta_4}_5) = \left(2,1,10,\left( {\begin{array}{cccccc}
   1 & -1 & -1 & -1 & -1 & -1 \\
   0 & 0 & 0 & 0 & 0 & 0 \\
   -1 & -1 & 1 & -1 & -1 & -1\\
   0 & 0 & 0 & 0 & 0 & 0 \\
   -1 & -1 & -1 & -1 & 1 & -1\\
   0 & 0 & 0 & 0 & 0 &1 \\
  \end{array} } \right)\right),
$$$$  params(\mathcal{C}^{\delta_4}_{11}) = \left(2,0.1,1,\left( {\begin{array}{cccccc}
   1 & -1 & -1 & -1 & -1 & -1 \\
   0 & 0 & 0 & 0 & 0 & 0 \\
   0 & 0 & 1 & 0 & 0 & 0 \\
   0 & 0 & 0 & 0 & 0 & 0 \\
   -1 & -1 & -1 & -1 & 1 & -1\\
   0 & 0 & 0 & 0 & 0 &1 \\
  \end{array} } \right)\right),$$
$$params(\mathcal{C}^{\delta_1}_5) = \left(0_{6\times 6},0.03,\left( {\begin{array}{cccccc}
   1 & -1 & -1 & -1 & -1 & -1 \\
   -1 & 1 & -1 & -1 & -1 & -1 \\
   -1 & -1 & 1 & -1 & -1 & -1 \\
   -1 & -1 & -1 & 1 & -1 & -1 \\
   -1 & -1 & -1 & -1 & 1 & -1\\
   -1 & -1 & -1 & -1 & -1 & 1 \\
  \end{array} } \right)\right),$$
$$  params(\mathcal{C}^{\delta_1}_{11}) = \left(0_{6\times 6},0.07,\left( {\begin{array}{cccccc}
   1 & -1 & -1 & -1 & -1 & -1 \\
   -1 & 1 & -1 & -1 & -1 & -1 \\
   -1 & -1 & 1 & -1 & -1 & -1 \\
   -1 & -1 & -1 & 1 & -1 & -1 \\
   -1 & -1 & -1 & -1 & 1 & -1\\
   -1 & -1 & -1 & -1 & -1 & 1 \\
  \end{array} } \right)\right).$$
\end{footnotesize}
What we see is that the matrices $V$ that we introduced trying to codify available a priori information work very well. Hence real-world intuitions are compatible with our model. In the case of the parameters for $\delta_1$ again what we expected from intuition is seen, i.e., to reduce unbalance for a bigger number of clusters it is necessary to use a stronger perturbation (a higher value for $v_0$). In the plots we clearly see the behaviour we have previously mentioned. The local dissimilarity $\delta_4$ does a sort of positive gerrymandering while $\delta_1$ imposes a stronger reduction in unbalance and hence significantly alters the clustering results.
\begin{figure}
\caption{Results of $k$-means clustering, top row looking for 5 clusters and bottom row looking for 11. Left: unperturbed situation. Middle: $\delta_4$ perturbation and MDS embedding. Right: $\delta_1$ perturbation and MDS embedding.}\label{map_comparison}
\begin{center}
\includegraphics[scale=0.75]{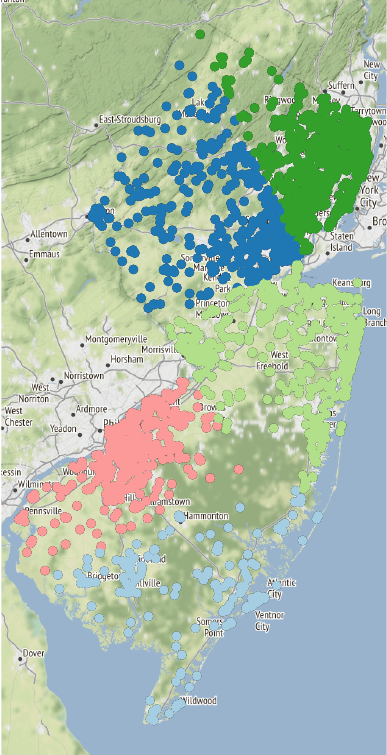}\hspace*{5pt}\includegraphics[scale=0.75]{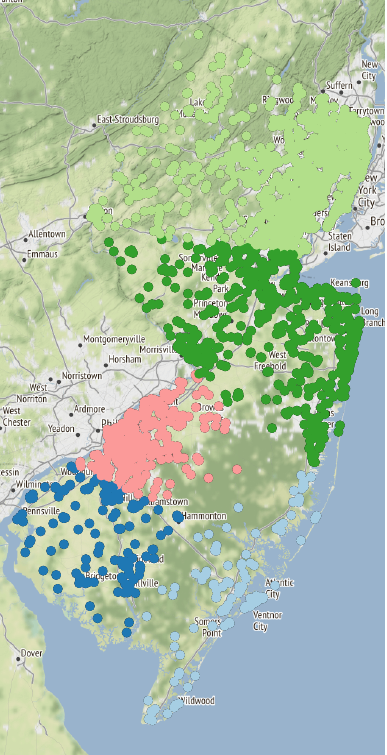}\hspace*{5pt}\includegraphics[scale=0.75]{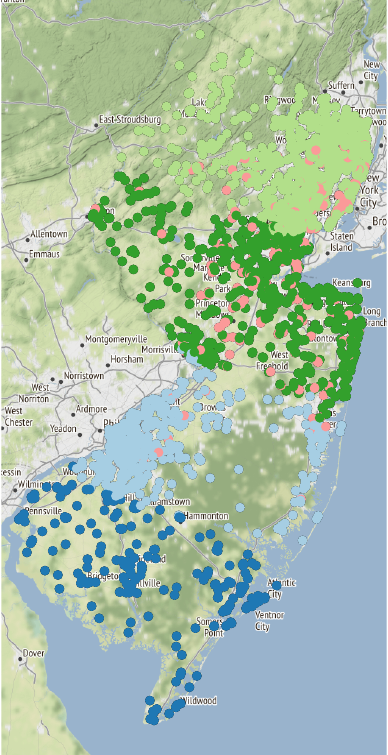}
\vspace*{5pt}

\includegraphics[scale=0.75]{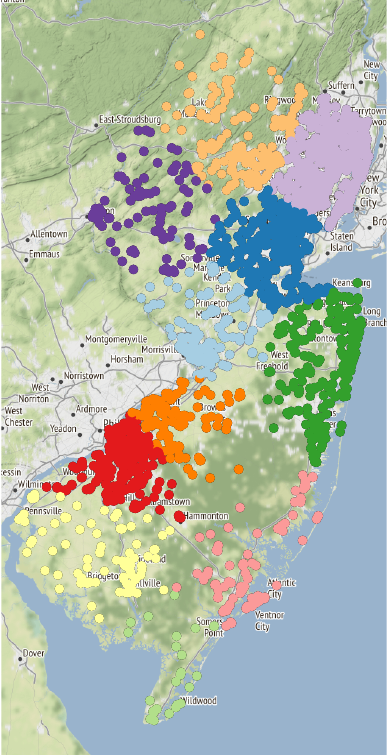}\hspace*{5pt}\includegraphics[scale=0.75]{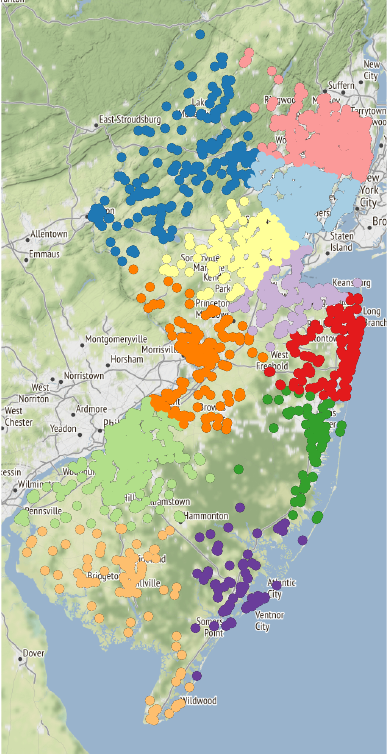}\hspace*{5pt}\includegraphics[scale=0.75]{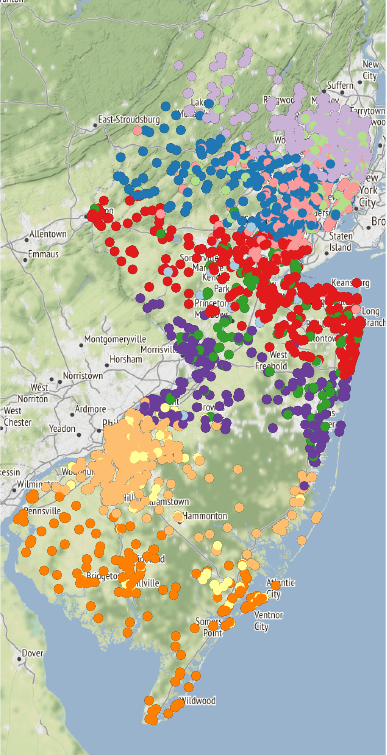}
\end{center}
\end{figure}

\section{Conclusions}\label{section_conclusions}

We have introduced a pre-processing heuristic based on novel dissimilarities that is applicable virtually to any clustering method and can produce gains in diversity. Hence, when diversity is a good proxy for fairness, for example when demographic parity is required, our methods may fall into a fair clustering setting. Our heuristic approach allows for tractable and simple computations at the cost of tuning requirements and no theoretical guarantees for enforcing the diversity preserving condition (\ref{fair_constraints}).
	
The experiments described in Section \ref{section_applications}, while simple, allow us to reach some meaningful conclusions. To begin with, pre-processing transformations looking to impose diversity through independence of the marginal distributions with respect to the protected attributes, but ignoring the geometry of the data, can eliminate any meaningful cluster structure as seen in Section \ref{general_example}. This can also happen when using ad hoc fair (diversity preserving) clustering objective functions as shown in Section \ref{section_comparison}. Hence, it is advisable to reach a trade-off between the desire to find some cluster structure present in the geometry of the data and the desire to impose diversity preserving conditions. Furthermore, the justification of diversity, and more broadly, fairness concerns in clustering rely heavily on the purpose of the practitioner and the data itself, something that is implicit in Section \ref{CRDC}. The previous considerations make us urge practitioners to be clearly aware of:
\begin{itemize}
\item What is the purpose of the partitions that they are looking for
\item The definitions of diversity, with its relations to fairness, and of cluster structure that are suitable to their purpose
\item The conflict that may exist between diversity and the groupings present in the data
\item The level of trade-off between diversity and sensible cluster structure that is reasonable for the users purpose 
\end{itemize}

As for future directions of research, there are two clear paths. One is to try to use a transformation of the data that can give some theoretical guarantees for the demographic parity condition (\ref{fair_constraints}), most likely restricting clustering procedures to a certain type. Another is to consider the trade-off between diversity, cluster structure and maybe some other relevant criteria for the practitioner through a multi-objective optimization formulation. An additional alternative is to try to explore model-based clustering in the setting of attraction-repulsion dissimilarities.
  
\section*{Acknowledgements}
Research partially supported by FEDER, Spanish Ministerio de Economía y Competitividad, grant MTM2017-86061-C2-1-P and Junta de Castilla y León, grants VA005P17 and VA002G18.  Research partially supported by ANITI program and DEEL IRT. Research partially funded by the Basque Government through the BERC 2018-2021 program and by the Spanish Ministry of Science, Innovation, and Universities (BCAM Severo Ochoa accreditation SEV-2017-0718).


\end{document}